\documentclass{article}

\usepackage{amsmath, amsthm}
\usepackage{amsfonts, amssymb}
\usepackage{cancel}
\usepackage[svgnames]{xcolor}
\definecolor{mydarkblue}{rgb}{0,0.08,0.45}
\usepackage{etoolbox}
\usepackage{dsfont} 
\usepackage{fontawesome} 
\usepackage[para]{footmisc}
\usepackage{graphicx}
\usepackage{hyperref}
\hypersetup{
    breaklinks=true,
    colorlinks=true,
    linkcolor=mydarkblue,
    citecolor=mydarkblue,
    filecolor=mydarkblue,
    urlcolor=mydarkblue
}
\usepackage{float}
\usepackage{nameref,zref-xr}
\zxrsetup{toltxlabel}

\usepackage[utf8x]{inputenc}
\usepackage{multirow}
\usepackage{natbib}
\usepackage{pgffor}
\usepackage{subfigure}

\usepackage[shortlabels]{enumitem}

\usepackage{mathtools}

\newcommand{\footDPPMC}{\href{https://github.com/rbardenet/dppmc}{\textsf{github.com/rbardenet/dppmc}}}

\newcommand{\footGitHubDPPy}{\href{https://github.com/guilgautier/DPPy}{\textsf{github.com/guilgautier/DPPy}}}

\newcommand{\footReadTheDocs}{\href{https://dppy.readthedocs.io}{\textsf{dppy.readthedocs.io}}}

\def\twofig{0.49\textwidth}

\def\fourfig{0.23\textwidth}

\makeatletter
\let\c@corollary\relax
\let\c@definition\relax
\let\c@example\relax
\let\c@lemma\relax
\let\c@theorem\relax
\let\c@proposition\relax

\newtheorem{corollary}{Corollary}
\newtheorem{definition}{Definition}
\newtheorem{lemma}{Lemma}
\newtheorem{proposition}{Proposition}
\newtheorem{theorem}{Theorem}

\makeatother
\newcommand{\Iauth}[1]{\hat{I}^{\,\text{#1}}}
\newcommand{\DPP}{\operatorname{DPP}}
\newcommand{\weq}{\omega_{\text{eq}}}

\newcommand{\Vol}{\operatorname{volume}}

\def\beq{\begin{equation}}
\def\eeq{\end{equation}}
\newcommand{\bv}{\mathbf}
\def\bx{{\bv x}}

\makeatletter
\newcommand{\leqnomode}{\tagsleft@true\let\veqno\@@leqno}
\newcommand{\reqnomode}{\tagsleft@false\let\veqno\@@eqno}
\makeatother

\newcommand{\CommaBin}{\mathbin{\raisebox{0.5ex}{,}}}
\newcommand{\PeriodBin}{\mathbin{\raisebox{0.5ex}{.}}}

\let\originalleft\left
\let\originalright\right
\renewcommand{\left}{\mathopen{}\mathclose\bgroup\originalleft}
\renewcommand{\right}{\aftergroup\egroup\originalright}

\makeatletter
\@addtoreset{footnote}{page}
\makeatother

\makeatletter
\newcommand{\footnoteref}[1]{\protected@xdef\@thefnmark{\ref{#1}}\@footnotemark}
\makeatother

\newcommand*{\Appref}[1]{Appendix~\ref{#1}}

\newcommand*{\Figref}[1]{Figure~\ref{#1}}
\newcommand*{\Lemref}[1]{Lemma~\ref{#1}}

\newcommand*{\Secref}[1]{Section~\ref{#1}}

\newcommand*{\Thref}[1]{Theorem~\ref{#1}}

\makeatletter
\patchcmd{\NAT@test}{\else \NAT@nm}{\else \NAT@nmfmt{\NAT@nm}}{}{}
\DeclareRobustCommand\citepos
  {\begingroup
   \let\NAT@nmfmt\NAT@posfmt
   \NAT@swafalse\let\NAT@ctype\z@\NAT@partrue
   \@ifstar{\NAT@fulltrue\NAT@citetp}{\NAT@fullfalse\NAT@citetp}}
\let\NAT@orig@nmfmt\NAT@nmfmt
\def\NAT@posfmt#1{\NAT@orig@nmfmt{#1's}}
\makeatother

\foreach \x in {a,...,z}{
	\expandafter\xdef\csname bf\x
		\endcsname{
			\noexpand\ensuremath{\noexpand\mathbf{\x}}
		}
}

\foreach \x in {A,...,Z}{
	\expandafter\xdef\csname bf\x
		\endcsname{
			\noexpand\ensuremath{\noexpand\mathbf{\x}}
			}
	\expandafter\xdef\csname cal\x
		\endcsname{
			\noexpand\ensuremath{\noexpand\mathcal{\x}}
			}
	\expandafter\xdef\csname bb\x
		\endcsname{
			\noexpand\ensuremath{\noexpand\mathbb{\x}}
			}
}

\foreach \x/\y in {alpha/\alpha, beta/\beta, gamma/\gamma, delta/\delta, epsilon/\epsilon, zeta/\zeta, eta/\eta, theta/\theta, iota/\iota, kappa/\kappa, lambda/\lambda, mu/\mu, nu/\nu, xi/\xi, pi/\pi, rho/\rho, sigma/\sigma, tau/\tau, upsilon/\upsilon, phi/\phi, chi/\chi, psi/\psi, omega/\omega }{
	\expandafter\xdef\csname bf\x
		\endcsname{
			\noexpand\ensuremath{\noexpand\boldsymbol{\y}}
			}
}
\foreach \x/\y in {Gamma/\Gamma, Delta/\Delta, Theta/\Theta, Lambda/\Lambda, Xi/\Xi, Pi/\Pi, Sigma/\Sigma, Upsilon/\Upsilon, Phi/\Phi, Psi/\Psi, Omega/\Omega }{
  \expandafter\xdef\csname bf\x
    \endcsname{
      \noexpand\ensuremath{\noexpand\boldsymbol{\y}}
      }
}

\renewcommand{\hat}{\widehat}

\newcommand{\lsum}[2]{\sum\limits_{\substack{#1}}^{\substack{#2}}}
\newcommand{\lprod}[2]{\prod\limits_{\substack{#1}}^{\substack{#2}}}

\newcommand{\argmax}{\operatornamewithlimits{argmax}}

\newcommand{\lequal}[2]{
  \operatornamewithlimits{=}\limits_{\substack{#1}}^{\substack{#2}}
}

\newcommand{\dist}{\operatorname{distance}}

\newcommand*{\diff}{\mathop{}\!\mathrm{d}}

\newcommand{\lrb}[1]{\left[ #1 \right]}
\newcommand{\lrp}[1]{\left( #1 \right)}
\newcommand{\lrcb}[1]{\left\{ #1 \right\}}

\newcommand{\lrsp}[1]{\left\langle #1 \right\rangle}
\newcommand{\lrabs}[1]{\left| #1 \right|}
\newcommand{\lrnorm}[1]{\left\| #1 \right\|}

\makeatletter
\newcommand{\Proba}[1][\@nil]{%
  \def\tmp{#1}%
   \ifx\tmp\@nnil%
   		\bbP%
    \else%
      \bbP\lrb{#1}%
    \fi}
\newcommand{\Expe}[1][\@nil]{%
  \def\tmp{#1}%
   \ifx\tmp\@nnil%
   		\bbE%
    \else%
      \bbE\lrb{#1}%
    \fi}
\newcommand{\Var}[1][\@nil]{%
  \def\tmp{#1}%
   \ifx\tmp\@nnil%
   		\operatorname{\bbV{ar}}%
    \else%
      \operatorname{\bbV{ar}}\lrb{#1}%
    \fi}
\newcommand{\Cov}[1][\@nil]{%
  \def\tmp{#1}%
   \ifx\tmp\@nnil%
   		\operatorname{\bbC{ov}}%
    \else%
      \operatorname{\bbC{ov}}\lrb{#1}%
    \fi}
\makeatother

\makeatletter

\newcommand{\indic}{\mathds{1}}

\newcommand{\Ber}[1][\@nil]{%
  \def\tmp{#1}%
   \ifx\tmp\@nnil%
   		\calB%
    \else%
      \calB\!\lrp{#1}%
    \fi}

\newcommand{\Geom}[1][\@nil]{%
  \def\tmp{#1}%
   \ifx\tmp\@nnil%
   		\calG%
    \else%
      \calG\!\lrp{#1}%
    \fi}
\newcommand{\Poiss}[1][\@nil]{%
  \def\tmp{#1}%
   \ifx\tmp\@nnil%
   		\calP%
    \else%
      \calP\!\lrp{#1}%
    \fi}

\newcommand{\Unif}[1][\@nil]{%
  \def\tmp{#1}%
   \ifx\tmp\@nnil%
   		\calU%
    \else%
      \calU_{#1}%
    \fi}
\newcommand{\Beta}[1][\@nil]{%
  \def\tmp{#1}%
   \ifx\tmp\@nnil%
   		\operatorname{Beta}%
    \else%
      \operatorname{Beta}\lrp{#1}%
    \fi}

\newcommand{\Exp}[1][\@nil]{%
  \def\tmp{#1}%
   \ifx\tmp\@nnil%
   		\calE%
    \else%
      \calE\!\lrp{#1}%
    \fi}
\newcommand{\Gama}[1][\@nil]{%
  \def\tmp{#1}%
   \ifx\tmp\@nnil%
   		\Gamma%
    \else%
      \Gamma\!\lrp{#1}%
    \fi}

\newcommand{\Cauchy}[1][\@nil]{%
  \def\tmp{#1}%
   \ifx\tmp\@nnil%
   		\operatorname{Cauchy}%
    \else%
      \operatorname{Cauchy}\!\lrp{#1}%
    \fi}
\newcommand{\Gauss}[1][\@nil]{%
  \def\tmp{#1}%
   \ifx\tmp\@nnil%
   		\calN%
    \else%
      \calN\!\lrp{#1}%
    \fi}
\newcommand{\Dir}[1][\@nil]{%
  \def\tmp{#1}%
   \ifx\tmp\@nnil%
   		\operatorname{Dir}%
    \else%
      \operatorname{Dir}\!\lrp{#1}%
    \fi}

\makeatother

\renewcommand{\top}{\mathsf{\scriptscriptstyle T}}

\makeatletter
\newcommand{\Span}[1][\@nil]{%
  \def\tmp{#1}%
   \ifx\tmp\@nnil%
   		\operatorname{span}%
    \else%
      \operatorname{span}\!\lrcb{#1}%
    \fi}
\newcommand{\Tr}[1][\@nil]{%
  \def\tmp{#1}%
   \ifx\tmp\@nnil%
   		\operatorname{Tr}%
    \else%
      \operatorname{Tr}\!\lrb{#1}%
    \fi}
\newcommand{\rank}[1][\@nil]{%
  \def\tmp{#1}%
   \ifx\tmp\@nnil%
   		\operatorname{rank}%
    \else%
      \operatorname{rank}\!\lrp{#1}%
    \fi}
\newcommand{\Sp}[1][\@nil]{%
  \def\tmp{#1}%
   \ifx\tmp\@nnil%
   		\operatorname{Sp}%
    \else%
      \operatorname{Sp}\!\lrp{#1}%
    \fi}
\newcommand{\diag}[1][\@nil]{%
  \def\tmp{#1}%
   \ifx\tmp\@nnil%
   		\operatorname{diag}%
    \else%
      \operatorname{diag}\!\lrp{#1}%
    \fi}
\makeatother

\usepackage[final]{neurips_2019}

\usepackage[utf8x]{inputenc} 
\usepackage[T1]{fontenc}    
\usepackage{hyperref}       
\usepackage{url}            
\usepackage{booktabs}       
\usepackage{amsfonts}       
\usepackage{nicefrac}       
\usepackage{microtype}      

\title{On two ways to use determinantal point processes\\ for Monte Carlo integration}

\author{%
  Guillaume Gautier$^{\dagger*}$ \\
  \texttt{g.gautier@inria.fr}
  \And
  Rémi Bardenet$^\dagger$ \\
  \texttt{remi.bardenet@gmail.com}
  \And
  Michal Valko$^{\ddag*\dagger}$ \\
  \texttt{valkom@deepmind.com}
  \AND
  \text{ }\\[-2.5em]
  $^\dagger$Univ.\,Lille, CNRS, Centrale Lille, UMR 9189\,--\,CRIStAL,  59651 Villeneuve d'Ascq, France\\
  $^*$Inria Lille-Nord Europe, 40 avenue Halley 59650 Villeneuve d'Ascq, France\\
  $^\ddag$DeepMind Paris, 14 Rue de Londres, 75009  Paris, France
  }

\begin{document}

\maketitle
\vspace{-2em}
\begin{abstract}
    When approximating an integral by a weighted sum of function evaluations, determinantal point processes (DPPs) provide a way to enforce repulsion between the evaluation points.
    This negative dependence is encoded by a kernel.
    Fifteen years before the discovery of DPPs, \citet[EZ,][]{ErZo60} had the intuition of sampling a DPP and solving a linear system to compute an unbiased Monte Carlo estimator of the integral.
    In the absence of DPP machinery to derive an efficient sampler and analyze their estimator, the idea of Monte Carlo integration with DPPs was stored in the cellar of numerical integration.
    Recently, \citet[BH,][]{BaHa19} came up with a more natural estimator with a fast central limit theorem (CLT).
    In this paper, we first take the EZ estimator out of the cellar, and analyze it using modern arguments.
    Second, we provide an efficient implementation\footnote{\label{fn:dppy}\footGitHubDPPy} to sample exactly a particular multidimensional DPP called \emph{multivariate Jacobi ensemble}.
    The latter satisfies the assumptions of the aforementioned CLT.
    Third, our new implementation lets us investigate the behavior of the two unbiased Monte Carlo estimators in yet unexplored regimes.
    We demonstrate experimentally good properties when the kernel is adapted to basis of functions in which the integrand is sparse or has fast-decaying coefficients.
    If such a basis and the level of sparsity are known (e.g., we integrate a linear combination of kernel eigenfunctions), the EZ estimator can be the right choice, but otherwise it can display an erratic behavior.
\end{abstract}
\vspace{-1em}
\section{Introduction}


    Numerical integration is a core task of many machine learning applications, including most Bayesian methods \citep{Rob07}.
    Both deterministic \citep{DaRa84,DiPi10} and random \citep{RoCa04} algorithms have been proposed; see also \citep{EvSc00} for a survey.
    All methods require evaluating the integrand at  carefully chosen points, called \emph{quadrature nodes}, and combining these evaluations
    to minimize the approximation error.

    Recently, a stream of work has made use of prior knowledge on the smoothness of the integrand using kernels.
    \citet{OaGiCh17} and \citet{LiLe17} used kernel-based control variates, splitting the computational budget into regressing the integrand and integrating the residual.
    \citet{Bac17} looked for the best way to sample i.i.d.\,nodes and combine the resulting evaluations.
    Finally, Bayesian quadrature \citep{Hag91,HuDu12,BOGO15}, herding \citep{ChWeSm10,BaLaOb12}, or the biased importance sampling estimate of \citet{DePo16} all favor \emph{dissimilar} nodes, where dissimilarity is measured by a kernel.
    Our work falls in this last cluster.

    We build on the particular approach of \citet{BaHa19} for Monte Carlo integration based on projection \emph{determinantal point processes} \citep[DPPs,][]{HKPV06,KuTa12}.
    DPPs are a repulsive distribution over configurations of points, where repulsion is again parametrized by a kernel.
    In a sense, DPPs are the kernel machines of point processes.

    Fifteen years before \citet{Mac75} even formalized DPPs, \citet[EZ,][]{ErZo60} had the intuition to use a determinantal structure to sample quadrature nodes, followed by solving a linear system to aggregate the evaluations of the integrand into an unbiased estimator.
    This linear system yields a simple and interpretable characterization of the variance of their estimator.
    \citeauthor{ErZo60}'s result did not diffuse much\footnote{Many thanks to
    Mathieu Gerber of Univ.\,Bristol, UK, for digging up this result from his
    human memory.}
    in the Monte Carlo community, partly because the mathematical and computational machinery to analyze and implement it was not available.
    Seemingly unaware of this previous work, \citet{BaHa19} came up with a more natural estimator of the integral of interest, and they could build upon the thorough study of DPPs in random matrix theory \citep{Joh06} to obtain a fast central limit theorem (CLT).
    Since then, DPPs with stationary kernels have also been used by \citet{MaCoAm19} for Monte Carlo integration.
    In any case, these DPP-based Monte Carlo estimators crucially depend on efficient sampling procedures for the corresponding (potentially multidimensional) DPP.

    \paragraph{Our contributions.}
    First, we reveal the close link between DPPs and the approach of \citet{ErZo60}.
    Second, we provide a simple proof of their result and survey the properties of the EZ estimator with modern arguments.
    In particular, when the integrand is a linear combination of the eigenfunctions of the kernel of the underlying DPP, the corresponding Fourier-like coefficients can be estimated with zero variance.
    In other words, one sample of the corresponding DPP yields perfect interpolation of the underlying integrand, by solving a linear system.
    Third, we propose an efficient Python implementation for exact sampling of a particular DPP, called \emph{multivariate Jacobi ensemble}.\setcounter{footnote}{1}
    The code\footnoteref{fn:dppy} is available in the DPPy toolbox of \citet{GPBV19}.
    This implementation allows to numerically investigate the behavior of the two Monte Carlo estimators derived by \citet{BaHa19} and \citet{ErZo60}, in regimes yet unexplored for any of the two.
    Fourth, important theoretical properties of both estimators, like the CLT of \citep{BaHa19}, are technically involved.
    A CLT for EZ promises to be even more difficult to establish.
    The current empirical investigation provides a motivation and guidelines for more theoretical work.
    Our point is not to compare DPP-based Monte Carlo estimators to the wide choice of numerical integration algorithms, but to get a fine understanding of their properties so as to fine-tune their design and guide theoretical developments.

\section{Quadrature, DPPs, and the multivariate Jacobi ensemble}
\label{sec:background}


    In this section, we quickly survey classical quadrature rules.
    Then, we define DPPs and give the key properties that make them useful for Monte Carlo integration.
    Finally, among so-called \emph{projection} DPPs, we introduce the multivariate Jacobi ensemble used by \citet{BaHa19} to generate quadrature nodes, and on which we base our experimental work.

    \subsection{Standard quadrature} 
    \label{sub:quadrature_rules}

        Following \citet[Section 2.1]{BOGO15}, let $\mu(\diff x) = \omega(x) \diff x$ be a positive Borel measure on $\bbX \subset \bbR^d$ with finite mass and density $\omega$ w.r.t.\,the Lebesgue measure.
        This paper aims to compute integrals of the form
        $\int f(x) \mu(\diff x)$ for some test function $f: \bbX \to \bbR$.
        A quadrature rule approximates such integrals as a weighted sum of evaluations of $f$ at some \emph{nodes} $\lrcb{x_{1}, \dots, x_{N}}\subset \bbX,$
        \begin{equation}
            \label{eq:quadrature}
            \int f(x) \mu(\diff x)
                \approx \sum_{n=1}^{N} \omega_n f(x_n),
        \end{equation}
        where the weights $\omega_n \triangleq \omega_n(x_1, \dots, x_N)$ do not need to be non-negative nor sum to one.

        Among the many quadrature designs mentioned in the introduction \citep[Section~5]{EvSc00}, we pay special attention to the textbook example of the (deterministic) Gauss-Jacobi rule.
        This scheme applies to dimension $d=1$, for $\bbX\triangleq[-1, 1]$ and $\omega(x) \triangleq(1-x)^a(1+x)^b$ with $a, b > -1$.
        In this case, the nodes $\lrcb{x_{1}, \dots, x_{N}}$ are taken to be the zeros of $p_N$, the orthonormal Jacobi polynomial of degree $N$, and the weights $\omega_n \triangleq 1/K(x_n, x_n)$ with $K(x, x) \triangleq \sum_{k=0}^{N-1} p_k(x)^2$.
        In particular, this specific quadrature rule allows to perfectly integrate polynomials up to degree $2N-1$ \citep[Section 2.7]{DaRa84}.
        In a sense, the DPPs of \citet{BaHa19} are a random, multivariate generalization of Gauss-Jacobi quadrature, as we shall see in \Secref{sub:BH_estimator}.

        Monte Carlo integration can be defined as random choices of nodes in \eqref{eq:quadrature}.
        Importance sampling, for instance, corresponds to i.i.d.\,nodes, while Markov chain Monte Carlo corresponds to nodes drawn from a carefully chosen Markov chain; see, e.g., \citet{RoCa04} for more details.
        Finally, quasi-Monte Carlo \citep[QMC,][]{DiPi10} applies to $\mu$ uniform over a compact subset of $\mathbb{R}^d$, and constructs deterministic nodes that spread uniformly, as measured by their \emph{discrepancy}.


    \subsection{Projection DPPs}
    \label{sub:projection_DPPs}

        DPPs can be understood as a parametric class of point processes, specified by a base measure~$\mu$ and a kernel $K:\bbX\times\bbX \to \bbC$.
        The latter is commonly assumed to be Hermitian and trace-class.
        For the resulting process to be well defined, it is necessary and sufficient that the kernel $K$ is positive semi-definite with eigenvalues in $[0,1]$, see, e.g., \citet[Theorem 3]{Sos00}.
        When the eigenvalues further belong to $\lrcb{0,1}$, we speak of a \emph{projection} kernel and a \emph{projection} DPP.
        One practical feature of projection DPPs is that they almost surely produce samples with fixed cardinality, equal to the rank $N$ of the kernel.
        More generally, they are the building blocks of DPPs.
        Indeed, under general assumptions, all DPPs are mixtures of projection DPPs \citep[Theorem 7]{HKPV06}.
        Hereafter, unless specifically stated, $K$ is assumed to be a real-valued, symmetric, projection kernel.

        One way to define a projection DPP with $N$ points is to take $N$ functions $\phi_0,\dots,\phi_{N-1}$ orthonormal w.r.t.\,$\mu$, i.e.,
            $\lrsp{\phi_k, \phi_{\ell}}
                \triangleq \int \phi_k(x) \phi_{\ell}(x) \mu(\diff x)
                = \delta_{k\ell}$,
        and consider the kernel $K_N$ associated to the orthogonal projector onto $\calH_N \triangleq \Span \lrcb{ \phi_k, \; 0\leq k\leq N-1}$, i.e.,
        \begin{equation}
        \label{eq:kernel_projection_DPP}
            K_N(x,y)
                \triangleq
                    \sum_{k=0}^{N-1}\phi_k(x)\phi_k(y).
        \end{equation}
        We say that the set $\lrcb{\bfx_1,\dots,\bfx_N}\subset \bbX$ is drawn from the projection DPP with base measure $\mu$ and kernel $K_N$, denoted by $\lrcb{\bfx_1, \dots, \bfx_N} \sim \DPP(\mu,K_N)$, when $\lrp{\bfx_1,\dots,\bfx_N}$ has joint distribution
        \begin{equation}
        \label{eq:joint_distribution_projection_dpp}
            \frac1{N!}
                \det\lrp{K_N(x_p,x_n)}_{p,n=1}^N
                \, \mu^{\otimes N}(\diff x).
        \end{equation}
        $\DPP(\mu, K_N)$ indeed defines a probability measure over sets since \eqref{eq:joint_distribution_projection_dpp} is invariant by permutation and the orthonormality of the $\phi_k$s yields the normalization.
        See also \Appref{sub:normalization_cauchy_binet} for more details on the construction of projection DPPs from sets of linearly independent functions.

        The repulsion of projection DPPs may be understood geometrically by considering the Gram formulation of the kernel \eqref{eq:kernel_projection_DPP}, namely
        \begin{equation}
        \label{eq:kernel_projection_DPP_Gram_formulation}
            K_N(x,y)
                = \Phi(x)^{\top} \Phi(y),
            \quad\text{where}\quad
            \Phi(x) \triangleq \lrp{\phi_{0}(x),\dots,\phi_{N-1}(x)}^{\top}.
        \end{equation}
        This allows to rewrite the joint distribution \eqref{eq:joint_distribution_projection_dpp} as
        \begin{equation}
        \label{eq:joint_distribution_projection_dpp_geometric}
        \!
        \frac1{N!}
            \underbrace{\det\bfPhi(x_{1:N})\bfPhi(x_{1:N})^{\top}}_{=(\det\bfPhi(x_{1:N}))^2}
            \, \mu^{\otimes N}(\diff x),
        \quad\text{where}\quad
            \bfPhi(x_{1:N})
            \triangleq
            \begin{pmatrix}
                \phi_0(x_1) & \dots  & \phi_{N-1}(x_1)\\
                \vdots     &        & \vdots\\
                \phi_0(x_N) & \dots  & \phi_{N-1}(x_N)
            \end{pmatrix}.
        \end{equation}
        Thus, the larger the determinant of the \emph{feature matrix} $\bfPhi(x_{1:N})$, i.e., the larger the volume of the parallelotope spanned by the \emph{feature vectors} $\Phi(x_{1}), \dots, \Phi(x_{N})$, the more likely $x_1, \dots, x_N$ co-occur.

    \subsection{The multivariate Jacobi ensemble}
    \label{sub:multivariate_jacobi_ensemble}

        In this part, we specify a projection kernel.
        We follow \citet{BaHa19} and take its eigenfunctions to be multivariate orthonormal polynomials.
        In dimension $d=1$, letting $\lrp{\phi_k}_{k\geq0}$ in \eqref{eq:kernel_projection_DPP} be the orthonormal polynomials w.r.t.\,$\mu$ results in a projection DPP called an \emph{orthogonal polynomial ensemble} \citep[OPE,][]{Kon05}.
        When $d>1$, orthonormal polynomials can still be uniquely defined by applying the Gram-Schmidt procedure to a set of monomials, provided the base measure is not pathological.
        However, there is no natural order on multivariate monomials: an ordering $\mathfrak{b}:\bbN^d\rightarrow \bbN$ must be picked before we apply Gram-Schmidt to the monomials in $L^2(\mu)$.
        We follow \citet[Section 2.1.3]{BaHa19} and consider multi-indices $k\triangleq(k^1,\dots,k^d)\in \bbN^d$ ordered by their maximum degree $\max_i k^i$, and for constant maximum degree, by the usual lexicographic order.
        We still denote the corresponding multivariate orthonormal polynomials by $\lrp{\phi_{k}}_{k\in \bbN^d}$.

        By multivariate OPE we mean the projection DPP with base measure
        $\mu(\diff x)\triangleq\omega(x)\diff x$
        and orthogonal projection kernel
        $K_N(x,y)
            \triangleq \sum_{\mathfrak{b}(k)=0}^{N-1}
                \phi_k(x)\phi_k(y)$.
        When the base measure is separable,~i.e.,
            $\omega(x) = \omega^1(x^1) \times \cdots \times \omega^d(x^d)$,
        multivariate orthonormal polynomials are products of univariate ones,
        and the kernel \eqref{eq:kernel_projection_DPP} reads
        \begin{equation}
        \label{eq:kernel_multivariate_separable_OPE}
            K_N(x,y)
            = \sum_{\mathfrak{b}(k)=0}^{N-1}
                \prod_{i=1}^d
                    \phi_{k^i}^i(x^i) \phi_{k^i}^i(y^i),
        \end{equation}
        where $(\phi_\ell^i)_{\ell\geq 0}$ are the orthonormal polynomials w.r.t.\,$\omega^i(z) \diff z$.
        For $\bbX=[-1,1]^d$ and $\omega^i(z) = (1-z)^{a^i}(1+z)^{b^i}$, with $a^i, b^i > -1$, the resulting DPP is called a \emph{multivariate Jacobi ensemble}.



\section{Monte Carlo integration with projection DPPs}
\label{sec:MCI_with_projection_DPPs}

    Our goal is to design random quadrature rules \eqref{eq:quadrature} on $\bbX \triangleq [-1, 1]^d$ with desirable properties.
    We focus on computing $\int f(x) \mu(\diff x)$ with the two unbiased DPP-based Monte Carlo estimators of \citet[BH,][]{BaHa19} and \citet[EZ,][]{ErZo60}.
    We start by presenting the natural BH estimator which, when associated to the multivariate Jacobi ensemble, comes with a CLT with a faster rate than classical Monte Carlo.
    Then, we survey the properties of the less obvious EZ estimator.
    Using a generalization of the Cauchy-Binet formula we provide a slight improvement of the key result of EZ.
    Despite the lack of result illustrating a fast convergence rate, the EZ estimator has a practical and interpretable variance.
    In particular, this estimator turns a single DPP sample into a perfect integrator as well as a perfect interpolator of functions that are linear combinations of eigenfunctions of the associated kernel.
    Finally, we detail our exact sampling procedure for multivariate Jacobi ensemble, which allows to exploit the best of both the BH and EZ estimators.

    \subsection{A natural estimator}
    \label{sub:BH_estimator}

        For $f\in L^1(\mu)$, \citet{BaHa19} consider
        \begin{equation}
        \label{eq:BH_estimator}
            \Iauth{BH}_N(f)
                \triangleq
                    \sum_{n=1}^{N}
                        \frac{f(\bfx_n)}{K_N(\bfx_n,\bfx_n)}\CommaBin
        \end{equation}
        as an unbiased estimator of $\int f(x) \mu(\diff x)$, with variance (see, e.g., \citealp[Section 2.1]{LaMoRu15})
        \begin{equation}
        \label{eq:var_BH}
            \Var[\Iauth{BH}_N(f)]
            =
            \frac12
                \int \lrp{\frac{f(x)}{K_N(x,x)} - \frac{f(y)}{K_N(y,y)}}^2
                        {K_N(x, y)}^2
                            \mu(\diff x) \mu(\diff y),
        \end{equation}
        which clearly captures a notion of smoothness of $f$ w.r.t.\,$K_N$ but its interpretation is not obvious.

        For $\bbX=[-1, 1]^d$, the interest in multivariate Jacobi ensemble among DPPs comes from the fact that \eqref{eq:BH_estimator} can be understood as a (randomized) multivariate counterpart of the Gauss-Jacobi quadrature introduced in \Secref{sub:quadrature_rules}.
        Moreover, for $f$ essentially $\calC^1$, \citet[Theorem 2.7]{BaHa19} proved a CLT with faster-than-classical-Monte-Carlo decay,
        \begin{equation}
        \label{eq:BH_CLT}
            \sqrt{N^{1+1/d}}
            \lrp{\Iauth{BH}_N(f) - \int f(x)\mu(\diff x)}
                \xrightarrow[N\to\infty]{\text{law}}
                    \Gauss[0,\Omega_{f,\omega}^2],
        \end{equation}
        with $\Omega_{f,\omega}^2
                \triangleq \frac{1}{2}
                    \sum_{k \in \bbN^d}
                        (k^1+\cdots+k^d)
                        \calF_{\frac{f \omega}{\weq}}
                        (k)^2$,
        where $\calF_g$ denotes the Fourier transform of $g$, and
        $\weq(x) \triangleq 1/\prod_{i=1}^d \pi\sqrt{1-(x^i)^2}$.
        In the fast CLT \eqref{eq:BH_CLT}, the asymptotic variance is governed by the smoothness of $f$ since $\Omega_{f,\omega}$ is a measure of the decay of the Fourier coefficients of the integrand.

    \subsection{The Ermakov-Zolotukhin estimator}
    \label{sub:EZ_estimator}

        We start by stating the main finding of \citet{ErZo60}, see also \citet[Section 6.4.3]{EvSc00} and references therein.
        To the best of our knowledge, we are the first to make the connection with projection DPPs, as defined in \Secref{sub:projection_DPPs}.
        This allows us to give a slight improvement and provide a simpler proof of the original result, based on a generalization of the Cauchy-Binet formula \citep{Joh06}.
        Finally, we apply \citepos{ErZo60} technique to build an unbiased estimator of $\int f(x) \mu(\diff x)$, which comes with a practical and interpretable variance.
        \begin{theorem}
            \label{th:ermakov_zolotukhin_estimators}
            Consider $f\in L^2(\mu)$ and $N$ functions $\phi_0, \dots, \phi_{N-1}\in L^2(\mu)$ orthonormal w.r.t.\,$\mu$.
            Let $\{\bfx_1,\ldots,\bfx_N\} \sim \DPP(\mu, K_N)$, with $K_N(x,y)=\sum_{k=0}^{N-1} \phi_k(x)\phi_k(y)$. Consider the linear system
            \begin{equation}
                \label{eq:EZ_linear_system}
                \begin{pmatrix}
                    \phi_0(\bfx_1) & \dots  & \phi_{N-1}(\bfx_1)\\
                        \vdots     &        & \vdots\\
                    \phi_0(\bfx_N) & \dots  & \phi_{N-1}(\bfx_N)
                \end{pmatrix}
                \begin{pmatrix}
                    y_1\\
                    \vdots\\
                    y_N
                \end{pmatrix}
                =
                \begin{pmatrix}
                    f(\bfx_1)\\
                    \vdots\\
                    f(\bfx_N)
                \end{pmatrix}.
            \end{equation}
            Then, the solution of \eqref{eq:EZ_linear_system} is unique, $\mu$-almost surely, with coordinates
            $
                y_k
                =
                \frac{\det \bfPhi_{\phi_{k-1}, f}(\bfx_{1:N})}
                     {\det \bfPhi(\bfx_{1:N})}
                \CommaBin
            $
            where $\bfPhi_{\phi_{k-1}, f}(\bfx_{1:N})$ is the matrix obtained by replacing the $k$-th column of $\bfPhi(\bfx_{1:N})$ by $f(\bfx_{1:N})$.
            Moreover, for all $1\leq k\leq N$, the coordinate $y_k$ of the solution vector satisfies
            \begin{equation}
                \label{eq:expe_var_EZ}
                \Expe[y_k]
                    = \lrsp{f, \phi_{k-1}},
                \quad\text{and}\quad
                \Var[y_k]
                    = \lrnorm{f}^2
                       - \sum_{\ell=0}^{N-1}
                         \lrsp{f, \phi_{\ell}}^2.
            \end{equation}
            We improved the original result by showing that $\Cov[y_j, y_k]= 0$, for all $1\leq j\neq k\leq N$.
        \end{theorem}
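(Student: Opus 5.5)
Uniqueness comes first. Under the joint law \eqref{eq:joint_distribution_projection_dpp_geometric}, the density of $(\bfx_1,\dots,\bfx_N)$ with respect to $\mu^{\otimes N}$ is proportional to $(\det\bfPhi(x_{1:N}))^2$. The event $\{\det\bfPhi(\bfx_{1:N})=0\}$ therefore has probability zero, so the matrix in \eqref{eq:EZ_linear_system} is almost surely invertible. Cramer's rule then gives $y_k=\det\bfPhi_{\phi_{k-1},f}(\bfx_{1:N})/\det\bfPhi(\bfx_{1:N})$.

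The moments all reduce to one computation: a generalized Cauchy--Binet (Andréief) identity. For functions $\psi_0,\dots,\psi_{N-1}$ and $\chi_0,\dots,\chi_{N-1}$ in $L^2(\mu)$,
\begin{equation*}
\frac1{N!}\int \det\lrp{\psi_{j}(x_n)}_{n,j}\det\lrp{\chi_{j}(x_n)}_{n,j}\,\mu^{\otimes N}(\diff x)=\det\lrp{\lrsp{\psi_i,\chi_j}}_{i,j=0}^{N-1}.
\end{equation*}
This is the same identity that yields the normalization in \Appref{sub:normalization_cauchy_binet}. I would prove it by expanding both determinants over permutations, integrating coordinate by coordinate with Fubini, and regrouping; products of $L^2$ functions are integrable, so each step is justified.

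For the expectation, the factor $(\det\bfPhi)^2/N!$ cancels one $\det\bfPhi$ in the denominator of $y_k$, leaving
\begin{equation*}
\Expe[y_k]=\frac1{N!}\int\det\bfPhi_{\phi_{k-1},f}\det\bfPhi\,\diff\mu^{\otimes N}.
\end{equation*}
Andréief applies with $\psi$ equal to the $\phi$'s but with $f$ in slot $k-1$, and $\chi=\phi$. The resulting Gram matrix is the identity except for row $k-1$, which is $(\lrsp{f,\phi_j})_j$. Its determinant is the diagonal entry $\lrsp{f,\phi_{k-1}}$, so $\Expe[y_k]=\lrsp{f,\phi_{k-1}}$.

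For second moments, the product $y_jy_k$ times the density gives
\begin{equation*}
\Expe[y_jy_k]=\frac1{N!}\int\det\bfPhi_{\phi_{j-1},f}\det\bfPhi_{\phi_{k-1},f}\,\diff\mu^{\otimes N},
\end{equation*}
which equals the determinant of the Gram matrix $G$ between $(\phi_0,\dots,f_{[j-1]},\dots)$ and $(\phi_0,\dots,f_{[k-1]},\dots)$. Here I expect the main bookkeeping effort, namely evaluating this determinant cleanly.

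\emph{Case $j=k$.} $G$ is the identity except that row $k-1$ and column $k-1$ are filled with the coefficients $\lrsp{f,\phi_i}$, and the corner entry is $\lrnorm{f}^2$. A Schur complement, or eliminating the off-diagonal entries of that row and column with the identity block, gives $\det G=\lrnorm{f}^2-\sum_{\ell\ne k-1}\lrsp{f,\phi_\ell}^2$. Subtracting $\Expe[y_k]^2$ yields the claimed variance.

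\emph{Case $j\neq k$.} $G$ agrees with the identity except in row $j-1$ and column $k-1$. Row $j-1$ is $(\lrsp{f,\phi_i})_i$ with entry $\lrsp{f,f}$ in column $k-1$. Column $k-1$ has entries $\lrsp{\phi_i,f}$, and in particular entry $(k-1,k-1)$ is $\lrsp{\phi_{k-1},f}$, while entry $(j-1,j-1)$ is $\lrsp{f,\phi_{j-1}}$.

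I would reduce $G$ by cofactor expansion along the rows $i\notin\{j-1,k-1\}$, which are unit vectors apart from column $k-1$. Those entries can be cleared by column operations that only change column $k-1$ in rows $j-1$ and $k-1$ by subtracting the $i$-th contributions. What remains is the $2\times2$ block on indices $\{j-1,k-1\}$, with determinant
\begin{equation*}
\lrsp{f,\phi_{j-1}}\lrsp{f,\phi_{k-1}}-0\cdot(\ldots).
\end{equation*}
The zero appears because entry $(k-1,j-1)$ is $\lrsp{\phi_{k-1},\phi_{j-1}}=0$. So $\Expe[y_jy_k]=\lrsp{f,\phi_{j-1}}\lrsp{f,\phi_{k-1}}$, hence $\Cov[y_j,y_k]=0$.

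I would double-check this $2\times2$ reduction carefully, since it is the only nontrivial step.
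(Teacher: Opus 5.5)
Your proposal is correct and follows essentially the same route as the paper: almost-sure invertibility from the $(\det\boldsymbol{\Phi})^2$ density, Cramer's rule, and then the generalized Cauchy--Binet (Andr\'eief) identity combined with orthonormality to evaluate the Gram determinants for the first, second and cross moments. You also handle general $j\neq k$, whereas the paper writes out only $j=1,k=2$. The $2\times 2$ reduction you want to double-check is right, and it is even quicker than you fear: by orthonormality, row $k-1$ of $G$ is $\langle f,\phi_{k-1}\rangle$ times the $(k-1)$-th unit row vector, so expanding along it leaves the identity with only row $j-1$ modified, whose determinant is $\langle f,\phi_{j-1}\rangle$.
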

        Before we provide the proof, also detailed in \Appref{sub:app_proof_ErZo_th}, several remarks are in order.
        We start by considering a function $f\triangleq\sum_{k=0}^{M-1} \lrsp{f, \phi_k} \phi_k$, $1\leq M \leq \infty$, where $(\phi_k)_{k\geq 0}$ forms an orthonormal basis of $L^2(\mu)$, e.g., the Fourier basis or wavelet bases \citep{MaPe09}.
        Next, we build the orthogonal projection kernel $K_N$ onto $\calH_N \triangleq \Span \lrcb{\phi_0, \dots, \phi_{N-1}}$ as in \eqref{eq:kernel_projection_DPP}.
        Then, \Thref{th:ermakov_zolotukhin_estimators} shows that solving \eqref{eq:EZ_linear_system}, with points $\lrcb{\bfx_1, \dots, \bfx_N}\sim\DPP(\mu, K_N)$, provides unbiased estimates of the $N$ Fourier-like coefficients $\lrp{\lrsp{f, \phi_k}}_{k=0}^{N-1}$.
        Remarkably, these estimates are uncorrelated and have the same variance \eqref{eq:expe_var_EZ} equal to the residual $\sum_{k=N}^{\infty} \lrsp{f, \phi_k}^2$.
        The faster the decay of the coefficients, the smaller the variance.
        In particular, for $M\leq N$, i.e., $f\in \mathcal{H}_N$, the estimators have zero variance.
        Put differently, $f$ can be reconstructed perfectly from only one sample of $\DPP(\mu, K_N)$.
        \begin{proof}
            First, the joint distribution \eqref{eq:joint_distribution_projection_dpp_geometric} of $\lrp{\bfx_{1}, \dots, \bfx_{N}}$ is proportional to $\lrp{\det \bfPhi(x_{1:N})}^2 \mu^{\otimes N}(x)$.
            Thus, the matrix $\bfPhi(\bfx_{1:N})$ defining the linear system \eqref{eq:EZ_linear_system} is invertible, $\mu$-almost surely, and the expression of the coordinates follows from Cramer's rule.
            Then, we treat the case $k=1$, the others follow the same lines.
            The proof relies on the orthonormality of the $\phi_k$s and a generalization of the Cauchy-Binet formula \eqref{eq:app_cauchy_binet}, cf.\,\Lemref{lem:app_cauchy_binet}.
            The expectation in \eqref{eq:expe_var_EZ} reads
            \begin{align}
                \!\!\Expe[\frac{\det \bfPhi_{\phi_0, f}(\bfx_{1:N})}
                            {\det \bfPhi(\bfx_{1:N})}]
                &\lequal{}{\eqref{eq:joint_distribution_projection_dpp_geometric}}
                \frac{1}{N!}
                    \int
                    \det \bfPhi_{\phi_0, f} (x_{1:N})
                    \det \bfPhi (x_{1:N})
                        \, \mu^{\otimes N}( \diff x) \nonumber\\
               & \lequal{}{\eqref{eq:app_cauchy_binet}}
                \det
                    \begin{psmallmatrix}
                        \lrsp{f, \phi_0}
                            & \lrp{\lrsp{f, \phi_{\ell}}}_{\ell=1}^{N-1}
                            \\
                        0_{N-1, 1}
                            & I_{N-1}
                    \end{psmallmatrix}
                = \lrsp{f, \phi_0}.
                \label{eq:proof_1st_moment}
            \end{align}
            Similarly, the second moment reads
            \begin{align}
                \!\!\!\!
                \Expe[
                    \lrp{
                     \frac{\det \bfPhi_{\phi_0, f}(\bfx_{1:N})}
                      {\det \bfPhi(\bfx_{1:N})}}^2
                    ]
                &\lequal{}{\eqref{eq:joint_distribution_projection_dpp_geometric}}
                    \frac{1}{N!}
                    \int
                        \det \bfPhi_{\phi_0, f}(x_{1:N})
                        \det \bfPhi_{\phi_0, f}(x_{1:N})
                        \, \mu^{\otimes N}( \diff x)
                        \nonumber\\
             &\lequal{}{\eqref{eq:app_cauchy_binet}}
                \det
                \begin{psmallmatrix}
                \lrnorm{f}^2
                & \lrp{ \lrsp{f, \phi_{\ell}} }_{\ell=1}^{N-1}
                \\
                \lrp{ \lrsp{f, \phi_k} }_{k=1}^{N-1}
                & I_{N-1}
                \end{psmallmatrix}
                = \lrnorm{f}^2
                    - \sum_{k=1}^{N-1}
                        \lrsp{f, \phi_{k}}^2.
                \!\!\!
                \label{eq:proof_2nd_moment}
            \end{align}
            Finally, the variance in \eqref{eq:expe_var_EZ} $=$ \eqref{eq:proof_2nd_moment} - \eqref{eq:proof_1st_moment}$^2$.
            The covariance is treated in \Appref{sub:app_proof_ErZo_th}.
        \end{proof}

        In the setting of the multivariate Jacobi ensemble described in \Secref{sub:multivariate_jacobi_ensemble}, the first
        orthonormal polynomial $\phi_0$  is constant, equal to $\mu\big(\lrb{-1, 1}^d\big)^{-1/2}$.
        Hence, a direct application of \Thref{th:ermakov_zolotukhin_estimators} yields
        \begin{equation}
        \label{eq:EZ_estimator}
            \Iauth{EZ}_N(f)
                \triangleq \frac{y_1}{\phi_0}
                =
                    \mu\big(\lrb{-1, 1}^d\big)^{1/2}
                    ~
                    \frac{\det \bfPhi_{\phi_0, f}(\bfx_{1:N})}
                         {\det \bfPhi(\bfx_{1:N})}\CommaBin
        \end{equation}
        as an unbiased estimator of $\int_{\lrb{-1,1}^d} f(x) \mu(\diff x)$, see \Appref{sub:app_EZ_as_quadrature}.
        We also show that \eqref{eq:EZ_estimator} can be viewed as a quadrature rule \eqref{eq:quadrature} with weights summing to $\mu(\lrb{-1, 1}^d)$.
        Unlike the variance of $\Iauth{BH}_N(f)$ in \eqref{eq:var_BH}, the variance of $\Iauth{EZ}_N(f)$ clearly reflects the accuracy of the approximation of $f$ by its projection onto $\calH_N$.
        In particular, it allows to integrate and interpolate polynomials up to ``degree'' $\mathfrak{b}^{-1}(N-1)$, perfectly.
        Nonetheless, its limiting theoretical properties, like a CLT, look hard to establish.
        In particular, the dependence of each quadrature weight on all quadrature nodes makes the estimator a peculiar object that doesn't fit the assumptions of traditional CLTs for DPPs \citep{Sos00}.


    \subsection{How to sample from projection DPPs and the multivariate Jacobi ensemble} 
    \label{sub:sampling}

        To perform Monte Carlo integration with DPPs, it is crucial to sample the points and evaluate the weights efficiently.
        However, sampling from continuous DPPs remains a challenge.
        In this part, we review briefly the main technique for projection DPP sampling before we develop our method to generate samples from the multivariate Jacobi ensemble.\setcounter{footnote}{2}
        The code\footnoteref{fn:dppy} is available in the DPPy toolbox of \citet{GPBV19}, the associated documentation\footnote{\label{fn:readthedocs}\footReadTheDocs} contains a lot more details on DPP sampling.

        In both finite and continuous cases, except for some specific instances, exact DPP sampling usually requires the spectral decomposition of the underlying kernel \citep[Section 2.4]{LaMoRu15}.
        However, for projection DPPs, prior knowledge of the eigenfunctions is not necessary, only an oracle to evaluate the kernel is required.
        Next, we describe the generic exact sampler of \citet[Algorithm 18]{HKPV06}.
        It is based on the chain rule and valid exclusively for projection DPPs.

        For simplicity, consider a projection $\DPP(\mu, K_N)$ with $\mu(\diff x) = \omega(x) \diff x$ and $K_N$ as in \eqref{eq:kernel_projection_DPP}.
        This DPP has exactly $N$ points, $\mu$-almost surely \citep[Lemma 17]{HKPV06}.
        To get a valid sample $\lrcb{\bfx_{1}, \dots, \bfx_{N}}$, it is enough to apply the chain rule to sample $(\bfx_1,\dots,\bfx_N)$ and forget the order the points were selected.
        The chain rule scheme can be derived from two different perspectives.
        Either using Schur complements to express the determinant in the joint distribution \eqref{eq:joint_distribution_projection_dpp},
        \begin{equation}
        \label{eq:chain_rule_schur}
        \frac{K_N(x_1, x_1)}{N}
            \omega(x_1) \diff x_1
            \prod_{n=2}^{N}
                \frac{K_N(x_n,x_n)
                -   {\bfK}_{n-1}(x_n)^{\top}
                    {\bfK}_{n-1}^{-1}
                    {\bfK}_{n-1}(x_n)}{N-(n-1)}
                \omega(x_n) \diff x_n,
        \end{equation}
        where
        $\bfK_{n-1}(\cdot)
          = \lrp{K_N(x_1,\cdot), \dots, K_N(x_{n-1},\cdot)}^{\top}$, and
        $\bfK_{n-1}
          = \lrp{K_N(x_p,x_q)}_{p,q=1}^{n-1}$.
        Or geometrically using the base$\times$height formula to express the squared volume in the joint distribution \eqref{eq:joint_distribution_projection_dpp_geometric},
        \begin{equation}
        \label{eq:chain_rule_geometric}
        \frac{\|\Phi(x_1)\|^2}{N}
                \omega(x_1) \diff x_1
                \prod_{n=2}^{N}
                    \frac{\dist^2\!\big(\Phi(x_n), \Span\lrcb{\Phi(x_p)}_{p=1}^{n-1}\big)}
                    {N-(n-1)}
                \omega(x_n) \diff x_n.
        \end{equation}
        Note that the numerators in \eqref{eq:chain_rule_schur} correspond to the incremental posterior variances of a noise-free Gaussian process model with kernel $K_N$ \citep{RaWi06}, giving yet another intuition for repulsion.
        When using the chain rule, the practical challenge is twofold: find efficient ways to
        (i) evaluate the conditional densities,
        (ii) sample exactly from the conditionals.

        In this work, we take $\bbX=[-1, 1]^d$ and focus on sampling the multivariate Jacobi ensemble with parameters $\lrabs{a^i}, \lrabs{b^i} \leq 1/2$, cf.\,\Secref{sub:multivariate_jacobi_ensemble}.\setcounter{footnote}{3}
        We remodeled the original implementation\footnote{\footDPPMC} of the multivariate Jacobi ensemble sampler accompanying the work of \citet[BH,][]{BaHa19} in a more Python\emph{ic} way.
        In particular, we address the previous challenges in the following way:
        \begin{enumerate}[(i), wide=0pt]
            \item contrary to BH, we leverage the Gram structure to vectorize the computations and consider \eqref{eq:chain_rule_geometric} instead of \eqref{eq:chain_rule_schur}, and evaluate $K_N(x,y)$ via \eqref{eq:kernel_projection_DPP_Gram_formulation} instead of \eqref{eq:kernel_multivariate_separable_OPE}.
            The overall procedure is akin to a sequential Gram-Schmidt orthogonalization of the feature vectors $\Phi(x_{1}), \dots, \Phi(x_{N})$.
            Moreover we pay special attention to avoiding unnecessary evaluations of orthogonal polynomials (OP) when computing a feature vector $\Phi(x)$.
            This is done by coupling the slicing feature of the Python language with the dedicated method \texttt{scipy.special.eval\_jacobi}, used to evaluate the three-term recurrence relations satisfied by each of the univariate Jacobi polynomials.
            Given the chosen ordering $\mathfrak{b}$, the computation of $\Phi(x)$ requires the evaluation of $d$ recurrence relations up to depth $\sqrt[d]{N}$.

            \item like BH, we sample each conditional in turn using a rejection sampling mechanism with the same proposal distribution.
            But BH take as proposal $\weq(x) \diff x$, which corresponds to the limiting marginal of the multivariate Jacobi ensemble as $N$ goes to infinity; see \cite[Section 3.11]{Sim11}.
            On our side, we use a two-layer rejection sampling scheme.
            We rather sample from the $n$-th conditional using the marginal distribution $N^{-1} K_N(x, x) \omega(x) \diff x$ as proposal and rejection constant $N/(N-(n-1))$.
            This allows us to reduce the number of (costly) evaluations of
            the acceptance ratio.
            The marginal distribution itself is sampled using the same proposal $\weq(x) \diff x$ and rejection constant as BH.
            The rejection constant, of order $2^d$, is derived from \citet{ChGaWo94} and \citet{Gau09}.
            We further reduced the number of OP evaluations by considering $N^{-1} K_N(x, x) \omega(x) \diff x$ as a mixture, where each component in \eqref{eq:kernel_multivariate_separable_OPE} involves only one OP.
            In the end, the expected total number of rejections is of order $2^d N\log N$.
            Finally, we implemented a specific rejection free method for the univariate Jacobi ensemble; a special continuous projection DPP which can be sampled exactly in $\calO(N^2)$, by computing the eigenvalues of a random tridiagonal matrix \citep[Theorem 2]{KiNe04}.
        \end{enumerate}
        All these improvements resulted in dramatic speedups.
        For example, on a modern laptop, sampling a $2D$ Jacobi ensemble with $N=1000$ points, see \Figref{fig:sample_2D_raw}, takes less than a minute, compared to hours previously.
        For more details on the sampling procedure, we refer to \Appref{sub:app_sampling}.\\[-1.5em]

    \begin{figure*}[ht]
        \centering
        \subfigure[
            \textcolor{red}{$\propto\omega^1$},
            \textcolor{Green}{$\propto\omega^2$},
            \textcolor{Orange}{$\weq$}
        ]{
        \label{fig:sample_2D_raw}
        \includegraphics[width=0.25\textwidth]
            {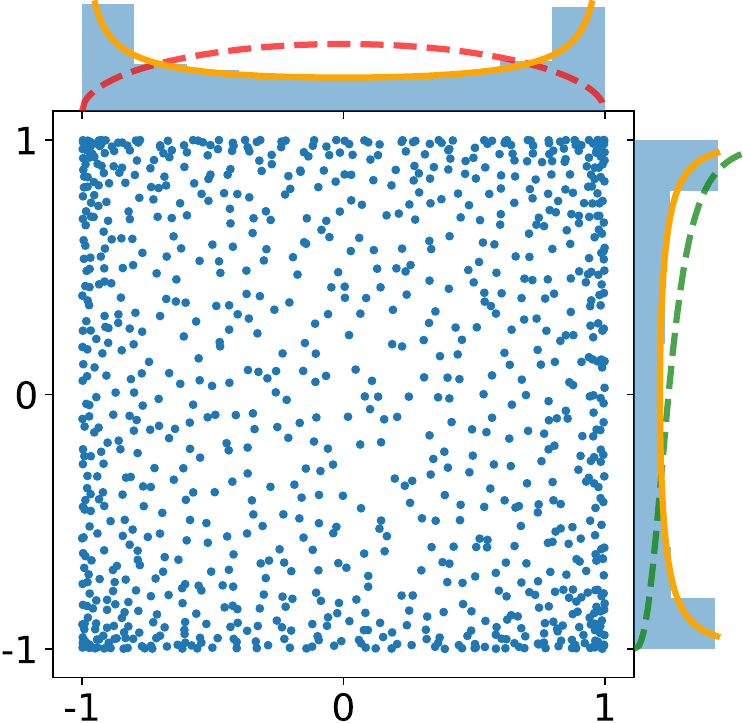}
        }
        \hspace{-1em}
        \subfigure[$\lrsp{\text{time}}$ to get one sample]{
        \label{fig:timings}
        \includegraphics[width=0.36\textwidth]
            {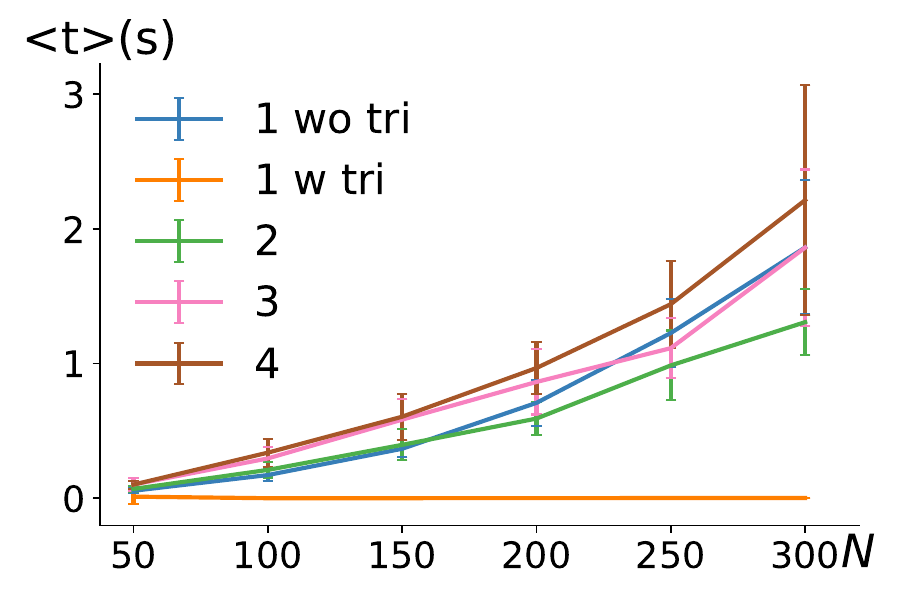}
        }
        \hspace{-1em}
        \subfigure[$\lrsp{\#\text{rejections}}$ to get one sample]{
        \label{fig:rejections}
        \includegraphics[width=0.36\textwidth]
            {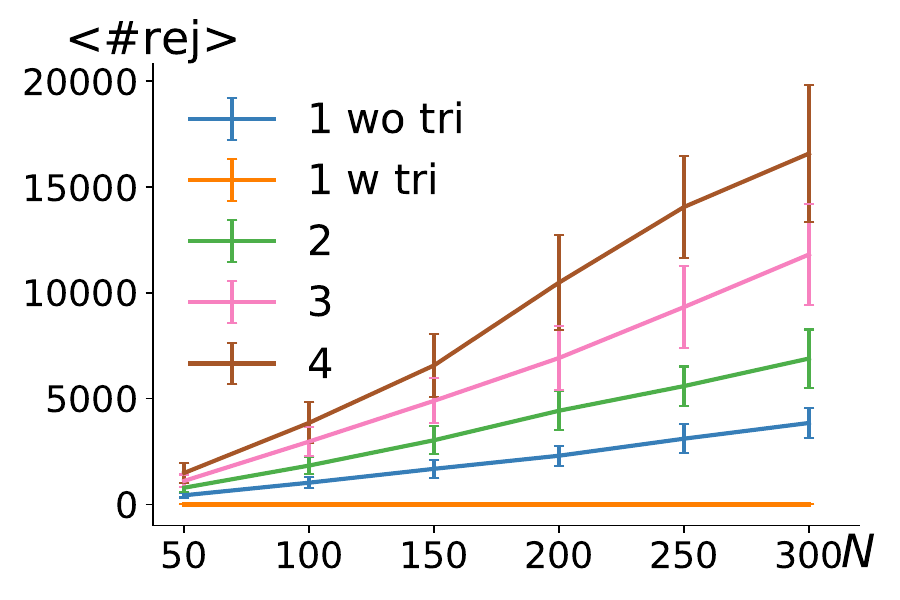}
        }
        \vspace{-0.5em}
        \caption{
        (a) A sample from a $2D$ Jacobi ensemble with $N=1000$ points.
        (b)-(c) $a^i, b^i=-1/2$, the colors and numbers correspond to the dimension.
        For $d=1$, the tridiagonal model (tri) of \citeauthor{KiNe04} offers tremendous time savings.
        (c) The total number of rejections grows as $2^d N \log(N)$.
        }
        \label{fig:timings_rejections}
        \vspace{-1.2em}
    \end{figure*}


\section{Empirical investigation}
\label{sec:XP}

    We perform three main sets of experiments to investigate the properties of the BH \eqref{eq:BH_estimator} and EZ \eqref{eq:EZ_estimator} estimators of the integral $\int f(x) \mu(\diff x)$.
    We add the baseline vanilla Monte Carlo, where points are drawn i.i.d.\,proportionally to $\mu$.
    The two estimators are built from the multivariate Jacobi ensemble, cf.\,\Secref{sub:multivariate_jacobi_ensemble}.
    First, we extend, for larger $N$, the experiments of \citet{BaHa19} illustrating their fast CLT \eqref{eq:BH_CLT} on a smooth function.
    Then, we illustrate \Thref{th:ermakov_zolotukhin_estimators} by considering polynomial functions that can be either fully or partially decomposed in the eigenbasis of the DPP kernel.
    Finally, we compare the potential of both estimators on various non smooth functions.
    \vspace{-1em}
    \begin{figure*}[ht]
        \subfigure[$d=1$]{
        \label{fig:bump_1D}
        \includegraphics[width=\fourfig]
            {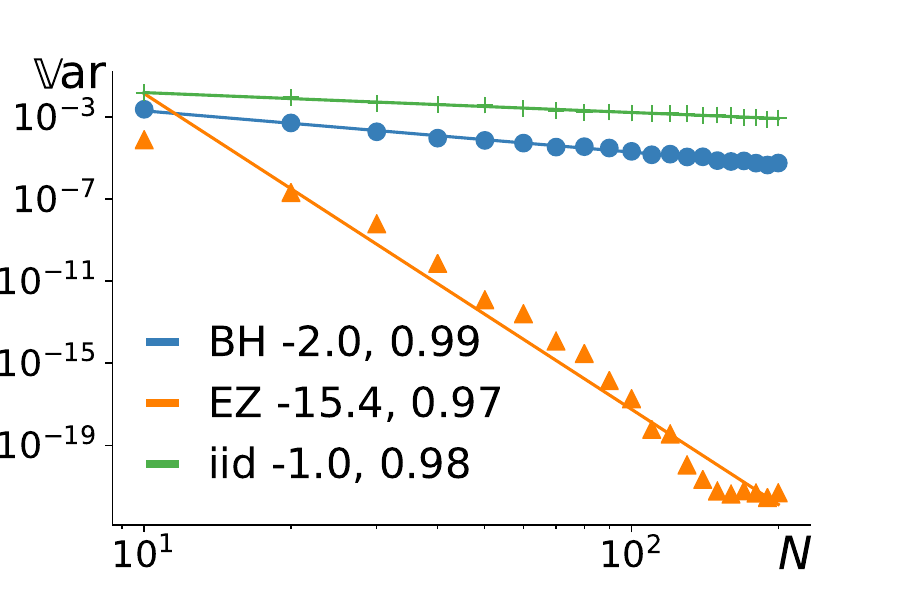}
        }
        \subfigure[$d=2$]{
        \label{fig:bump_2D}
        \includegraphics[width=\fourfig]
            {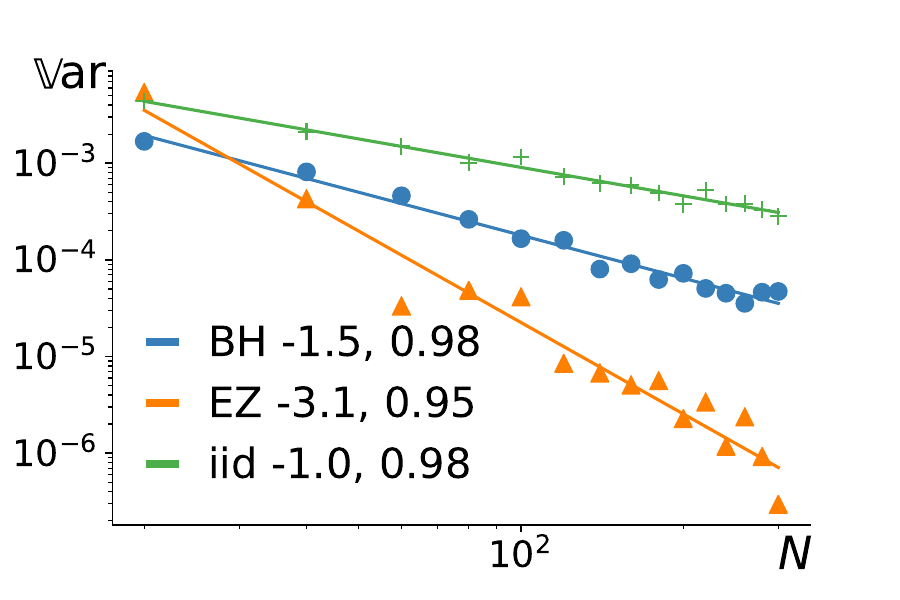}
        }
        \subfigure[$d=3$]{
        \label{fig:bump_3D}
        \includegraphics[width=\fourfig]
            {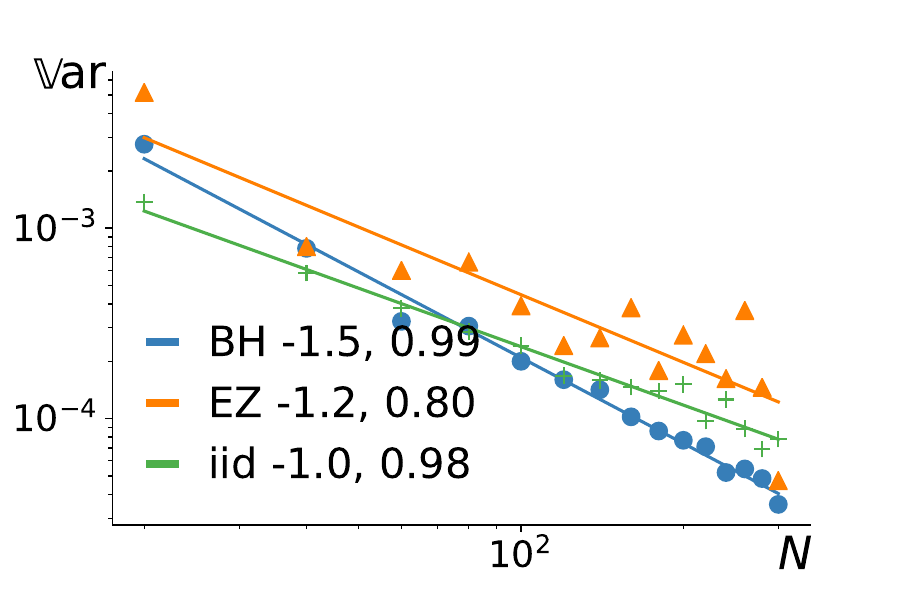}
        }
        \subfigure[$d=4$]{
        \label{fig:bump_4D}
        \includegraphics[width=\fourfig]
            {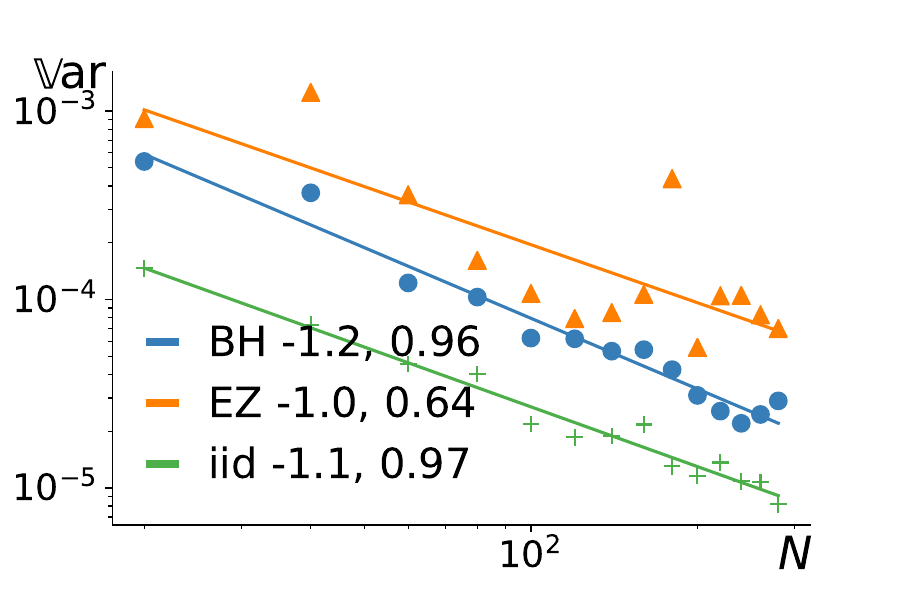}
        }
        \\[-0.4cm]
        \subfigure[$d=1$]{
        \label{fig:decay_1/k_N=70_Ndpp_1D}
        \includegraphics[width=\fourfig]
            {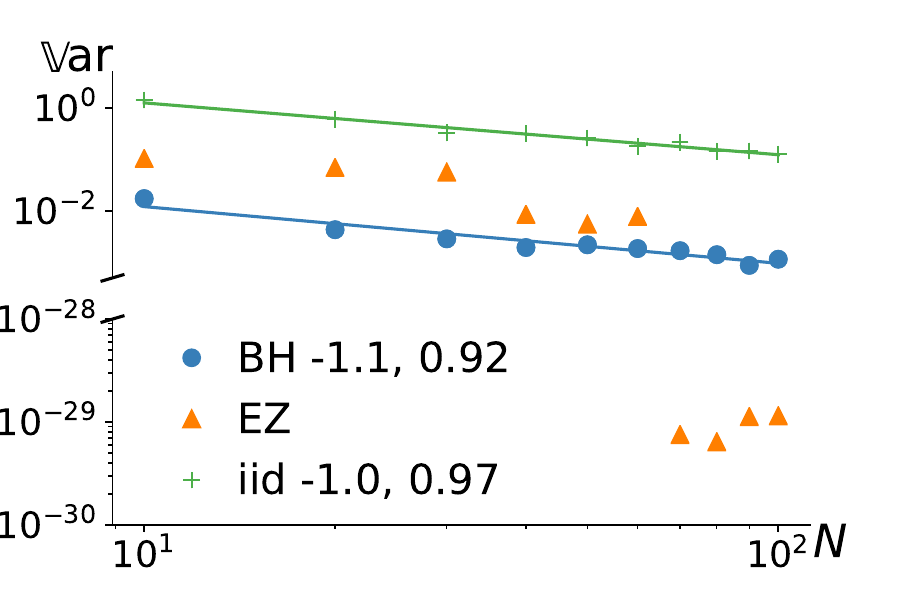}
        }
        \subfigure[$d=2$]{
        \label{fig:decay_1/k_N=70_Ndpp_2D}
        \includegraphics[width=\fourfig]
            {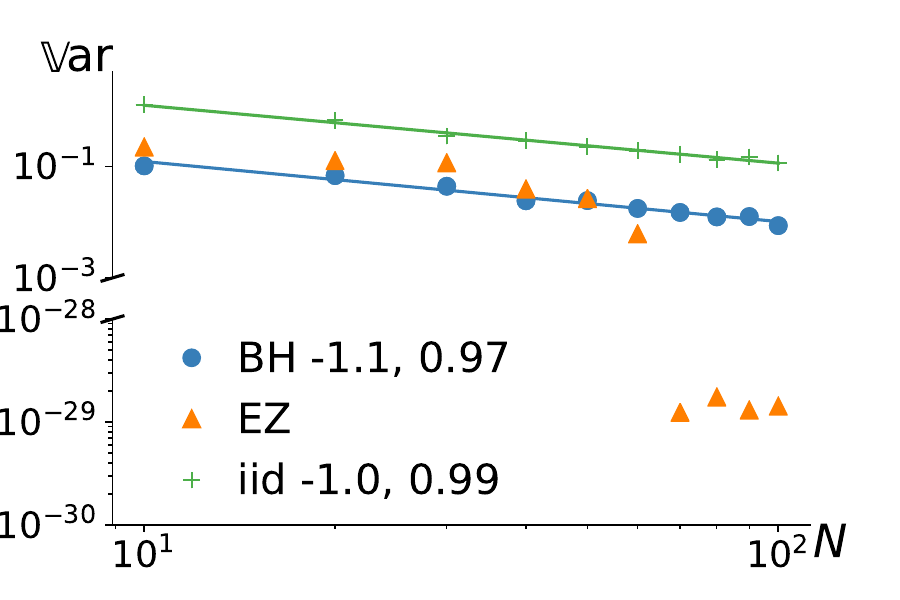}
        }
        \subfigure[$d=3$]{
        \label{fig:decay_1/k_N=70_Ndpp_3D}
        \includegraphics[width=\fourfig]
            {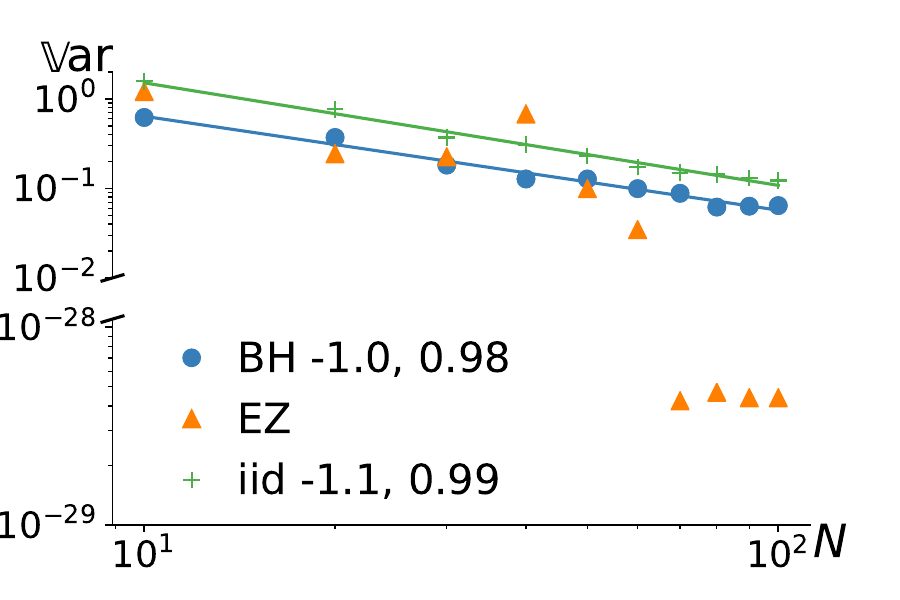}
        }
        \subfigure[$d=4$]{
        \label{fig:decay_1/k_N=70_Ndpp_4D}
        \includegraphics[width=\fourfig]
            {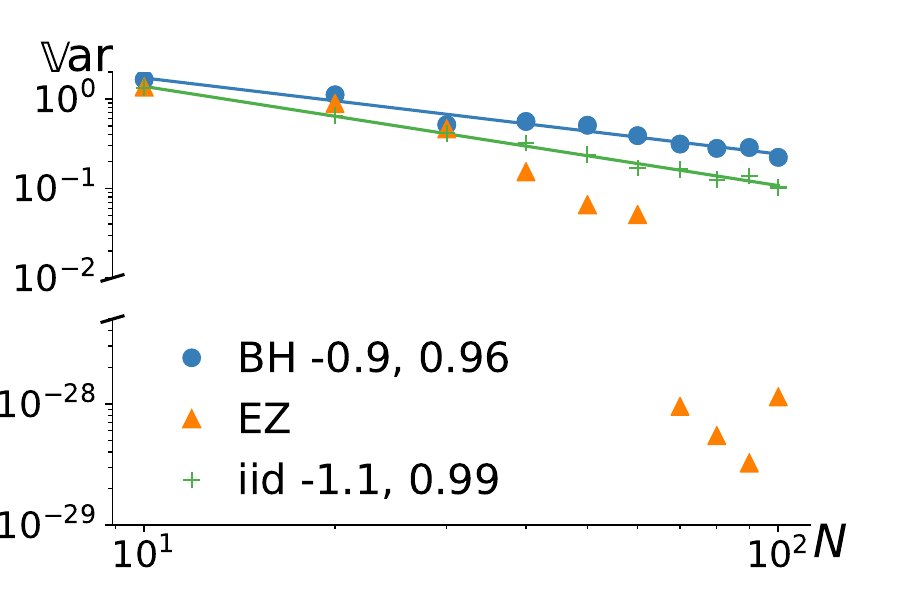}
        }
        \vspace{-0.5em}
        \caption{
            (a)-(d) cf.\,\Secref{sub:XP_bump}, the numbers in the legend are the slope and $R^2$
            (e)-(h) cf.\,\Secref{sub:XP_polynomials}.
            }
        \label{fig:bump_and_drop}
        \vspace{-1em}
    \end{figure*}

    \subsection{The bump experiment}
    \label{sub:XP_bump}

        \citet[Section 3]{BaHa19} illustrate the behavior of $\Iauth{BH}_N$ and its CLT \eqref{eq:BH_CLT} on a unimodal, smooth \emph{bump} function; see Appendix~\ref{sub:app_XP_bump}.
        The expected variance decay is of order $1/N^{1+1/d}$.
        We reproduce their experiment in \Figref{fig:bump_and_drop} for larger $N$, and compare with the behavior of $\Iauth{EZ}_N$.
        In short, $\Iauth{EZ}_N$ dramatically outperforms $\Iauth{BH}_N$ in $d\leq 2$, with surprisingly fast empirical convergence rates.
        When $d\geq 3$, performance decreases, and $\Iauth{BH}_N$ shows both faster and more regular variance decay.

        To know whether we can hope for a CLT for $\Iauth{EZ}_N$, we performed Kolmogorov-Smirnov tests for $N=300$, which yielded small $p$-values across dimensions, from 0.03 to 0.24.
        This is compared to the same $p$-values for $\Iauth{BH}_N$, which range from 0.60 to 0.99.
        The results are presented in \Appref{sub:app_XP_bump}.
        The lack of normality of $\Iauth{EZ}_N$ is partly due to a few outliers.
        Where these outliers come from is left for future work; ill-conditioning of the linear system \eqref{eq:EZ_linear_system} is an obvious candidate.
        Besides, contrary to $\Iauth{BH}_N$, the estimator $\Iauth{EZ}_N$ has no guarantee to preserve the sign of integrands having constant sign.

    \subsection{Integrating sums of eigenfunctions} 
    \label{sub:XP_polynomials}

        In the next series of experiments, we are mainly interested in testing the variance decay of $\Iauth{EZ}_N(f)$ prescribed by \Thref{th:ermakov_zolotukhin_estimators}.
        To that end, we consider functions of the form
        \begin{equation}
        \label{eq:f_xps_polys}
            f(x)
                = \sum\nolimits_{\mathfrak{b}(k)=0}^{M-1}
                    \frac{1}{\mathfrak{b}(k)+1}
                        \phi_{k}(x),
        \end{equation}
        whose integral w.r.t.\,$\mu$ is to be estimated based on realizations of the multivariate Jacobi ensemble with kernel
        $K_N(x,y)=\sum_{\mathfrak{b}(k)=0}^{N-1} \phi_{k}(x) \phi_{k}(y)$, where $N\neq M$ a priori.
        This means that the function~$f$ can be either fully ($M\leq N$) or partially ($M> N$) decomposed in the eigenbasis of the kernel.
        In both cases, we let the number of points $N$ used to build the two estimators vary from $10$ to $100$ in dimensions $d=1$ to $4$.
        In the first setting, we set $M = 70$.
        Thus, $N$ eventually reaches the number of functions used to build $f$ in \eqref{eq:f_xps_polys}, after what $\Iauth{EZ}_N$ is an exact estimator, see \Figref{fig:bump_and_drop}(e)-(h).
        The second setting has $M = N+1$, so that the number of points $N$ is never enough for the variance in \eqref{eq:expe_var_EZ} to be zero.
        The results of both settings are to be found in \Appref{sub:app_XP_polynomials}.

        In the first case, for each dimension $d$, we indeed observe a drop in the variance of $\Iauth{EZ}_N$ once the number of points of the DPP hits the threshold $N=M$.
        This is in perfect agreement with \Thref{th:ermakov_zolotukhin_estimators}: once $f\in \calH_{M} \subseteq \calH_{N}$, the variance in \eqref{eq:expe_var_EZ} is zero.
        In the second setting, as $N$ increases the contribution of the extra mode $\phi_{\mathfrak{b}^{-1}(N)}$ in \eqref{eq:f_xps_polys} decreases as $\frac{1}{N}$.
        Hence, from \Thref{th:ermakov_zolotukhin_estimators} we expect a variance decay of order $\frac{1}{N^2}$, which we observe in practice.

    \subsection{Further experiments} 
    \label{sub:further_experiments}

        In Appendices~\ref{sub:app_XP_absolute}-\ref{sub:integrating_mix} we test the robustness of both BH and EZ estimators, when applied to functions presenting discontinuities or which do not belong to the span of the eigenfunctions of the kernel.
        Although the conditions of the CLT \eqref{eq:BH_CLT} associated to $\Iauth{BH}$ are violated, the corresponding variance decay is smooth but not as fast.
        For $\Iauth{EZ}$, the performance deteriorates with the dimension.
        Indeed, the cross terms arising from the Taylor expansion of the different functions introduce monomials, associated to large coefficients, that do not belong to $\calH_N$.
        Sampling more points would reduce the variance \eqref{eq:expe_var_EZ}.
        But more importantly, for EZ to excel, this suggests to adapt the kernel  to the basis where the integrand is known to be sparse or to have fast-decaying coefficients.
        In regimes where BH and EZ do not shine, vanilla Monte Carlo becomes competitive for small values of $N$.


\section{Conclusion}

    \citet[EZ,][]{ErZo60} proposed a non-obvious unbiased Monte Carlo estimator using projection DPPs.
    It requires solving a linear system, which in turn involves evaluating both the $N$ eigenfunctions of the corresponding kernel and the integrand at the $N$ points of the DPP sample.
    This is yet another connection between DPPs and linear algebra.
    In fact, solving this linear system provides unbiased estimates of the Fourier-like coefficients of the integrand $f$ with each of the $N$ eigenfunctions of the DPP kernel.
    Remarkably, these estimators have identical variance, and this variance measures the accuracy of the approximation of $f$ by its projection onto these eigenfunctions.
    With modern arguments, we have provided a much shorter proof of these properties than in the original work of \citep{ErZo60}.
    Beyond this, little is known on the EZ estimator.
    While coming with a less interpretable variance, the more direct estimator proposed by \citet[BH,][]{BaHa19} has an intrinsic connection with the classical Gauss quadrature and further enjoys stronger theoretical properties when using multivariate Jacobi ensemble.

    Our experiments highlight the key features of both estimators when the underlying DPP is a multivariate Jacobi ensemble, and further demonstrate the known properties of the BH estimator in yet unexplored regimes.
    Although EZ shows a \emph{surprisingly fast} empirical convergence rate for $d\leq 2$, its behavior is more erratic for $d\geq3$.
    Ill-conditioning of the linear system is a potential source of outliers in the distribution of the estimator.
    Regularization may help but would introduce a stability/bias trade-off.
    More generally, EZ seems worth investigating for integration or even interpolation tasks where the function is known to be decomposable in the eigenbasis of the kernel, i.e., in a setting similar to the one of \citet{Bac17}.
    Finally, the new implementation of an exact sampler for multivariate Jacobi ensemble unlocks more large-scale empirical investigations and asks for more theoretical work.
    The associated code\footnoteref{fn:dppy} is available in the DPPy toolbox of \citet{GPBV19}.



\clearpage
\bibliography{biblio} 

\begin{thebibliography}{31}
\providecommand{\natexlab}[1]{#1}
\providecommand{\url}[1]{\texttt{#1}}
\expandafter\ifx\csname urlstyle\endcsname\relax
  \providecommand{\doi}[1]{doi: #1}\else
  \providecommand{\doi}{doi: \begingroup \urlstyle{rm}\Url}\fi

\bibitem[Bach(2017)]{Bac17}
Francis Bach.
\newblock {On the Equivalence between Kernel Quadrature Rules and Random
  Feature Expansions}.
\newblock \emph{Journal of Machine Learning Research}, 18\penalty0
  (21):\penalty0 1--38, 2017.
\newblock URL \url{http://jmlr.org/papers/v18/15-178.html}.

\bibitem[Bach et~al.(2012)Bach, Lacoste-Julien, and Obozinski]{BaLaOb12}
Francis Bach, Simon Lacoste-Julien, and Guillaume Obozinski.
\newblock {On the Equivalence between Herding and Conditional Gradient
  Algorithms}.
\newblock In \emph{International Conference on Machine Learning (ICML)}, 2012.
\newblock URL \url{https://icml.cc/2012/papers/683.pdf}.

\bibitem[Bardenet and Hardy(2019)]{BaHa19}
R{\'{e}}mi Bardenet and Adrien Hardy.
\newblock {Monte Carlo with Determinantal Point Processes}.
\newblock \emph{Annals of Applied Probability, in press}, 2019.
\newblock URL \url{http://arxiv.org/abs/1605.00361}.

\bibitem[Briol et~al.(2015)Briol, Oates, Girolami, and Osborne]{BOGO15}
Fran{\c{c}}ois-Xavier Briol, Chris~J. Oates, Mark Girolami, and Michael~A.
  Osborne.
\newblock {Frank-Wolfe Bayesian Quadrature: Probabilistic Integration with
  Theoretical Guarantees}.
\newblock In \emph{Advances in Neural Information Processing Systems
  (NeurIPS)}, pages 1162--1170, jun 2015.
\newblock URL
  \url{https://papers.nips.cc/paper/5749-frank-wolfe-bayesian-quadrature-probabilistic-integration-with-theoretical-guarantees}.

\bibitem[Chen et~al.(2010)Chen, Welling, and Smola]{ChWeSm10}
Yutian Chen, Max Welling, and Alex Smola.
\newblock {Super-Samples from Kernel Herding}.
\newblock In \emph{Conference on Uncertainty in Artificial Intelligence (UAI)},
  2010.
\newblock ISBN 9780974903965.
\newblock URL \url{https://dl.acm.org/citation.cfm?id=3023562}.

\bibitem[Chow et~al.(1994)Chow, Gatteschi, and Wong]{ChGaWo94}
Yunshyong Chow, L.~Gatteschi, and R.~Wong.
\newblock {A Bernstein-type inequality for the Jacobi polynomial}.
\newblock \emph{Proceedings of the American Mathematical Society}, 121\penalty0
  (3):\penalty0 703--703, 1994.
\newblock ISSN 0002-9939.
\newblock \doi{10.1090/S0002-9939-1994-1209419-X}.
\newblock URL
  \url{http://www.ams.org/jourcgi/jour-getitem?pii=S0002-9939-1994-1209419-X}.

\bibitem[Davis and Rabinowitz(1984)]{DaRa84}
Philip~J. Davis and Philip. Rabinowitz.
\newblock \emph{{Methods of numerical integration}}.
\newblock Academic Press, 1984.
\newblock ISBN 9780122063602.
\newblock URL \url{https://doi.org/10.1016/C2013-0-10566-1}.

\bibitem[Delyon and Portier(2016)]{DePo16}
Bernard Delyon and Fran{\c{c}}ois Portier.
\newblock {Integral approximation by kernel smoothing}.
\newblock \emph{Bernoulli}, 22\penalty0 (4):\penalty0 2177--2208, nov 2016.
\newblock \doi{10.3150/15-BEJ725}.
\newblock URL \url{http://projecteuclid.org/euclid.bj/1462297679}.

\bibitem[Dick and Pillichshammer(2010)]{DiPi10}
Jospeh Dick and Friedrich Pillichshammer.
\newblock \emph{{Digital nets and sequences : discrepancy and quasi-Monte Carlo
  integration}}.
\newblock Cambridge University Press, 2010.
\newblock ISBN 9780521191593.
\newblock URL
  \url{https://www.cambridge.org/vi/academic/subjects/computer-science/algorithmics-complexity-computer-algebra-and-computational-g/digital-nets-and-sequences-discrepancy-theory-and-quasimonte-carlo-integration?format=HB}.

\bibitem[Ermakov and Zolotukhin(1960)]{ErZo60}
S.~M. Ermakov and V.~G. Zolotukhin.
\newblock {Polynomial Approximations and the Monte-Carlo Method}.
\newblock \emph{Theory of Probability {\&} Its Applications}, 5\penalty0
  (4):\penalty0 428--431, jan 1960.
\newblock ISSN 0040-585X.
\newblock \doi{10.1137/1105046}.
\newblock URL \url{http://epubs.siam.org/doi/10.1137/1105046}.

\bibitem[Evans and Swartz(2000)]{EvSc00}
Michael Evans and Tim Swartz.
\newblock \emph{{Approximating integrals via Monte Carlo and deterministic
  methods}}.
\newblock Oxford University Press, 2000.
\newblock ISBN 9780198502784.
\newblock URL
  \url{https://global.oup.com/academic/product/approximating-integrals-via-monte-carlo-and-deterministic-methods-9780198502784}.

\bibitem[Gautier et~al.(2019)Gautier, Polito, Bardenet, and Valko]{GPBV19}
Guillaume Gautier, Guillermo Polito, R{\'{e}}mi Bardenet, and Michal Valko.
\newblock {DPPy: DPP Sampling with Python}.
\newblock \emph{Journal of Machine Learning Research - Machine Learning Open
  Source Software (JMLR-MLOSS), in press}, 2019.
\newblock URL \url{http://arxiv.org/abs/1809.07258}.

\bibitem[Gautschi(2009)]{Gau09}
Walter Gautschi.
\newblock {How sharp is Bernstein's Inequality for Jacobi polynomials?}
\newblock \emph{Electronic Transactions on Numerical Analysis}, 36:\penalty0
  1--8, 2009.
\newblock URL
  \url{http://emis.ams.org/journals/ETNA/vol.36.2009-2010/pp1-8.dir/pp1-8.pdf}.

\bibitem[Hough et~al.(2006)Hough, Krishnapur, Peres, and Vir{\'{a}}g]{HKPV06}
J.~Ben Hough, Manjunath Krishnapur, Yuval Peres, and B{\'{a}}lint Vir{\'{a}}g.
\newblock {Determinantal Processes and Independence}.
\newblock In \emph{Probability Surveys}, volume~3, pages 206--229. The
  Institute of Mathematical Statistics and the Bernoulli Society, 2006.
\newblock \doi{10.1214/154957806000000078}.
\newblock URL \url{http://arxiv.org/abs/math/0503110}.

\bibitem[Husz{\'{a}}r and Duvenaud(2012)]{HuDu12}
Ferenc Husz{\'{a}}r and David Duvenaud.
\newblock {Optimally-Weighted Herding is Bayesian Quadrature}.
\newblock In \emph{Conference on Uncertainty in Artificial Intelligence (UAI)},
  2012.
\newblock ISBN 9780974903989.
\newblock URL \url{https://dl.acm.org/citation.cfm?id=3020694}.

\bibitem[Johansson(2006)]{Joh06}
Kurt Johansson.
\newblock {Random matrices and determinantal processes}.
\newblock \emph{Les Houches Summer School Proceedings}, 83\penalty0
  (C):\penalty0 1--56, 2006.
\newblock ISSN 09248099.
\newblock \doi{10.1016/S0924-8099(06)80038-7}.

\bibitem[Killip and Nenciu(2004)]{KiNe04}
Rowan Killip and Irina Nenciu.
\newblock {Matrix models for circular ensembles}.
\newblock \emph{International Mathematics Research Notices}, 2004\penalty0
  (50):\penalty0 2665, 2004.
\newblock ISSN 1073-7928.
\newblock \doi{10.1155/S1073792804141597}.
\newblock URL
  \url{https://academic.oup.com/imrn/article-lookup/doi/10.1155/S1073792804141597}.

\bibitem[K{\"{o}}nig(2004)]{Kon05}
Wolfgang K{\"{o}}nig.
\newblock {Orthogonal polynomial ensembles in probability theory}.
\newblock \emph{Probability Surveys}, 2:\penalty0 385--447, 2004.
\newblock ISSN 1549-5787.
\newblock \doi{10.1214/154957805100000177}.
\newblock URL \url{http://arxiv.org/abs/math/0403090}.

\bibitem[Kulesza and Taskar(2012)]{KuTa12}
Alex Kulesza and Ben Taskar.
\newblock {Determinantal Point Processes for Machine Learning}.
\newblock \emph{Foundations and Trends in Machine Learning}, 5\penalty0
  (2-3):\penalty0 123--286, 2012.
\newblock ISSN 1935-8237.
\newblock \doi{10.1561/2200000044}.
\newblock URL \url{http://arxiv.org/abs/1207.6083}.

\bibitem[Lavancier et~al.(2012)Lavancier, M{\o}ller, and Rubak]{LaMoRu15}
Fr{\'{e}}d{\'{e}}ric Lavancier, Jesper M{\o}ller, and Ege Rubak.
\newblock {Determinantal point process models and statistical inference :
  Extended version}.
\newblock \emph{Journal of the Royal Statistical Society. Series B: Statistical
  Methodology}, 77\penalty0 (4):\penalty0 853--877, may 2012.
\newblock \doi{10.1111/rssb.12096}.
\newblock URL
  \url{https://rss.onlinelibrary.wiley.com/doi/abs/10.1111/rssb.12096}.

\bibitem[Liu and Lee(2017)]{LiLe17}
Qiang Liu and Jason~D Lee.
\newblock {Black-Box Importance Sampling}.
\newblock In \emph{Internation Conference on Artificial Intelligence and
  Statistics (AISTATS)}, 2017.
\newblock URL \url{http://proceedings.mlr.press/v54/liu17b.html}.

\bibitem[Macchi(1975)]{Mac75}
Odile Macchi.
\newblock {The coincidence approach to stochastic point processes}.
\newblock \emph{Advances in Applied Probability}, 7\penalty0 (01):\penalty0
  83--122, mar 1975.
\newblock ISSN 0001-8678.
\newblock \doi{10.2307/1425855}.
\newblock URL
  \url{https://www.cambridge.org/core/product/identifier/S0001867800040313/type/journal{\_}article}.

\bibitem[Mallat and Peyr{\'{e}}(2009)]{MaPe09}
St{\'{e}}phane Mallat and Gabriel Peyr{\'{e}}.
\newblock \emph{{A wavelet tour of signal processing : the sparse way}}.
\newblock Elsevier/Academic Press, 2009.
\newblock ISBN 9780123743701.
\newblock URL
  \url{https://www.sciencedirect.com/book/9780123743701/a-wavelet-tour-of-signal-processing}.

\bibitem[Mazoyer et~al.(2019)Mazoyer, Coeurjolly, and Amblard]{MaCoAm19}
Adrien Mazoyer, Jean-Fran{\c{c}}ois Coeurjolly, and Pierre-Olivier Amblard.
\newblock {Projections of determinantal point processes}.
\newblock \emph{ArXiv e-prints}, 2019.
\newblock URL \url{https://arxiv.org/pdf/1901.02099.pdf}.

\bibitem[Oates et~al.(2017)Oates, Girolami, and Chopin]{OaGiCh17}
Chris~J. Oates, Mark Girolami, and Nicolas Chopin.
\newblock {Control functionals for Monte Carlo integration}.
\newblock \emph{Journal of the Royal Statistical Society: Series B (Statistical
  Methodology)}, 79\penalty0 (3):\penalty0 695--718, jun 2017.
\newblock \doi{10.1111/rssb.12185}.
\newblock URL \url{http://doi.wiley.com/10.1111/rssb.12185}.

\bibitem[O'Hagan(1991)]{Hag91}
A.~O'Hagan.
\newblock {Bayes–Hermite quadrature}.
\newblock \emph{Journal of Statistical Planning and Inference}, 29\penalty0
  (3):\penalty0 245--260, nov 1991.
\newblock \doi{10.1016/0378-3758(91)90002-V}.
\newblock URL
  \url{https://www.sciencedirect.com/science/article/pii/037837589190002V}.

\bibitem[Rasmussen and Williams(2006)]{RaWi06}
Carl~Edward. Rasmussen and Christopher K.~I. Williams.
\newblock \emph{{Gaussian processes for machine learning}}.
\newblock MIT Press, 2006.
\newblock ISBN 026218253X.
\newblock URL \url{http://www.gaussianprocess.org/gpml/}.

\bibitem[Robert(2007)]{Rob07}
Christian~P. Robert.
\newblock \emph{{The Bayesian choice : from decision-theoretic foundations to
  computational implementation}}.
\newblock Springer, 2007.
\newblock ISBN 9780387952314.
\newblock \doi{10.1007/0-387-71599-1}.
\newblock URL \url{https://www.springer.com/la/book/9780387952314}.

\bibitem[Robert and Casella(2004)]{RoCa04}
Christian~P. Robert and George. Casella.
\newblock \emph{{Monte Carlo statistical methods}}.
\newblock Springer-Verlag New York, 2004.
\newblock ISBN 9781441919397.
\newblock \doi{10.1007/978-1-4757-4145-2}.

\bibitem[Simon(2011)]{Sim11}
B.~Simon.
\newblock \emph{Szegő's Theorem and its Descendants: Spectral Theory for $L^2$
  Perturbations of Orthogonal Polynomials}.
\newblock M. B. Porter Lecture Series, Princeton Univ. Press, Princeton, NJ,
  2011.
\newblock URL
  \url{https://press.princeton.edu/books/hardcover/9780691147048/szegos-theorem-and-its-descendants}.

\bibitem[Soshnikov(2000)]{Sos00}
Alexander Soshnikov.
\newblock {Determinantal random point fields}.
\newblock \emph{Russian Mathematical Surveys}, 55\penalty0 (5):\penalty0
  923--975, feb 2000.
\newblock ISSN 0042-1316.
\newblock \doi{10.1070/RM2000v055n05ABEH000321}.
\newblock URL \url{http://dx.doi.org/10.1070/RM2000v055n05ABEH000321}.

\end{thebibliography}
\bibliographystyle{plainnat}

\clearpage

\appendix

\renewcommand{\thefigure}{\Alph{figure}}
\renewcommand{\thetheorem}{\Alph{theorem}}
\renewcommand{\thelemma}{\Alph{lemma}}
\numberwithin{equation}{section}
\numberwithin{figure}{section}

\section{Methodology} 
\label{sec:method}

    \subsection{The generalized Cauchy-Binet formula: a modern argument} 
    \label{sub:normalization_cauchy_binet}

        \citet[Section 2.2]{Joh06} developed a natural way to build DPPs associated to projection (potentially non-Hermitian) kernels.
        In this part, we focus on the generalization of the Cauchy-Binet formula \citep[Proposition 2.10]{Joh06}.
        Its usefulness is twofold for our purpose.
        First, it serves to justify the fact that the normalization constant of the joint distribution \eqref{eq:joint_distribution_projection_dpp}  is one, i.e., it is indeed a probability distribution.
        Second, we use it as a modern and simple argument to prove a slight improvement of the result of \citet{ErZo60}, cf.\,\Thref{th:ermakov_zolotukhin_estimators}.
        An extended version of the proof is given in \Appref{sub:app_proof_ErZo_th}.

        \begin{lemma}\citep[Proposition 2.10]{Joh06}
        \label{lem:app_cauchy_binet}
            Let $(\bbX, \calB, \mu)$ be a measurable space and consider measurable functions $\phi_{0}, \dots, \phi_{N-1}$ and $\psi_{0}, \dots, \psi_{N-1}$, such that $\phi_k \psi_{\ell} \in L^1(\mu)$.
            Then,
            \begin{equation}
            \label{eq:app_cauchy_binet}
                \det
                    \lrp{
                        \lrsp{\phi_k, \psi_{\ell}}
                        }_{k,\ell=0}^{N-1}
                    = \frac{1}{N!}
                        \int
                        \det \bfPhi(x_{1:N})
                        \det \bfPsi(x_{1:N}) \, \mu^{\otimes N}( \diff x),
            \end{equation}
            where
            \begin{equation*}
                \bfPhi(x_{1:N})
                =
                \begin{pmatrix}
                    \phi_0(x_1) & \dots  & \phi_{N-1}(x_1)\\
                        \vdots     &        & \vdots\\
                    \phi_0(x_N) & \dots  & \phi_{N-1}(x_N)
                \end{pmatrix}
                \quad\text{and}\quad
                \bfPsi(x_{1:N})
                =
                \begin{pmatrix}
                    \psi_0(x_1) & \dots  & \psi_{N-1}(x_1)\\
                        \vdots     &        & \vdots\\
                    \psi_0(x_N) & \dots  & \psi_{N-1}(x_N)
                \end{pmatrix}
            \end{equation*}
        \end{lemma}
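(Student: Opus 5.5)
The plan is to expand both determinants with the Leibniz formula, integrate term by term, and recognize the result as $N!$ copies of the Leibniz expansion of $\det(\lrsp{\phi_k,\psi_\ell})$.

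First, write
\[
\det \bfPhi(x_{1:N}) = \sum_{\sigma\in\mathfrak{S}_N} \operatorname{sgn}(\sigma) \prod_{n=1}^N \phi_{\sigma(n)-1}(x_n),
\]
and similarly for $\det\bfPsi(x_{1:N})$ with a permutation $\tau$. The product of the two determinants is then a finite double sum over $(\sigma,\tau)$ of terms of the form
\[
\operatorname{sgn}(\sigma)\operatorname{sgn}(\tau)\prod_{n=1}^N \phi_{\sigma(n)-1}(x_n)\,\psi_{\tau(n)-1}(x_n).
\]
Each such term is a product of functions of separate variables. Each factor $\phi_k\psi_\ell$ is in $L^1(\mu)$ by assumption, so each term is in $L^1(\mu^{\otimes N})$ by Fubini--Tonelli. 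Hence the integral may be exchanged with the finite sum, and each term factorizes into $\prod_{n=1}^N \lrsp{\phi_{\sigma(n)-1},\psi_{\tau(n)-1}}$.

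Second, reindex the double sum. For fixed $(\sigma,\tau)$, set $\rho=\tau\circ\sigma^{-1}$ and relabel the product over $n$ by $k=\sigma(n)-1$. The product becomes $\prod_{k}\lrsp{\phi_k,\psi_{\rho(k+1)-1}}$, and the sign becomes $\operatorname{sgn}(\sigma)\operatorname{sgn}(\tau)=\operatorname{sgn}(\rho)$. For each $\rho$ there are exactly $N!$ pairs $(\sigma,\tau)$ yielding it. The integral therefore equals
\[
N!\sum_{\rho}\operatorname{sgn}(\rho)\prod_k \lrsp{\phi_k,\psi_{\rho(k+1)-1}} = N!\det\lrp{\lrsp{\phi_k,\psi_\ell}}_{k,\ell=0}^{N-1}.
\]
Dividing by $N!$ gives \eqref{eq:app_cauchy_binet}.

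The only step needing care is the bookkeeping in the reindexing: checking that the sign is multiplicative and that the map $(\sigma,\tau)\mapsto(\sigma,\tau\circ\sigma^{-1})$ is a bijection of $\mathfrak{S}_N^2$. The integrability justification is routine given the hypothesis $\phi_k\psi_\ell\in L^1(\mu)$.
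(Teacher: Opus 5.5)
Your proof is correct: expanding both determinants by the Leibniz formula, integrating term by term (each term is a product of functions $\phi_k\psi_\ell\in L^1(\mu)$ of separate variables, so Tonelli--Fubini applies), and collapsing the double sum through $\rho=\tau\circ\sigma^{-1}$ is the standard proof of this identity. The paper gives no proof of its own and simply cites Johansson's Proposition 2.10, which rests on this same permutation-expansion argument, so your route matches the one the paper relies on.
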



    \subsection{Proof of \Thref{th:ermakov_zolotukhin_estimators}}
    \label{sub:app_proof_ErZo_th}

        First, we recall the result of \citet{ErZo60}, cf.\,\Thref{th:ermakov_zolotukhin_estimators}.
        Then, we provide a modern proof based on the generalization of the Cauchy-Binet formula, cf.\,\Lemref{lem:app_cauchy_binet}, where we exploit the orthonormality of the eigenfunctions of the kernel.
        \begin{theorem}
        \label{th:app_ermakov_zolotukhin_estimators}
            Consider $f\in L^2(\mu)$ and $N$ functions $\phi_0, \dots, \phi_{N-1}\in L^2(\mu)$ orthonormal w.r.t.\,$\mu$, i.e.,
            \begin{equation}
                \label{eq:app_orthonormality}
                \lrsp{\phi_k, \phi_{\ell}}
                    \triangleq \int \phi_k(x) \phi_{\ell}(x) \mu(\diff x)
                    = \delta_{k\ell},
                    \quad \forall 0\leq k, \ell \leq N-1.
            \end{equation}
            Let $\{\bfx_1,\ldots,\bfx_N\} \sim \DPP(\mu, K_N)$, with projection kernel $K_N(x,y)=\sum_{k=0}^{N-1} \phi_k(x)\phi_k(y)$.
            That is to say $\lrp{\bfx_{1}, \dots, \bfx_{N}}$ has joint distribution
            \begin{equation}
            \label{eq:app_joint_distribution_projection_dpp}
                \frac1{N!}
                    \det\lrp{K_N(x_p, x_q)}_{p, q=1}^N
                    \, \mu^{\otimes N}(\diff x)
                = \frac1{N!}
                    \lrp{\det\bfPhi(x_{1:N})}^2
                    \, \mu^{\otimes N}(\diff x).
            \end{equation}
            Consider the linear system $\bfPhi(\bfx_{1:N}) y = f(\bfx_{1:N})$, that is,
            \begin{equation}
                \label{eq:app_EZ_linear_system}
                \begin{pmatrix}
                    \phi_0(\bfx_1) & \dots  & \phi_{N-1}(\bfx_1)\\
                        \vdots     &        & \vdots\\
                    \phi_0(\bfx_N) & \dots  & \phi_{N-1}(\bfx_N)
                \end{pmatrix}
                \begin{pmatrix}
                    y_1\\
                    \vdots\\
                    y_N
                \end{pmatrix}
                =
                \begin{pmatrix}
                    f(\bfx_1)\\
                    \vdots\\
                    f(\bfx_N)
                \end{pmatrix}.
            \end{equation}
            Then, the solution of \eqref{eq:app_EZ_linear_system} is unique, $\mu$-almost surely, with coordinates
            \begin{equation}
                \label{eq:app_coordinates_EZ_theorem}
                y_k
                =
                \frac{\det \bfPhi_{\phi_{k-1}, f}(\bfx_{1:N})}
                     {\det \bfPhi(\bfx_{1:N})}
                \CommaBin
            \end{equation}
            where $\bfPhi_{\phi_{k-1}, f}(\bfx_{1:N})$ is the matrix obtained by replacing the $k$-th column of $\bfPhi(\bfx_{1:N})$ by $f(\bfx_{1:N})$.
            Moreover, for all $1\leq k\leq N$, the coordinate $y_k$ of the solution vector satisfies
            \begin{equation}
                \label{eq:app_expe_var_EZ}
                \Expe[y_k]
                    = \lrsp{f, \phi_{k-1}},
                \quad\text{and}\quad
                \Var[y_k]
                    = \lrnorm{f}^2
                       - \sum_{\ell=0}^{N-1}
                         \lrsp{f, \phi_{\ell}}^2.
            \end{equation}
            We improved the original result by showing that $\Cov[y_j, y_k]= 0$, for all $1\leq j\neq k\leq N$.
        \end{theorem}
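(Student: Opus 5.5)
The proof rests on the generalized Cauchy--Binet formula of \Lemref{lem:app_cauchy_binet}, combined with the fact that the joint density \eqref{eq:app_joint_distribution_projection_dpp} is proportional to $(\det\bfPhi(x_{1:N}))^2$.

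\textbf{Uniqueness and Cramer's rule.} The event $\{\det\bfPhi(\bfx_{1:N})=0\}$ has probability $\frac1{N!}\int \indic_{\{\det\bfPhi=0\}}(\det\bfPhi)^2\,\diff\mu^{\otimes N}=0$. Hence $\bfPhi(\bfx_{1:N})$ is almost surely invertible, the solution of \eqref{eq:app_EZ_linear_system} is unique, and Cramer's rule gives \eqref{eq:app_coordinates_EZ_theorem}.

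\textbf{First moment.} For the first moment, the denominator $\det\bfPhi$ cancels once against the density:
\[
\Expe[y_k]=\frac1{N!}\int \det\bfPhi_{\phi_{k-1},f}(x_{1:N})\det\bfPhi(x_{1:N})\,\mu^{\otimes N}(\diff x).
\]
Apply \Lemref{lem:app_cauchy_binet} with the family $(\phi_0,\dots,\phi_{k-2},f,\phi_k,\dots,\phi_{N-1})$ against $(\phi_0,\dots,\phi_{N-1})$. By \eqref{eq:app_orthonormality}, the Gram matrix is the identity except for row $k$, which is $(\lrsp{f,\phi_\ell})_\ell$. Its determinant is the diagonal entry $\lrsp{f,\phi_{k-1}}$.

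\textbf{Second moments.} The density now cancels both denominators:
\[
\Expe[y_jy_k]=\frac1{N!}\int \det\bfPhi_{\phi_{j-1},f}\det\bfPhi_{\phi_{k-1},f}\,\diff\mu^{\otimes N},
\]
which by the lemma is the determinant of the Gram matrix $G$ between the two modified families.

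\emph{Case $j=k$.} $G$ is the identity except for row and column $k$. Row $k$ is $(\lrsp{f,\phi_\ell})_\ell$, column $k$ is $(\lrsp{f,\phi_\ell})_\ell$, and the diagonal entry is $\lrnorm{f}^2$. A Schur complement (or expansion along row $k$) gives $\lrnorm{f}^2-\sum_{\ell\neq k-1}\lrsp{f,\phi_\ell}^2$. Subtracting $\lrsp{f,\phi_{k-1}}^2$ yields the stated variance.

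\emph{Case $j\neq k$.} Index the rows by the first family and the columns by the second. All rows other than row $j$ and all columns other than column $k$ are standard basis vectors, except that row $k$ (which corresponds to $\phi_{k-1}$) has $\lrsp{\phi_{k-1},f}$ in column $k$ and $1$ nowhere else, since $\phi_{k-1}$ is missing from the second family. Similarly, column $j$ has $\lrsp{f,\phi_{j-1}}$ in row $j$ and zeros in the other $\phi$-rows. In particular, entry $(j,j)$ is $\lrsp{f,\phi_{j-1}}$ and entry $(k,k)$ is $\lrsp{\phi_{k-1},f}$.

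To evaluate $\det G$, expand along the rows and columns indexed by $\ell\notin\{j,k\}$, which are unit vectors. This reduces $G$ to the $2\times2$ block on indices $\{j,k\}$:
\[
\begin{pmatrix} \lrsp{f,\phi_{j-1}} & \lrnorm{f}^2\\ 0 & \lrsp{\phi_{k-1},f}\end{pmatrix}.
\]
The $(k,j)$ entry is $\lrsp{\phi_{k-1},\phi_{j-1}}=0$. So $\det G=\lrsp{f,\phi_{j-1}}\lrsp{f,\phi_{k-1}}=\Expe[y_j]\Expe[y_k]$, and therefore $\Cov[y_j,y_k]=0$.

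\textbf{Main obstacle.} The main difficulty is this covariance bookkeeping: correctly placing $f$ in the two families and checking that the reduction to the $2\times2$ block is valid (signs and permutations). Integrability is routine: the products $f\phi_\ell$ and $f^2$ lie in $L^1(\mu)$ because $f,\phi_\ell\in L^2(\mu)$, so \Lemref{lem:app_cauchy_binet} applies.
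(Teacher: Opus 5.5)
Your route is the paper's: apply \Lemref{lem:app_cauchy_binet} to the families in which $f$ replaces $\phi_{k-1}$ (resp.\ $\phi_{j-1}$), then use orthonormality and evaluate the resulting determinant. You treat general indices, whereas the paper writes out only $k=1$ and $(j,k)=(1,2)$. Uniqueness, the first moment and the variance are correct as written.

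The one faulty step is your evaluation of $\det G$ for $j\neq k$. The rows and columns indexed by $\ell\notin\{j,k\}$ are not unit vectors. With $e_\ell$ the $\ell$-th standard basis vector, row $\ell$ of $G$ is $e_\ell^\top+\lrsp{\phi_{\ell-1},f}\,e_k^\top$, because $\phi_{\ell-1}$ is paired with the $f$ sitting in column $k$. Likewise, column $\ell$ is $e_\ell+\lrsp{f,\phi_{\ell-1}}\,e_j$. The paper's $(1,2)$ matrix shows exactly these extra entries: $\lrsp{\phi_\ell,f}$ in its second column and $\lrsp{f,\phi_\ell}$ in its first row. So expanding along those rows does not simply strip them away. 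The equality of $\det G$ with the determinant of the principal $2\times 2$ block on $\{j,k\}$ does hold, but only because that block turns out to be triangular, not for the reason you give.

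The fix uses facts you had already established. Column $j$ has a single nonzero entry, $\lrsp{f,\phi_{j-1}}$ in row $j$. Row $k$ has a single nonzero entry, $\lrsp{\phi_{k-1},f}$ in column $k$. Expand along column $j$ and then along row $k$. Both pivots sit at diagonal positions, so no signs appear. What remains is the block indexed by $\ell\notin\{j,k\}$, which is the identity, hence $\det G=\lrsp{f,\phi_{j-1}}\lrsp{f,\phi_{k-1}}$.

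Equivalently, take the Schur complement of that identity block. It only replaces the $(j,k)$ entry $\lrnorm{f}^2$ by $\lrnorm{f}^2-\sum_{\ell\notin\{j,k\}}\lrsp{f,\phi_{\ell-1}}^2$. That entry lies off the diagonal of an upper-triangular $2\times 2$ matrix, so it does not affect the determinant. With this correction the covariance computation, and hence the proof, go through.
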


        \begin{proof}[Proof of \Thref{th:app_ermakov_zolotukhin_estimators}]
            First, the joint distribution \eqref{eq:app_joint_distribution_projection_dpp} of $\lrp{\bfx_{1}, \dots, \bfx_{N}}$ is proportional to $\lrp{\det \bfPhi(x_{1:N})}^2 \mu^{\otimes N}(x)$.
            Thus, $\det \bfPhi(\bfx_{1:N})\neq 0$, $\mu$-almost surely.
            Hence, the matrix $\bfPhi(\bfx_{1:N})$ defining the linear system \eqref{eq:app_EZ_linear_system} is invertible, $\mu$-almost surely.

            The expression of the coordinates \eqref{eq:app_coordinates_EZ_theorem} follows from Cramer's rule.

            Then, we treat the case $k=1$, the others follow the same lines.
            The proof relies on \Lemref{lem:app_cauchy_binet} where we exploit the orthonormality of the $\phi_k$s \eqref{eq:app_orthonormality}.
            The expectation \eqref{eq:app_expe_var_EZ} reads
            \begin{align}
                \Expe[\frac{\det \bfPhi_{\phi_0, f}(\bfx_{1:N})}
                            {\det \bfPhi(\bfx_{1:N})}]
                &\lequal{}{\eqref{eq:app_joint_distribution_projection_dpp}}
                \frac{1}{N!}
                    \int
                    \det \bfPhi_{\phi_0, f} (x_{1:N})
                    \det \bfPhi (x_{1:N})
                        \, \mu^{\otimes N}( \diff x) \nonumber\\
                &\lequal{}{\eqref{eq:app_cauchy_binet}}
                    \det
                    \begin{pmatrix}
                        \lrsp{f, \phi_{0}}
                        & \lrp{
                            \lrsp{f, \phi_{\ell}}
                            }_{\ell=1}^{N-1}
                        \\[1em]
                        \lrp{
                            \lrsp{\phi_{k}, \phi_{0}}
                            }_{k=1}^{N-1}
                        & \lrp{
                            \lrsp{\phi_k, \phi_{\ell}}
                            }_{k,\ell=1}^{N-1}
                    \end{pmatrix}
                    \nonumber\\
                &\lequal{}{\eqref{eq:app_orthonormality}}
                    \det
                    \begin{pmatrix}
                        \lrsp{f, \phi_{0}}
                        & \lrp{
                            \lrsp{f, \phi_{\ell}}
                            }_{\ell=1}^{N-1}
                        \\
                        0_{N-1,1}
                        & I_{N-1}
                    \end{pmatrix}
                    \nonumber\\
                &= \lrsp{f, \phi_{0}}.
                \label{eq:app_proof_1st_moment}
            \end{align}
            Similarly, the second moment reads
            \begin{align}
                \Expe[
                    \lrp{
                     \frac{\det \bfPhi_{\phi_0, f}(\bfx_{1:N})}
                      {\det \bfPhi(\bfx_{1:N})}}
                    ]
                &\lequal{}{\eqref{eq:app_joint_distribution_projection_dpp}}
                    \frac{1}{N!}
                    \int
                    \det \bfPhi_{\phi_0, f}(x_{1:N})
                    \det \bfPhi_{\phi_0, f}(x_{1:N})
                    \, \mu^{\otimes N}( \diff x)
                    \nonumber\\
                &\lequal{}{\eqref{eq:app_cauchy_binet}}
                    \det
                    \begin{pmatrix}
                        \lrsp{f, f}
                        & \lrp{
                            \lrsp{f, \phi_{\ell}}
                            }_{\ell=1}^{N-1}\\[1em]
                        \lrp{
                            \lrsp{\phi_{k}, f}
                            }_{k=1}^{N-1}
                        & \lrp{
                            \lrsp{\phi_k, \phi_{\ell}}
                            }_{k,\ell=1}^{N-1}
                    \end{pmatrix}
                    \nonumber\\
                &\lequal{}{\eqref{eq:app_orthonormality}}
                    \det
                    \begin{pmatrix}
                        \lrnorm{f}^2
                        & \lrp{
                            \lrsp{f, \phi_{\ell}}
                            }_{\ell=1}^{N-1}
                        \\
                        \lrp{
                            \lrsp{f, \phi_{k}}
                            }_{k=1}^{N-1}
                        & I_{N-1}
                    \end{pmatrix}
                    \nonumber\\
                &= \lrnorm{f}^2
                    - \sum_{k=1}^{N-1}
                        \lrsp{f, \phi_{k}}^2.
                    \label{eq:app_proof_2nd_moment}
            \end{align}
            Finally, the variance in \eqref{eq:app_expe_var_EZ} $=$ \eqref{eq:app_proof_2nd_moment} - \eqref{eq:app_proof_1st_moment}$^2$.

            With the same arguments, for $j\neq k$, we can compute the covariance $\Cov[y_j, y_k]$.
            For simplicity, we treat only the case $j=1, k=2$, the general case follows the same lines.
            \begin{align*}
                \Cov[y_1, y_2]
                &=
                \Expe[
                    \frac{\det \bfPhi_{\phi_0, f}(\bfx_{1:N})}
                         {\det \bfPhi(\bfx_{1:N})}
                    \frac{\det \bfPhi_{\phi_1, f}(\bfx_{1:N})}
                         {\det \bfPhi(\bfx_{1:N})}
                    ]
                - \Expe[
                    \frac{\det \bfPhi_{\phi_0, f}(\bfx_{1:N})}
                         {\det \bfPhi(\bfx_{1:N})}
                    ]
                  \Expe[
                    \frac{\det \bfPhi_{\phi_1, f}(\bfx_{1:N})}
                         {\det \bfPhi(\bfx_{1:N})}
                    ]
                    \\
                &\lequal{}{
                    \eqref{eq:app_joint_distribution_projection_dpp},
                    \eqref{eq:app_proof_1st_moment}}
                    \frac{1}{N!}
                    \int
                    \det \bfPhi_{\phi_0, f}(x_{1:N})
                    \det \bfPhi_{\phi_1, f}(x_{1:N})
                    \, \mu^{\otimes N}( \diff x)
                    - \lrsp{f, \phi_{0}} \lrsp{f, \phi_{1}}
                    \\
                &\lequal{}{\eqref{eq:app_cauchy_binet}}
                    \det
                    \begin{pmatrix}
                        \lrsp{f, \phi_0}
                            & \lrsp{f, f}
                            & \lrp{
                                \lrsp{f, \phi_{\ell}}
                                }_{\ell=2}^{N-1}\\[1em]
                        \lrp{
                            \lrsp{\phi_{k}, \phi_{0}}
                            }_{k=1}^{N-1}
                            & \lrp{
                                \lrsp{\phi_k, f}
                                }_{k=1}^{N-1}
                            & \lrp{
                                \lrsp{\phi_k, \phi_{\ell}}
                                }_{k=1,\ell=2}^{N-1}
                    \end{pmatrix}
                    - \lrsp{f, \phi_{0}} \lrsp{f, \phi_{1}}
                    \\
                &\lequal{}{\eqref{eq:app_orthonormality}}
                    \det
                    \begin{pmatrix}
                        \lrsp{f, \phi_0}
                            & \lrnorm{f}^2
                            & \lrp{
                                \lrsp{f, \phi_{\ell}}
                                }_{\ell=2}^{N-1}\\[1em]
                        0
                            & \lrsp{\phi_1, f}
                            & 0\\[1em]
                        0_{N-2, 1}
                            & \lrp{
                                \lrsp{\phi_k, f}
                                }_{k=2}^{N-1}
                            & I_{N-2}
                    \end{pmatrix}
                    - \lrsp{f, \phi_{0}} \lrsp{f, \phi_{1}}
                    \\
                &= \lrsp{f, \phi_{0}} \lrsp{f, \phi_{1}}
                    - \lrsp{f, \phi_{0}} \lrsp{f, \phi_{1}}
                 = 0.
            \end{align*}
        \end{proof}

    \subsection{The EZ estimator as a quadrature rule} 
    \label{sub:app_EZ_as_quadrature}

        In this part, we consider \Thref{th:app_ermakov_zolotukhin_estimators} in the setting where one of the eigenfunctions of the kernel, say $\phi_0$ is constant.
        In this case, we show that $\Iauth{EZ}_N(f)$ defined by \eqref{eq:EZ_estimator} provides an unbiased estimate of $\int_{\bbX} f(x) \mu(\diff x)$ with known variance.
        In addition, it can be seen as a quadrature rule in the sense of \eqref{eq:quadrature}, with weights a priori non negative weights $\omega_n$ that sum to $\mu(\bbX)$.
        This is a non obvious fact, judging from the expression \eqref{eq:EZ_estimator} of the estimator.
        \begin{proposition}
            Consider $\phi_0$ constant in \Thref{th:app_ermakov_zolotukhin_estimators}.
            Then, solving the corresponding linear system \eqref{eq:app_EZ_linear_system} allows to construct
            \begin{equation}
            \label{eq:app_EZ_estimator}
                \Iauth{EZ}_N(f)
                    \triangleq \frac{y_1}{\phi_0}
                    = \mu(\bbX)^{1/2}
                        ~
                        \frac{\det \bfPhi_{\phi_0, f}(\bfx_{1:N})}
                             {\det \bfPhi(\bfx_{1:N})}\CommaBin
            \end{equation}
            as an unbiased estimator of $\int_{\bbX} f(x) \mu(\diff x)$, with variance equal to $\mu(\bbX)\times$\eqref{eq:app_expe_var_EZ}.
            In addition, it can be seen as a random quadrature rule \eqref{eq:quadrature} with weights summing to $\mu(\bbX)$.
        \end{proposition}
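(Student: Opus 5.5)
The plan is to reduce everything to \Thref{th:app_ermakov_zolotukhin_estimators}, and then to read the quadrature structure off Cramer's rule. First I pin down the constant. Since $\phi_0$ is constant and $\lrnorm{\phi_0}^2 = \phi_0^2\,\mu(\bbX) = 1$, we have $\phi_0 = \mu(\bbX)^{-1/2}$, choosing the positive sign without loss of generality. Then $1/\phi_0 = \mu(\bbX)^{1/2}$, which together with \eqref{eq:app_coordinates_EZ_theorem} for $k=1$ gives the closed form \eqref{eq:app_EZ_estimator}.

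Unbiasedness and variance then follow directly from \eqref{eq:app_expe_var_EZ}. Indeed,
$\Expe[y_1/\phi_0] = \lrsp{f,\phi_0}/\phi_0 = \int f \phi_0 \diff\mu / \phi_0 = \int_\bbX f(x)\mu(\diff x)$.
Likewise, $\Var[y_1/\phi_0] = \phi_0^{-2}\Var[y_1] = \mu(\bbX)\bigl(\lrnorm{f}^2 - \sum_{\ell=0}^{N-1}\lrsp{f,\phi_\ell}^2\bigr)$. All of this holds $\mu$-almost surely, since $\bfPhi(\bfx_{1:N})$ is almost surely invertible by the theorem.

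For the quadrature interpretation, I expand $\det \bfPhi_{\phi_0,f}(\bfx_{1:N})$ along its first column, which is the column that now contains $f(\bfx_{1:N})$. This gives
$\det \bfPhi_{\phi_0,f}(\bfx_{1:N}) = \sum_{n=1}^N f(\bfx_n) C_{n1}$,
where $C_{n1}$ is the $(n,1)$ cofactor of $\bfPhi(\bfx_{1:N})$. The cofactor does not depend on $f$. Hence
$\Iauth{EZ}_N(f) = \sum_{n=1}^N \omega_n f(\bfx_n)$ with $\omega_n \triangleq \mu(\bbX)^{1/2} C_{n1}/\det\bfPhi(\bfx_{1:N})$.
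Equivalently, $\omega_n = \mu(\bbX)^{1/2}\,[\bfPhi(\bfx_{1:N})^{-1}]_{1n}$, so these weights depend only on the nodes, which is the form \eqref{eq:quadrature}.

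It remains to show that the weights sum to $\mu(\bbX)$, and the cleanest route is to apply the linear map to the constant function. Take $f \equiv 1 = \mu(\bbX)^{1/2}\phi_0$. Then $f(\bfx_{1:N})$ equals $\mu(\bbX)^{1/2}$ times the first column of $\bfPhi(\bfx_{1:N})$. By uniqueness of the solution, $y = (\mu(\bbX)^{1/2},0,\dots,0)^\top$, so $\sum_n \omega_n = \Iauth{EZ}_N(1) = y_1/\phi_0 = \mu(\bbX)$. This holds for every realization, not only in expectation, and is consistent with the zero variance predicted for $f\in\calH_N$. One can also see it directly: replacing the first column by a multiple of itself gives $\det\bfPhi_{\phi_0,1} = \mu(\bbX)^{1/2}\det\bfPhi$.

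None of these steps is a real obstacle; the delicate point is mostly bookkeeping. The weights must be defined only on the almost-sure event where $\det\bfPhi(\bfx_{1:N})\neq0$. One should also note that they may be negative, since nothing forces the cofactors to have a sign.
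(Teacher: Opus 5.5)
Your proof is correct and follows the paper's argument: you normalize $\phi_0=\mu(\bbX)^{-1/2}$, read unbiasedness and the variance off \Thref{th:app_ermakov_zolotukhin_estimators}, and expand $\det \bfPhi_{\phi_0,f}(\bfx_{1:N})$ along its first column to obtain node-dependent weights. Your sum-of-weights argument via $\Iauth{EZ}_N(1)$ and uniqueness of the solution is the same identity the paper uses: the first-column cofactors sum to $\det \bfPhi_{\phi_0,1}(\bfx_{1:N}) = \mu(\bbX)^{1/2}\det\bfPhi(\bfx_{1:N})$.
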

        \begin{proof}
            Since $\phi_0$ is constant with unit norm we have $\phi_0 = \mu(\bbX)^{-1/2}$, so that
            \begin{align*}
                \Expe[\Iauth{EZ}_N(f)]
                    &= \frac{1}{\phi_0} \Expe[y_1]
                    = \frac{1}{\cancel{\phi_0}} \lrsp{f, \cancel{\phi_0}}
                    = \int_{\bbX} f(x) \diff x,
            \intertext{and}
                \Var[\Iauth{EZ}_N(f)]
                    &= \frac{1}{{\phi_0}^2} \Var[y_1]
                    = \mu(\bbX) \times \eqref{eq:app_expe_var_EZ}.
            \end{align*}
            In addition, \eqref{eq:app_EZ_estimator} can be written
            \begin{align}
            \Iauth{EZ}_N(f)
                &= \mu(\bbX)^{1/2}
                    ~
                        \frac{\det \bfPhi_{\phi_0, f}(\bfx_{1:N})}
                             {\phi_0 \det \bfPhi_{\phi_0, 1}(\bfx_{1:N})}
                =
                    \mu(\bbX)
                    ~
                        \frac{\det \bfPhi_{\phi_0, f}(\bfx_{1:N})}
                             {\det \bfPhi_{\phi_0, 1}(\bfx_{1:N})}\CommaBin
            \nonumber
            \intertext{and the expansion of the numerator w.r.t.\,the first column yields}
            \Iauth{EZ}_N(f)
                &=
                \sum_{n=1}^N
                    f(\bfx_n)
                    \underbrace{
                        \frac
                            {\mu(\bbX)}
                            {\det \bfPhi_{\phi_0, 1}(\bfx_{1:N})}
                        (-1)^{1+n}
                        \det
                        \lrp{\phi_k(x_p)}_{k=1, p=1\neq n}^{N-1, N}
                        }_{\triangleq\omega_n(\bfx_{1:N})}\cdot
                \label{eq:app_weights_EZ_estimator}
            \end{align}
            Note that there is a priori no reason for the weights to be nonnegative.
            Finally,
            \begin{equation*}
                \sum_{n=1}^{N}
                    \omega_n(\bfx_{1:N})
                = \frac
                    {\mu(\bbX)}
                    {\cancel{\det \bfPhi_{\phi_0, 1}(\bfx_{1:N})}}
                    \underbrace{
                        \sum_{n=1}^{N}
                            (-1)^{1+n}
                            \det
                            \lrp{\phi_k(x_p)}_{k=1, p=1\neq n}^{N-1, N}
                        }_{=\cancel{\det \bfPhi_{\phi_0, 1}(\bfx_{1:N})}}
                = \mu(\bbX).
            \end{equation*}
            This concludes the proof.
        \end{proof}


    \subsection{Sampling from the multivariate Jacobi ensemble} 
    \label{sub:app_sampling}

        We mention that the code\footnoteref{fn:dppy} and the documentation\footnoteref{fn:readthedocs} associated to this work are available in the DPPy toolbox of \citet{GPBV19}.

        In dimension $d=1$, we implemented the random tridiagonal matrix model of \citet[Theorem 2]{KiNe04} to sample from the univariate Jacobi ensemble, with base measure $\mu(\diff x) = (1-x)^a (1+x)^b \diff x$, where $a, b > -1$.
        That is to say, this one dimensional continuous projection DPP with $N$ points can be sampled in $\calO(N^2)$, by computing the eigenvalues of random tridiagonal matrix with i.i.d. coefficients of size $N\times N$.

        Next, for $d\geq 2$, we detail the procedure described in \Secref{sub:sampling} for sampling exactly from the multivariate Jacobi ensemble with parameters $\lrabs{a^i}, \lrabs{b^i} \leq \frac{1}{2}$, for all $1\leq i \leq d$.

        More specifically, we consider sampling exactly from the projection $\DPP(\mu, K_N)$ where
        \begin{itemize}[wide=0pt]
            \item $\mu(\diff x) = \omega(x) \diff x$, with
            \begin{equation}
            \label{eq:app_base_measure}
                \omega(x)
                = \prod_{i=1}^d
                    \omega^i(x^i),
            \quad\text{where}\quad
                \omega^i(z)
                    = \prod_{i=1}^d
                        (1-z)^{a^i}
                        (1+z)^{b^i},
            \quad\text{and}\quad
            \lrabs{a^i}, \lrabs{b^i} \leq \frac{1}{2}\cdot
            \end{equation}
            \item $K_N(x, y)
                    = \sum_{\mathfrak{b}(b)=0}^{N-1}
                        \phi_k(x) \phi_k(y)$, with
            \begin{equation}
            \label{eq:app_multivariate_orthogonal_polynomials}
                \phi_k(x)
                    = \prod_{i=1}^d
                        \phi_{k^i}^i(x^i),
                \quad\text{where}\quad
                \int_{-1}^{1} \phi_{u}^i(z) \phi_{v}^i(z) \omega^i(z) \diff z
                    = \delta_{uv}.
            \end{equation}
        \end{itemize}
        As an illustration, \Figref{fig:app_sample_1000_points_2D} displays a sample of a $d=2$ Jacobi ensemble with $N=1000$ points.
        \begin{figure*}[ht]
            \centering
            \includegraphics[width=0.3\textwidth]
                {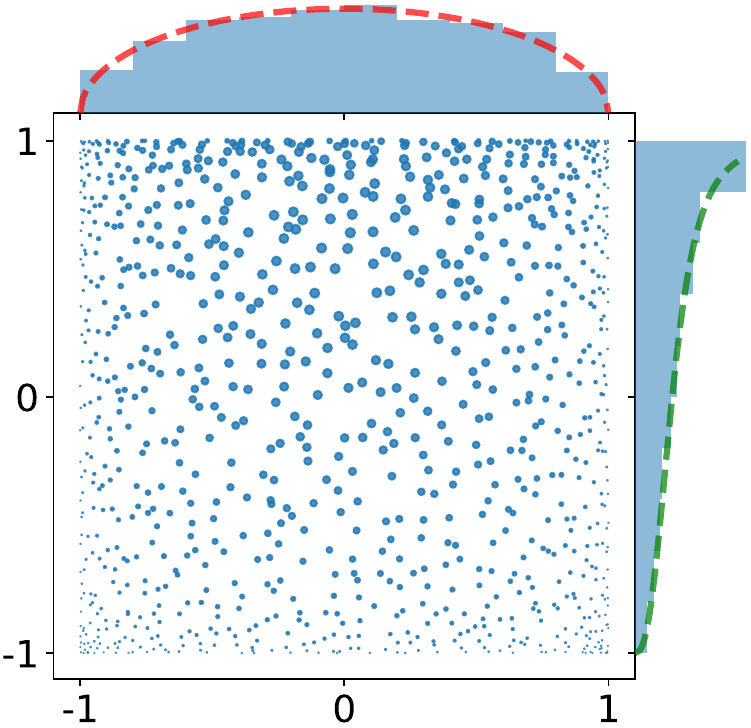}
            \includegraphics[width=0.3\textwidth]
                {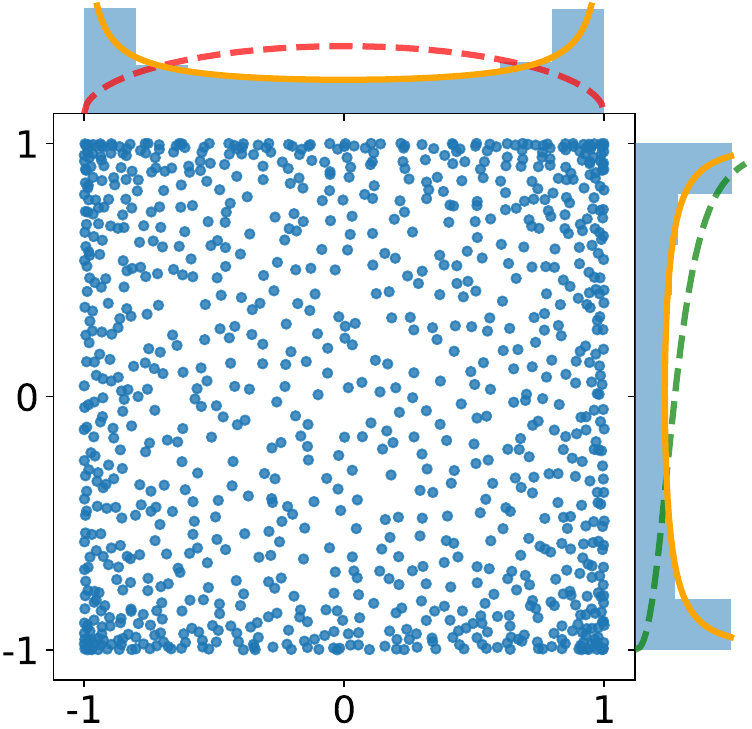}
            \includegraphics[width=0.3\textwidth]
                {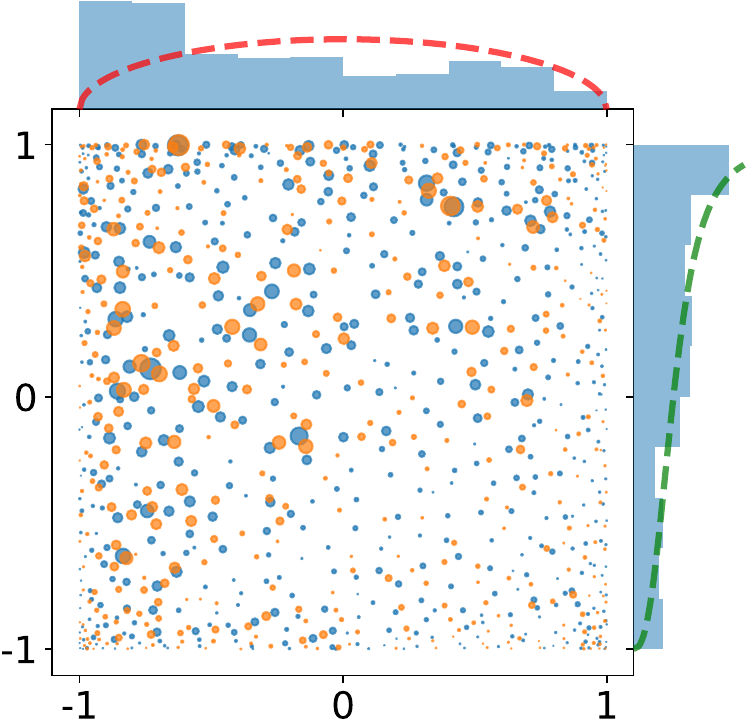}
            \vspace{-1em}
            \caption{Middle: a sample of a 2D Jacobi ensemble with $N=1000$ points.
            The normalized reference densities, proportional to \textcolor{red}{$(1-x)^{a^1} (1+x)^{b^1}$} and \textcolor{Green}{$(1-y)^{a^2} (1+y)^{b^2}$}, are displayed in dashed lines.
            The empirical marginal densities which converges to the arcsine density \textcolor{orange}{$\weq(x)=\frac{1}{\pi\sqrt{1-x^2}}$} is plotted in solid line.
            Left: we plot the same sample where the disk centered at $\bx_n$ now has now an area proportional to the weight $1/K_N(\bx_n,\bx_n)$ in $\Iauth{BH}_N(f)$ in \eqref{eq:BH_estimator}. Observe that these weights serve as a proxy for the reference measure, like Gaussian quadrature.
            Right: the weight in $\Iauth{EZ}_N(f)$ given by \eqref{eq:app_weights_EZ_estimator}; observe that they can be either positive or negative.
            The histogram of the absolute value of the weights is plotted on the marginal axes}
            \label{fig:app_sample_1000_points_2D}
            \vspace{-1em}
        \end{figure*}
        Our sampling scheme is an instance of the generic chain-rule-based procedure of \citet[Algorithm 18]{HKPV06} where the knowledge of the eigenfunctions can be leveraged, see also \citet[Algorithm 1]{LaMoRu15}.
        In our case, sampling $N$ points in dimension $d$, requires an expected total number of rejections of order $2^d N \log(N)$.
        As mentioned in \Secref{sub:sampling}, to sample from $\lrcb{\bfx_{1}, \dots, \bfx_{N} } \sim \DPP(\mu, K_N)$ it is enough to sample $\lrp{\bfx_{1}, \dots, \bfx_{N} }$ and forget the order the points were selected.
        Starting from the two formulations \eqref{eq:joint_distribution_projection_dpp} and \eqref{eq:joint_distribution_projection_dpp_geometric} of the joint distribution, the chain rule scheme can be derived from two different perspectives.
        Either by expressing the determinant $\det\lrp{K_N(x_p,x_n)}_{p,n=1}^N$ using Schur complements\\[-1em]
        \begin{align}
            &\eqref{eq:joint_distribution_projection_dpp}
            =\frac1{N!}
                \det\lrp{K_N(x_p,x_n)}_{p,n=1}^N
                \,\prod_{n=1}^{N}
                    \omega(x_n)
                    \diff x_n
            \label{eq:app_chain_rule_schur}
            \\
            &=
            \frac{K_N(x_1, x_1)}{N}
            \omega(x_1) \diff x_1
            \prod_{n=2}^{N}\omega(x_n) \diff x_n
                \frac{K_N(x_n,x_n)
                                -   {\bfK}_{n-1}(x_n)^{\top}
                                    {\bfK}_{n-1}^{-1}
                                    {\bfK}_{n-1}(x_n)}{N-(n-1)}
                \omega(x_n) \diff x_n,
            \nonumber
        \end{align}
        where
        $\bfK_{n-1}(\cdot)
          = \lrp{K_N(x_1,\cdot), \dots, K_N(x_{n-1},\cdot)}^{\top}$, and
        $\bfK_{n-1}
          = \lrp{K_N(x_p,x_q)}_{p,q=1}^{n-1}$.
        Or geometrically using the base$\times$height formula to express $(\det\bfPhi(x_{1:N}))^2$ as the squared volume of the parallelotope spanned by $\Phi(x_{1}), \dots, \Phi(x_{N})$
        \begin{align}
            \eqref{eq:joint_distribution_projection_dpp_geometric}
            &=\frac1{N!}
            \Vol^2\lrp{\Phi(x_{1}), \dots, \Phi(x_{N}) }
            \prod_{n=1}^{N}
                \omega(x_n) \diff x_n
            \nonumber\\
            &=\frac{\|\Phi(x_1)\|^2}{N}
                \omega(x_1) \diff x_1
                \prod_{n=2}^{N}
                    \frac{\dist^2\!\big(\Phi(x_n), \Span\lrcb{\Phi(x_p)}_{p=1}^{n-1}\big)}
                    {N-(n-1)}
                \omega(x_n) \diff x_n.
            \label{eq:app_chain_rule_geometric}
        \end{align}
        Note that, contrary to \eqref{eq:app_chain_rule_geometric}, the formulation \eqref{eq:app_chain_rule_schur} does not require a priori knowledge of the eigenfunctions of the projection kernel $K_N$.

        Like \citet{BaHa19}, we sample each conditional in turn using rejection sampling with the same proposal distribution and rejection bound.
        But where \citet{BaHa19} use the formulation \eqref{eq:app_chain_rule_schur} of the chain rule we consider the geometrical perspective \eqref{eq:app_chain_rule_geometric}.
        This allows for a implementation that is simpler (no need to update $\bfK_{n-1}^{-1}$), fully vectorized, and more interpretable: akin to a sequential Gram-Schmidt orthogonalization of the feature vectors $\Phi(x_{1}), \dots, \Phi(x_{N})$.

        Moreover, contrary to \citet{BaHa19} who take $\weq(x) \diff x$ as proposal to sample from the each of the conditionals, we use a two-layer rejection sampling scheme.
        We rather sample from the $n$-th conditional using the marginal distribution $N^{-1} K_N(x, x) \omega(x) \diff x$.
        This choice of proposal allows us to reduce the number of (costly) evaluations of
        the acceptance ratio.

        The rejection constant associated to the $n$-th conditional in \eqref{eq:app_chain_rule_schur} reads
        \begin{align}
            &\frac{ \lrp{N-(n-1)}^{-1}
                    \lrp{K_N(x,x)
                        -   {\bfK}_{n-1}(x)^{\top}
                            {\bfK}_{n-1}^{-1}
                            {\bfK}_{n-1}(x)}
                    \cancel{\omega(x)}}
                    {N^{-1} K_N(x,x) \cancel{\omega(x)}}
                \nonumber\\
            &\quad= \frac{N}{N-(n-1)}
                \frac{K_N(x,x)
                        -   {\bfK}_{n-1}(x)^{\top}
                            {\bfK}_{n-1}^{-1}
                            {\bfK}_{n-1}(x)}
                      {K_N(x,x)}
            \leq \frac{N}{N-(n-1)}\PeriodBin
            \label{eq:app_domination_conditionals}
        \end{align}

        The marginal distribution itself is sampled using the same proposal $\weq(x) \diff x$ and rejection constant as \citet{BaHa19}.
        However, we further reduce the number of computations by considering $N^{-1} K_N(x, x) \omega(x) \diff x$ as a mixture, see \Secref{ssub:generate_samples_from_the_marginal_distribution}

        \subsubsection{Generate samples from the marginal distribution} 
        \label{ssub:generate_samples_from_the_marginal_distribution}

            First, observe that the marginal density can be written as a mixture of $N$ probability densities where each component is assigned the same weight $1/N$
            \begin{equation}
            \label{eq:app_marginal_distribution}
                \frac{1}{N}
                    K_N(x, x)
                    \omega(x)
                =
                \frac{1}{N}
                \lsum{\mathfrak{b}(k)=0}{N-1}
                    \phi_{k}(x)^2
                    \omega(x).
            \end{equation}
            Thus, sampling from \eqref{eq:app_marginal_distribution} can be done in two steps:
            \begin{enumerate}[(i)]
                \item select a multi-index $k=\mathfrak{b}^{-1}(n)$ with $n$ drawn uniformly at random in $\lrcb{0, \dots, N-1}$
                \item sample from $\phi_k(x)^2 \omega(x) \diff x$ \label{step:app_sample_marginal_step_2}
            \end{enumerate}
            We perform Step \ref{step:app_sample_marginal_step_2} using rejection sampling with proposal distribution
            \begin{equation}
                \label{eq:app_equilibrium_measure}
                \weq(x) \diff x
                    = \prod_{i=1}^d
                        \frac{1}{\pi\sqrt{1-(x^i)^2}}
                        \diff x^i,
            \end{equation}
            which corresponds to the limiting marginal distribution of the multivariate Jacobi ensemble as $N$ goes to infinity; see \cite[Section 3.11]{Sim11} and \Figref{fig:app_sample_1000_points_2D}.
            The acceptance ratio writes
            \begin{align}
                \frac{\phi_k(x)^2 \omega(x)}{\weq(x)}
                &\lequal{\eqref{eq:app_equilibrium_measure}}
                        {\eqref{eq:app_multivariate_orthogonal_polynomials}
                         \eqref{eq:app_base_measure}}
                    \lprod{i=1}{d}
                    \frac{
                            \phi_{k^i}^i(x^i)^2
                        \times
                            (1-x^i)^{a^i} (1+x^i)^{b^i}
                        }
                        {
                            \pi^{-1}
                            (1-x^i)^{-\frac12} (1+x^i)^{-\frac12}}
                    \nonumber\\
                &=
                    \lprod{i=1}{d}
                        \pi
                        (1-x^i)^{a^i+\frac12} (1+x^i)^{b^i+\frac12}
                        \phi_{k^i}^i(x^i)^2.
                \label{eq:app_domination_marginal}
            \end{align}
            Each of the terms that appear in \eqref{eq:app_domination_marginal} can be bounded using the following recipe:
            \begin{enumerate}[(a)]
                \item For $k^i=0$, $\phi_{0}^i$ is constant and the orthonormality w.r.t.\,$(1-x)^{a^i}(1+x)^{b^i} \diff x$ yields
                \begin{equation}
                    (\phi_{0}^i)^2
                        \int_{-1}^1 (1-x)^{a^i}(1+x)^{b^i} \diff x
                        = 1
                        \Longleftrightarrow
                        (\phi_{0}^i)^2
                        = \frac{1}{2^{a^i+b^i+1} B(a^i+1, b^i+1)}\CommaBin
                \end{equation}
                so that the corresponding term in \eqref{eq:app_domination_marginal} becomes
                \begin{equation}
                    \frac{\pi (1-x)^{a^i+\frac12} (1+x)^{b^i+\frac12}}
                         {2^{a^i+b^i+1} B(a^i+1, b^i+1)}
                    \leq
                        \frac{\pi (1-m)^{a^i+\frac12} (1+m)^{b^i+\frac12}}
                         {2^{a^i+b^i+1} B(a^i+1, b^i+1)}
                    \triangleq C_{k^i=0}
                    \leq 2,
                \end{equation}
                where
                $m
                    = \argmax\limits_{-1\leq x \leq 1}~(1-x)^{a^i+\frac12} (1+x)^{b^i+\frac12}
                    = \begin{cases}
                        0, &\text{ if } a^i=b^i=-\frac12,\\
                        \frac{b^i-a^i}{a^i+b^i+1}\CommaBin &\text{otherwise}.
                    \end{cases}$

                \item For $k^i\geq1$, we use the bound $C_{k^i\geq1}$ \eqref{eq:app_Chow_bound} provided originally by \citet{ChGaWo94}.
                As mentioned by \citet{Gau09}, this bound is probably maximal for $k^i=1$ and parameters $a^i\approx-0.0691, b^i=1/2$, with value $\approx 0.64297807\pi \approx 2.02$.
            \end{enumerate}
            Finally, the expected number of rejections to perform Step \ref{step:app_sample_marginal_step_2} is equal to $\prod_{i=1}^{d} C_{k^i}$ which is of order $2^d$, and the expected total number of rejections of the chain rule \eqref{eq:app_chain_rule_schur} is of order
            \begin{equation}
                \label{eq:expected_total_number_of_rejections}
                \sum_{n=1}^{N} 2^d \frac{N}{N-(n-1)}
                = 2^d N \sum_{n=1}^{N} \frac{1}{n}
                \approx 2^d N \log(N).
            \end{equation}

            \begin{proposition}
            \label{propo:app_Chow_bound}
                \citep[Equation 1.3]{Gau09}
                Let $(\phi_k)_{k\geq 0}$ be the (univariate) orthonormal polynomials w.r.t.\,$(1-x)^a (1+x)^b \diff x$ with $\lrabs{a}\leq \frac12, \lrabs{b}\leq \frac12$.
                Then, for any $x \in [-1, 1]$ and $k\geq 1$,
                \begin{equation}
                \label{eq:app_Chow_bound}
                    \hspace{-1em}
                    \pi
                        (1-x)^{a+\frac{1}{2}}
                        (1+x)^{b+\frac{1}{2}}
                    \phi_{k}(x)^{2}
                    \leq
                        \frac
                            {2~\Gamma(k+a+b+1)~\Gamma(k+\max(a,b)+1)}
                            {k!~(k+\frac{a+b+1}{2})^{2 \max(a,b)}~\Gamma(k+\min(a,b)+1)}\PeriodBin
                \end{equation}
            \end{proposition}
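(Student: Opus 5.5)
The plan is to reduce the statement to the Bernstein-type inequality of \citet{ChGaWo94} for the \emph{classical}, non-normalized Jacobi polynomials $P_k^{(a,b)}$, in the form quoted by \citet{Gau09}. After that, only the bookkeeping of normalization constants remains. I do not intend to re-derive the Chow--Gatteschi--Wong inequality; it is the external ingredient.

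\textbf{Step 1: change of variables.} Write $x=\cos\theta$ with $\theta\in[0,\pi]$. Since $1-x=2\sin^2(\theta/2)$ and $1+x=2\cos^2(\theta/2)$,
\begin{equation*}
(1-x)^{a+\frac12}(1+x)^{b+\frac12} = 2^{a+b+1}\lrp{\sin\tfrac{\theta}{2}}^{2a+1}\lrp{\cos\tfrac{\theta}{2}}^{2b+1}.
\end{equation*}

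\textbf{Step 2: the external inequality.} For $\lrabs{a},\lrabs{b}\le\frac12$ and $k\geq1$, set $q=\max(a,b)$ and $N'=k+\frac{a+b+1}{2}$. I will use the Chow--Gatteschi--Wong inequality in the form
\begin{equation*}
\lrp{\sin\tfrac{\theta}{2}}^{2a+1}\lrp{\cos\tfrac{\theta}{2}}^{2b+1} P_k^{(a,b)}(\cos\theta)^2 \le \frac{\Gamma(k+q+1)}{\pi\, k!\, N'^{\,2q+1}}\PeriodBin
\end{equation*}

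\textbf{Step 3: normalization.} The orthonormal polynomials are $\phi_k=P_k^{(a,b)}/\sqrt{h_k}$, where
\begin{equation*}
h_k=\frac{2^{a+b+1}\Gamma(k+a+1)\Gamma(k+b+1)}{(2k+a+b+1)\,k!\,\Gamma(k+a+b+1)}\PeriodBin
\end{equation*}
Multiplying the inequality of Step 2 by $\pi\,2^{a+b+1}/h_k$ gives a bound on the left-hand side of \eqref{eq:app_Chow_bound}. The factor $2^{a+b+1}$ cancels against the one in $h_k$. The factor $2k+a+b+1=2N'$ reduces the power $N'^{2q+1}$ to $N'^{2q}$ and produces the prefactor $2$. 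Finally, $\Gamma(k+a+1)\Gamma(k+b+1)=\Gamma(k+q+1)\Gamma(k+\min(a,b)+1)$. After these simplifications the bound becomes exactly
\begin{equation*}
\frac{2\,\Gamma(k+a+b+1)\,\Gamma(k+q+1)}{k!\,N'^{\,2q}\,\Gamma(k+\min(a,b)+1)}\CommaBin
\end{equation*}
which is the right-hand side of \eqref{eq:app_Chow_bound}.

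\textbf{Main obstacle.} The genuine difficulty lies entirely in the Chow--Gatteschi--Wong inequality itself. Its proof uses Sturm-type comparison and monotonicity of the envelope of $\lrp{\sin\frac{\theta}{2}}^{a+\frac12}\lrp{\cos\frac{\theta}{2}}^{b+\frac12}P_k^{(a,b)}(\cos\theta)$ for the Jacobi differential equation, which is where $\lrabs{a},\lrabs{b}\le\frac12$ is needed. I take it as given.

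On my side, the delicate point is getting the constant in Step 2 exactly right, namely the factor $\pi$ and the exponent $2q+1$. I would check it against the Chebyshev case $a=b=-\frac12$. There $P_k^{(-1/2,-1/2)}=\frac{\Gamma(k+\frac12)}{k!\sqrt\pi}T_k$, so the inequality reduces to $T_k^2\,\Gamma(k+\frac12)/k!\le1$, which is true. This check would also confirm that $k\ge1$ is needed, since for $k=0$ the factor $2k+a+b+1$ can vanish.
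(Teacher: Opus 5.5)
Your reduction is the right route; the paper gives no argument of its own and simply cites Gautschi's Eq.~(1.3), which is the Chow--Gatteschi--Wong bound rewritten for orthonormal polynomials. However, Step~2 is mis-squared, and Step~3 does not follow from it.

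The Chow--Gatteschi--Wong inequality reads $(\sin\frac{\theta}{2})^{a+\frac12}(\cos\frac{\theta}{2})^{b+\frac12}\lrabs{P_k^{(a,b)}(\cos\theta)}\le \frac{\Gamma(k+q+1)}{\sqrt{\pi}\,k!}\,N'^{-q-\frac12}$. Its square therefore carries $\Gamma(k+q+1)^2/(k!)^2$, not $\Gamma(k+q+1)/k!$. For $q>0$ your Step~2 is strictly stronger than the true bound, and it is false at $a=b=\frac12$. There $P_k^{(1/2,1/2)}(\cos\theta)=\frac{2\Gamma(k+3/2)}{\sqrt{\pi}\,(k+1)!}\frac{\sin((k+1)\theta)}{\sin\theta}$, so the left side of Step~2 reaches $\frac{\Gamma(k+3/2)^2}{\pi((k+1)!)^2}$. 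This exceeds your right side $\frac{\Gamma(k+3/2)}{\pi\,k!\,(k+1)^2}$ by the factor $\Gamma(k+\frac32)/k!>1$.

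Moreover, if you actually multiply your Step~2 by $\pi 2^{a+b+1}/h_k$, the factor $\Gamma(k+q+1)/k!$ cancels completely against $h_k$. You obtain $\frac{2\Gamma(k+a+b+1)}{N'^{2q}\,\Gamma(k+\min(a,b)+1)}$, which is not the target. At $a=b=\frac12$ it equals $2\,k!/\Gamma(k+\frac32)<2$. Yet the left-hand side of \eqref{eq:app_Chow_bound} is $2\sin^2((k+1)\theta)$ there (since $\phi_k=\sqrt{2/\pi}\,U_k$), and it attains $2$. Your claimed final expression appears only because two errors offset each other.

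The fix is immediate. Use the correctly squared bound $\frac{\Gamma(k+q+1)^2}{\pi\,(k!)^2\,N'^{2q+1}}$ in Step~2, and your Step~3 bookkeeping goes through verbatim: $2^{a+b+1}$ cancels, $2k+a+b+1=2N'$ supplies the prefactor $2$ and lowers the exponent to $2q$, and $\Gamma(k+a+1)\Gamma(k+b+1)=\Gamma(k+q+1)\Gamma(k+\min(a,b)+1)$. Exactly one factor $\Gamma(k+q+1)/k!$ survives, which gives the right-hand side of \eqref{eq:app_Chow_bound}.

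Your Chebyshev check missed the error because for $q=-\frac12$ the mis-squared bound is weaker than the true one. The Chow--Gatteschi--Wong bound is attained at all four corners $a,b\in\lrcb{\pm\frac12}$. So a correct check at $a=b=-\frac12$ should reduce to $T_k^2\le 1$ with equality, and your reduction to $\Gamma(k+\frac12)T_k^2/k!\le1$ should have raised a flag. Testing the corner $a=b=\frac12$ would have exposed the problem directly.
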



        \subsubsection{Empirical timing and number of rejections} 
        \label{ssub:timings_and_number_of_rejections}

            In \Figref{fig:app_timings_rejections} we illustrate the following observations.
            Computing the acceptance ratio \eqref{eq:app_domination_conditionals} requires to propagate the recurrence relations up to order $\sqrt[d]{N}$.
            Thus, for a given number of points $N$, the larger the dimension, the smaller the depth of the recurrence.
            This could hint that, evaluating the kernel \eqref{eq:kernel_multivariate_separable_OPE} becomes cheaper as $d$ increases.
            However, the rejection rate also increases, so that in practice, it is not cheaper to sample in larger dimensions because the number of rejections dominates.
            In the particular case of dimension $d=1$, samples are generated using the fast and rejection-free tridiagonal matrix model of \citet[Theorem 2]{KiNe04}.
            This grants huge time savings compared to the acceptance-rejection method.

            Finally, some remarks are in order.
            Sampling from the $n$-th conditional distribution using rejection sampling is common practice \citep[Section 2.4.2]{LaMoRu15}.
            However, tailored proposals with tight rejection constants are required \citep[Appendices E-F]{LaMoRu15}.
            Taking the marginal distribution $N^{-1} K_N(x, x) \omega(x) \diff x$ as proposal yields a $N/(N-(n-1))$ rejection constant and applies in the general case.
            Nevertheless, it remains to sample from this marginal distribution
            Rejection sampling might be a first option to sample from $N^{-1} K_N(x, x) \omega(x) \diff x$, but when the eigenfunctions are available it could be another option to see it as a mixture (cf. \Secref{ssub:generate_samples_from_the_marginal_distribution}), where good proposals for each $\phi_k(x)^2 w(x) \diff x$ are required.

            In the case of (multivariate) orthogonal polynomial ensembles (cf. \Secref{sub:multivariate_jacobi_ensemble}), evaluations of $K_N(x, y)$ \eqref{eq:kernel_multivariate_separable_OPE} can be performed using the Gram representation \eqref{eq:kernel_projection_DPP_Gram_formulation}, $K_N(x, y)=\Phi(x)^{\top}\Phi(y)$ and one can leverage the three-term recurrence relations satisfied by each of the univariate Jacobi polynomials $(\phi_{\ell}^i)_{\ell}$.
            This is what we do in our special case, we use the dedicated function \texttt{scipy.special.eval\_jacobi} to evaluate, up to depth $\sqrt[d]{N}$, the three-term recurrence relations satisfied by each of the univariate Jacobi.
            Instead of calling the recursive routine internally to evaluate $\Phi(x)$, the corresponding $d\sqrt[d]{N}$ univariate polynomials or $N$ multivariate polynomials could be stored in some way and evaluated pointwise on the fly.
            The preprocessing time and the memory required would increase but it might accelerate the evaluation of $\Phi(x)$.




        \begin{figure*}[ht]
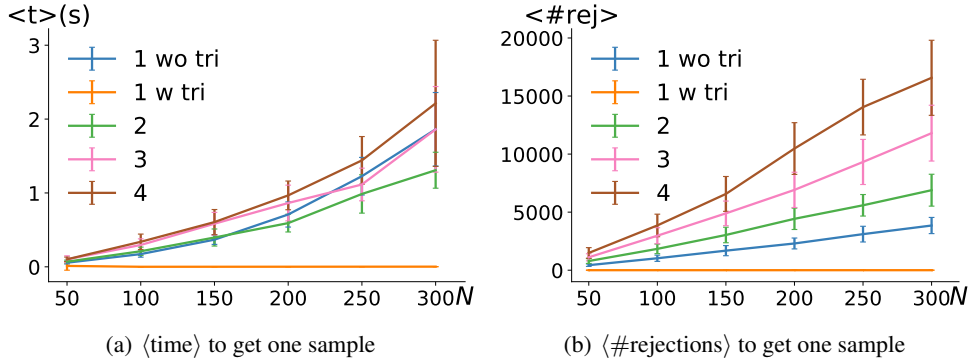

            \centering
            \subfigure[$\lrsp{\text{time}}$ to get one sample]{
            \label{fig:app_timings}
            \includegraphics[width=0.47\textwidth]
                {images/timings_chebychev_d=0-1-2-3-4D_with_legend.pdf}
            }
            \hspace{-1em}
            \subfigure[$\lrsp{\#\text{rejections}}$ to get one sample]{
            \label{fig:app_rejections}
            \includegraphics[width=0.47\textwidth]
                {images/rejections_chebychev_d=0-1-2-3-4D_with_legend.pdf}
            }
            \caption{
            $a^i, b^i=-1/2$, the colors and numbers correspond to the dimension.
            For $d=1$, the tridiagonal model (tri) of \citeauthor{KiNe04} offers tremendous time savings.
            (b) The total number of rejections grows as $2^d N \log(N)$ \eqref{eq:expected_total_number_of_rejections}.
            }
            \label{fig:app_timings_rejections}
        \end{figure*}






\clearpage
\section{Experiments} 
\label{sec:app_XPs}

    \subsection{Reproducing the bump example}
    \label{sub:app_XP_bump}

        In \Secref{sub:XP_bump}, we reproduce the experiment of \citet[Section 3]{BaHa19} where they illustrate the behavior of $\Iauth{BH}_N$ on a unimodal, smooth bump function:
        \begin{equation}
            f(x)
            =
            \prod_{i=1}^d
                \exp\lrp{-\frac{1}{1- \varepsilon - (x^i)^2}}
                \indic_{(-\sqrt{1-\varepsilon}, \sqrt{1- \varepsilon})}(x^i).
        \end{equation}
        We take $\varepsilon = 0.05$. For each value of $N$, we sample $100$ times from the same multivariate Jacobi ensemble with i.i.d. uniform parameters on $[-1/2,1/2]$, compute the resulting $100$ values of each estimator, and plot the two resulting sample variances.
        In addition, in \Figref{fig:app_bump_KS} we test the potential hope for a CLT for $\Iauth{EZ}_N$ and compare with $\Iauth{BH}_N$ for which the CLT \eqref{eq:BH_CLT} holds, in the regime $N=300$.

        \begin{figure*}[ht]
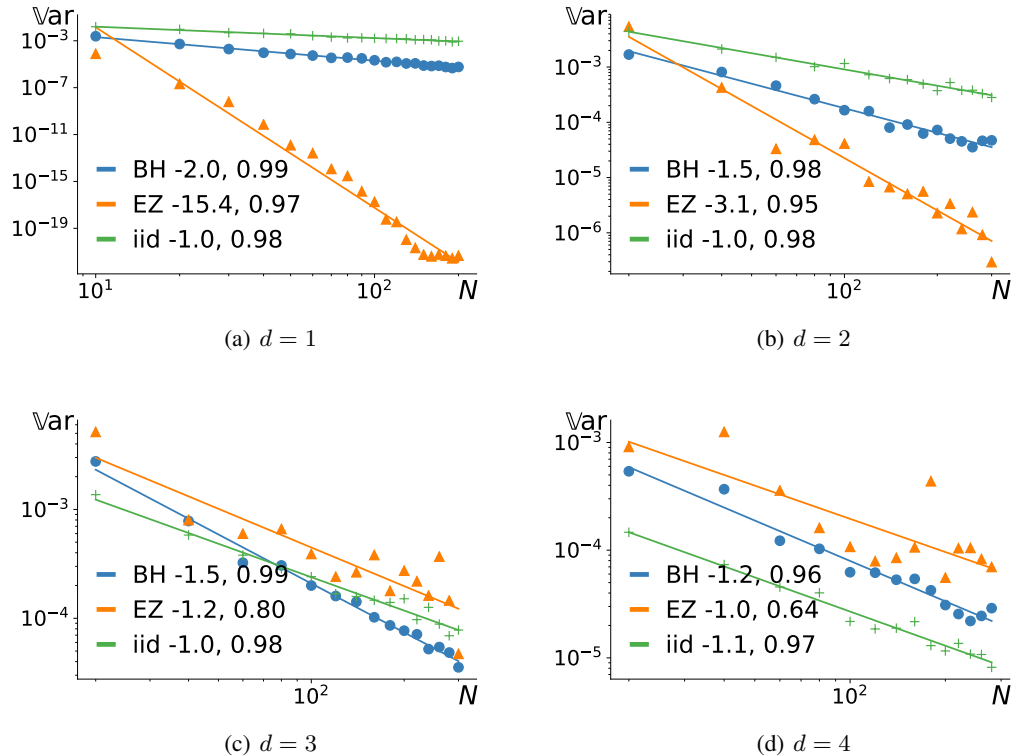

            \subfigure[$d=1$]{
            \label{fig:app_bump_1D}
            \includegraphics[width=.49\textwidth]
                {images/bump_1D_100_repeats.pdf}
            }
            \subfigure[$d=2$]{
            \label{fig:app_bump_2D}
            \includegraphics[width=.49\textwidth]
                {images/bump_2D_100_repeats.pdf}
            }\\
            \subfigure[$d=3$]{
            \label{fig:app_bump_3D}
            \includegraphics[width=.49\textwidth]
                {images/bump_3D_100_repeats.pdf}
            }
            \subfigure[$d=4$]{
            \label{fig:app_bump_4D}
            \includegraphics[width=.49\textwidth]
                {images/bump_4D_100_repeats.pdf}
            }
            \caption{Reproducing the bump function ($\varepsilon=0.05$) experiment of \citet{BaHa19}, cf.\,\Secref{sub:XP_bump}.
            Observe the expected variance decay of order $1/N^{1+1/d}$ for BH.
            Although vanilla Monte Carlo becomes competitive for small $N$ as $d$ increases, its variance decay is of order $1/N \geq 1/N^{1+1/d}$.
            Thus, there will always be meeting point, for some $N^*$, after which the variance of BH will be smaller.
            For $d=1$, EZ has almost no variance for $N\geq 100$: the bump function is extremely well approximated by a polynomials of degree $N\geq 100$.}
            \label{fig:app_bump}
        \end{figure*}
        \begin{figure*}[ht]
            \subfigure[$d=1$]{
            \label{fig:app_bump_1D_KS}
            \includegraphics[width=.49\textwidth]
                {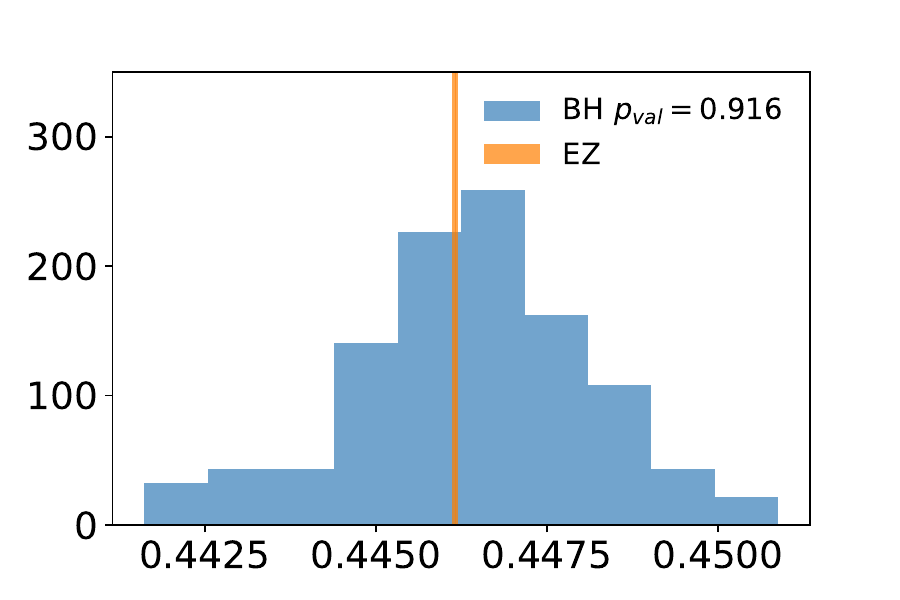}
            }
            \subfigure[$d=2$]{
            \label{fig:app_bump_2D_KS}
            \includegraphics[width=.49\textwidth]
                {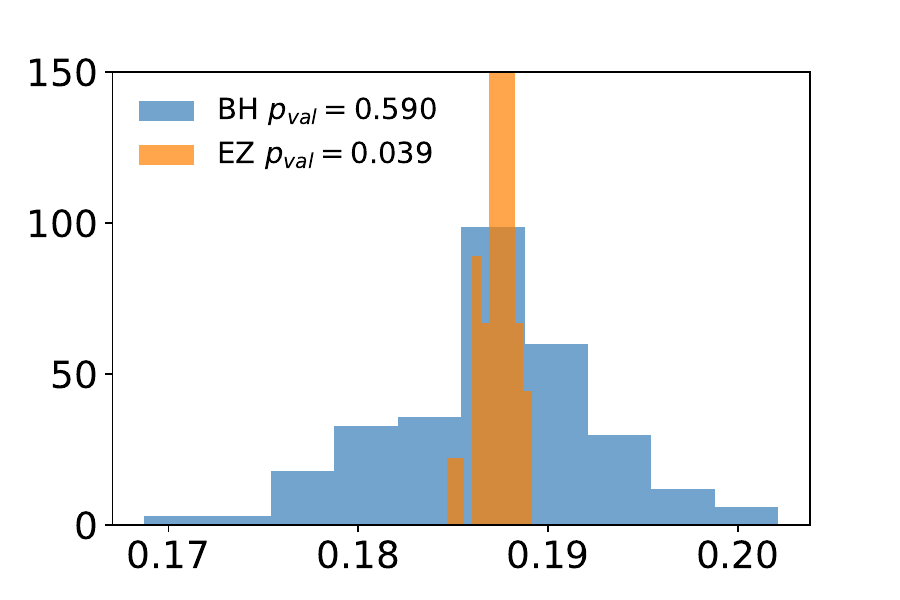}
            }\\
            \subfigure[$d=3$]{
            \label{fig:app_bump_3D_KS}
            \includegraphics[width=.49\textwidth]
                {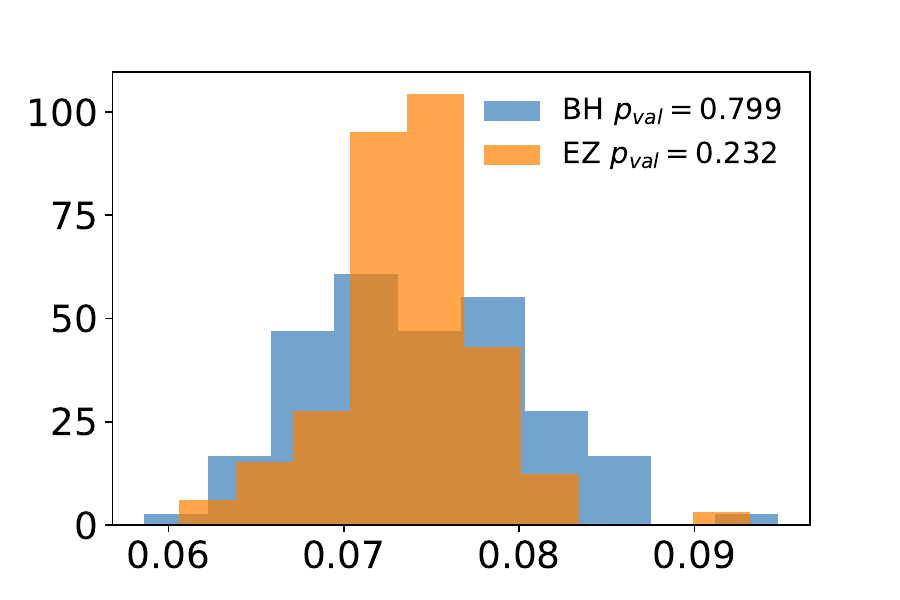}
            }
            \subfigure[$d=4$]{
            \label{fig:app_bump_4D_KS}
            \includegraphics[width=.49\textwidth]
                {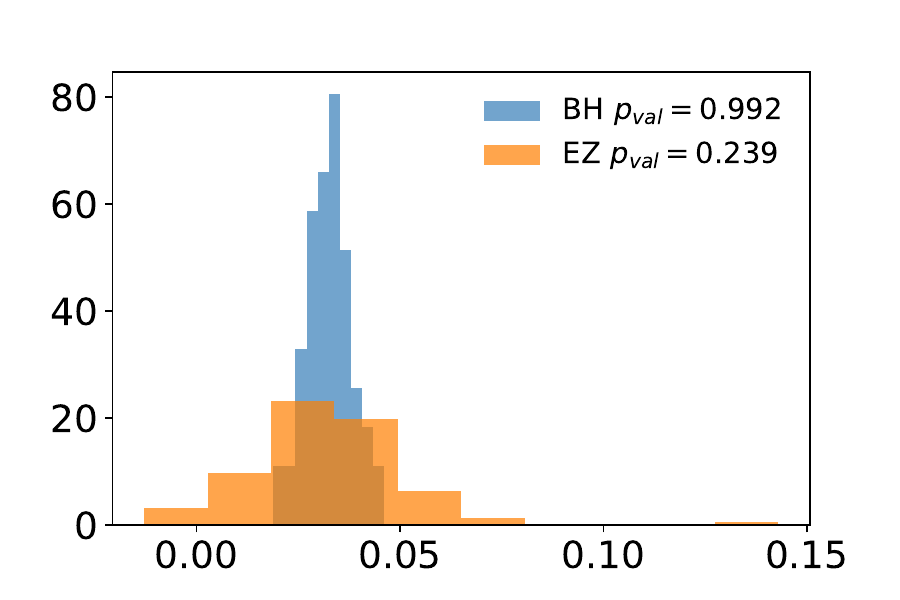}
            }
            \caption{Histogram of $100$ independent estimates $\Iauth{BH}_N$ and $\Iauth{EZ}_N$ of the integral of the bump function ($\varepsilon=0.05$) with $N=300$ and associated p-value of Kolmogorov-Smirnov test, cf.\,\Secref{sub:XP_bump}.
            The fluctuations of BH confirm to be Gaussian (cf.\,CLT \eqref{eq:BH_CLT}).
            (a) the bump function is extremely well approximated by a polynomial of degree $300$ hence $\Iauth{EZ}_N$ has almost no variance.
            (b)-(c)-(d) A few outliers seem to break the potential Gaussianity of $\Iauth{EZ}_N(f)$.
            (d) $\Iauth{EZ}_N(f)$ does not preserve the sign of the integrand.}
            \label{fig:app_bump_KS}
        \end{figure*}


\clearpage

    \subsection{Integrating sums of eigenfunctions}
    \label{sub:app_XP_polynomials}

        \Figref{fig:app_decay_1/k_N=70_Ndpp} gives the results of the first setting set in \Secref{sub:XP_polynomials}, where we integrate a sum of $M=70$ kernel eigenfunctions.
        In this case, $EZ$ has zero variance once $N\geq M$, a performance that can be reached neither by BH nor vanilla Monte Carlo.

        \Figref{fig:app_decay_1/k_pursuit_Ndpp} illustrates the second setting, where the sum always has one more eigenfunction than there are points in the DPP samples.
        In this case, the conditions for the CLT of BH, cf \eqref{eq:BH_CLT}, are not met; there is no $1/N^{1+1/d}$ guarantee on the variance decay for BH estimator.
        The performance of BH and vanilla Monte Carlo are comparable.
        By construction, the variance of EZ decays as $1/N^2\leq 1/N$.
        Thus, there will always be meeting point, for some $N^*$, after which the variance of EZ will be smaller than vanilla Monte Carlo.
        \begin{figure*}[ht]
            \subfigure[$d=1$]{
            \label{fig:app_decay_1/k_N=70_Ndpp_1D}
            \includegraphics[width=\twofig]
                {images/decay_1overk_N=70_1D.pdf}
            }
            \subfigure[$d=2$]{
            \label{fig:app_decay_1/k_N=70_Ndpp_2D}
            \includegraphics[width=\twofig]
                {images/decay_1overk_N=70_2D.pdf}
            }\\
            \subfigure[$d=3$]{
            \label{fig:app_decay_1/k_N=70_Ndpp_3D}
            \includegraphics[width=\twofig]
                {images/decay_1overk_N=70_3D.pdf}
            }
            \subfigure[$d=4$]{
            \label{fig:app_decay_1/k_N=70_Ndpp_4D}
            \includegraphics[width=\twofig]
                {images/decay_1overk_N=70_4D.pdf}
            }
            \caption{Comparison of $\Iauth{BH}_N$ and $\Iauth{EZ}_N$ integrating a finite sum of $70$ eigenfunctions of the DPP kernel as in \eqref{eq:f_xps_polys}, cf.\,\Secref{sub:XP_polynomials}.}
            \label{fig:app_decay_1/k_N=70_Ndpp}
        \end{figure*}

        \begin{figure*}[ht]
            \subfigure[$d=1$]{
            \label{fig:app_decay_1/k_pursuit_Ndpp_1D}
            \includegraphics[width=\twofig]
                {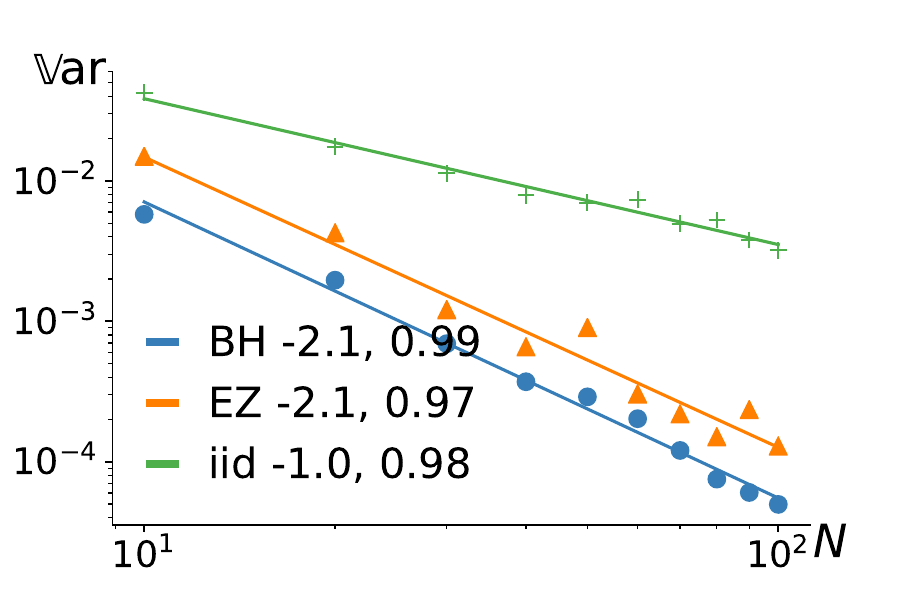}
            }
            \subfigure[$d=2$]{
            \label{fig:app_decay_1/k_pursuit_Ndpp_2D}
            \includegraphics[width=\twofig]
                {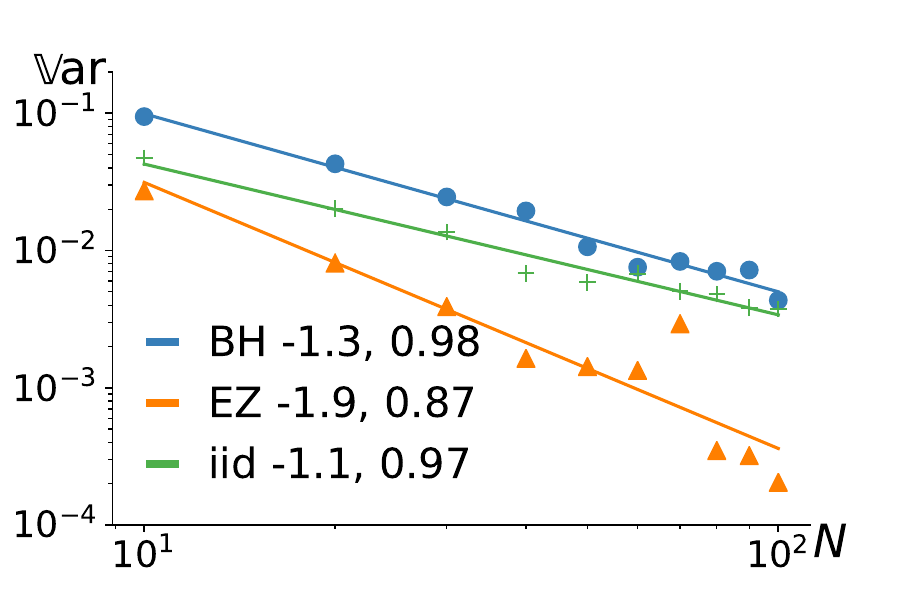}
            }\\
            \subfigure[$d=3$]{
            \label{fig:app_decay_1/k_pursuit_Ndpp_3D}
            \includegraphics[width=\twofig]
                {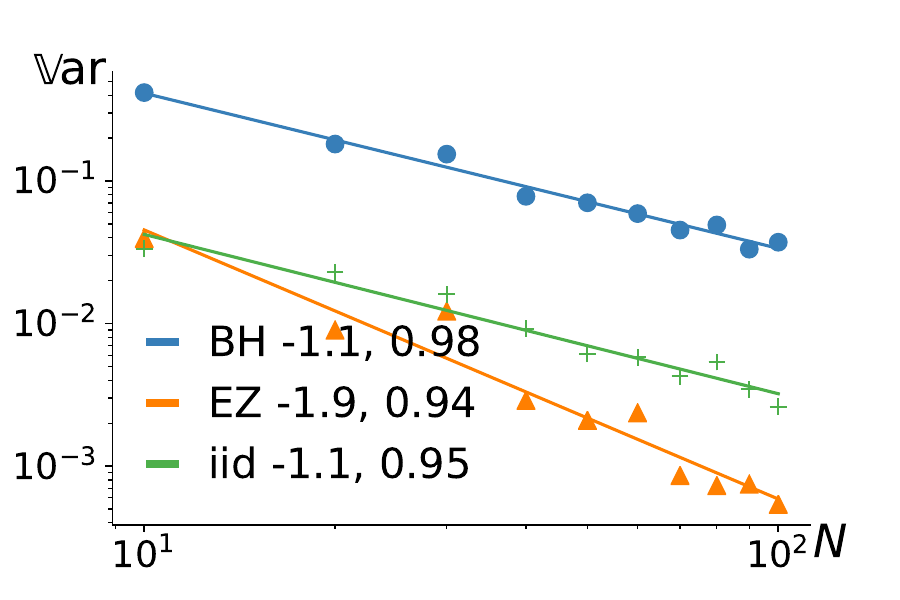}
            }
            \subfigure[$d=4$]{
            \label{fig:app_decay_1/k_pursuit_Ndpp_4D}
            \includegraphics[width=\twofig]
                {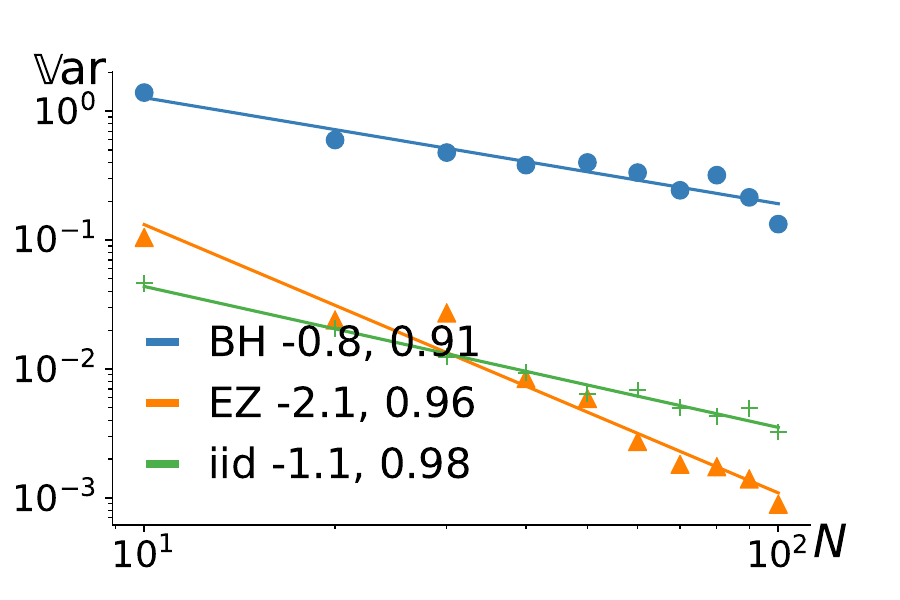}
            }
            \caption{Comparison of $\Iauth{BH}_N$ and $\Iauth{EZ}_N$ for a linear combination of $N+1$ eigenfunctions of the DPP kernel as in \eqref{eq:f_xps_polys}, cf.\,\Secref{sub:XP_polynomials}.}
            \label{fig:app_decay_1/k_pursuit_Ndpp}
        \end{figure*}


\clearpage

    We now consider cases where the guarantees of BH not EZ are unknown.

    \subsection{Integrating absolute value}
    \label{sub:app_XP_absolute}

        We consider estimating the integral
        \begin{equation}
            \int_{[-1, 1]^d}
                \prod_{i=1}^{d}
                    |x^i|
                    (1-x^i)^{a^i} (1+x^i)^{b^i}
                \diff x^i
        \end{equation}
        where $a^1, b^1=-\frac12$ and $a^i, b^i$ i.i.d.\,uniformly in $[-\frac12, \frac12]$, using BH \eqref{eq:BH_estimator} and EZ \eqref{eq:EZ_estimator} estimators.

        Results are given in \Figref{fig:app_absolute_value}.
        In dimension $d=1$, the absolute value is well approximated by its truncated Taylor series of low order and EZ performs very well, but as the dimension increases, its performance is more erratic.
        For $d\leq 2$, the performance of BH is smooth and better that vanilla Monte Carlo.
        In particular, for $d\leq 2$, the rate $1/N^{1+1/d}$ seems to hold for BH while the conditions for the CLT \eqref{eq:BH_CLT} are not satisfied.
        But it seems no longer true in larger dimension.
        \begin{figure*}[ht]
            \subfigure[$d=1$]{
            \label{fig:absolute_value_1D}
            \includegraphics[width=\twofig]
                {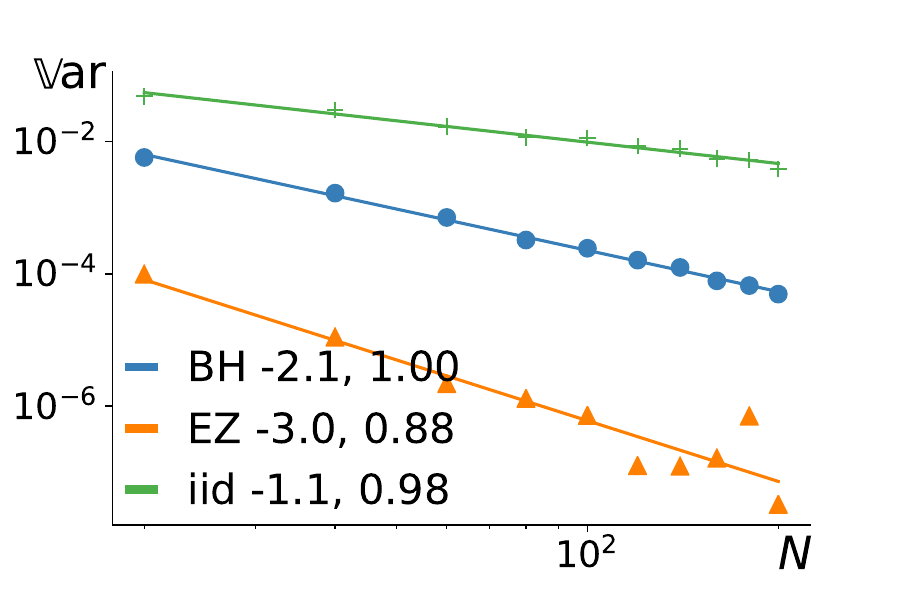}
            }
            \subfigure[$d=2$]{
            \label{fig:absolute_value_2D}
            \includegraphics[width=\twofig]
                {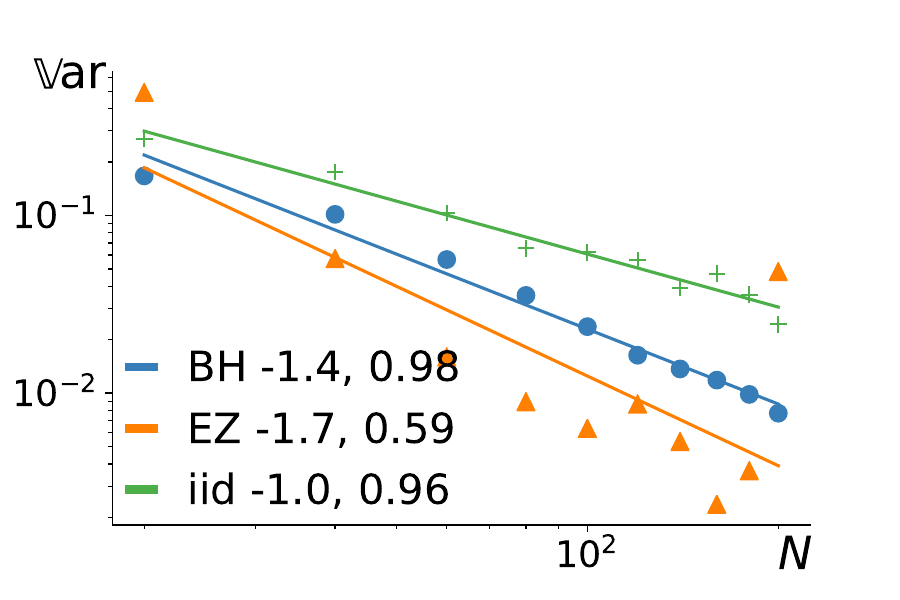}
            }\\
            \subfigure[$d=3$]{
            \label{fig:absolute_value_3D}
            \includegraphics[width=\twofig]
                {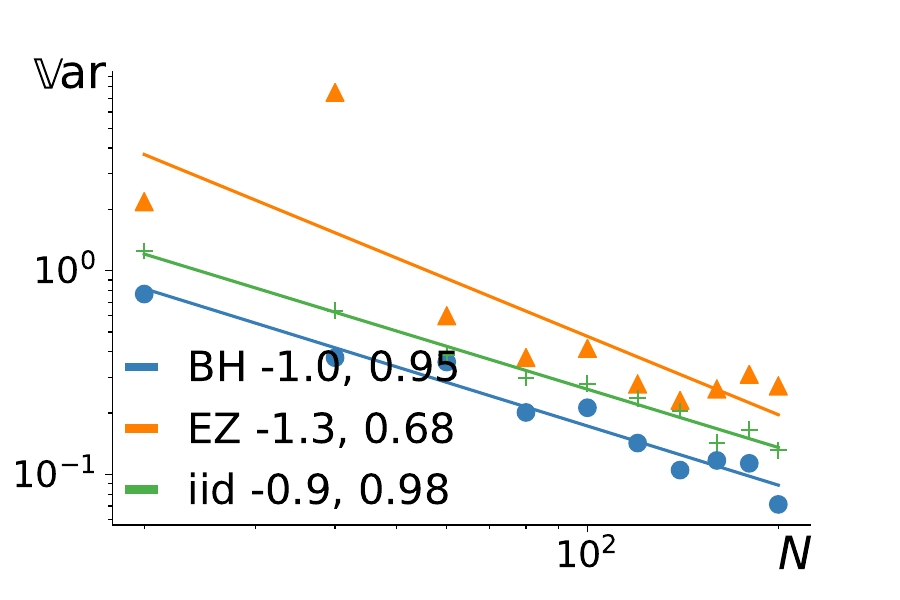}
            }
            \subfigure[$d=4$]{
            \label{fig:absolute_value_4D}
            \includegraphics[width=\twofig]
                {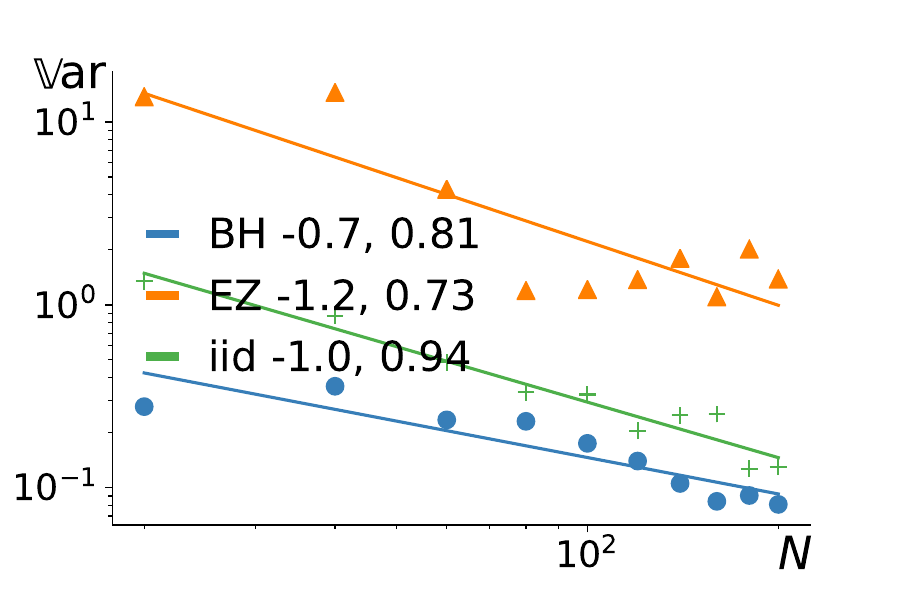}
            }
            \caption{Comparison of $\Iauth{BH}_N$ and $\Iauth{EZ}_N$ for absolute value, cf.\,\Secref{sub:further_experiments}.}
            \label{fig:app_absolute_value}
        \end{figure*}


\clearpage

    \subsection{Integrating Heaviside}
    \label{sub:app_XP_heaviside}

        Let $H(x)=\begin{cases}
            1, &\text{ if }x>0\\
            0, &\text{otherwise}
        \end{cases}$.
        We consider estimating the integral
        \begin{equation}
            \int_{[-1, 1]^d}
                \prod_{i=1}^{d}
                    2\lrp{H(x^i) - \frac{1}{2}}
                    (1-x^i)^{a^i} (1+x^i)^{b^i}
                \diff x^i
        \end{equation}
        where $a^1, b^1=-\frac12$ and $a^i, b^i$ i.i.d.\,uniformly in $[-\frac12, \frac12]$, using BH \eqref{eq:BH_estimator} and EZ \eqref{eq:EZ_estimator} estimators.

        Results are given in \Figref{fig:app_heaviside}.
        The EZ estimator behaves in a very erratic way; it does not seem robust to the discontinuity we have introduced.
        This can be explained by considering
        $H(x)=\frac12 \lim_{\epsilon\to 0} 1+\tanh{\frac{x}{\epsilon}}$ and taking the product of the Taylor series expansions of $\tanh$; the square of the coefficients in front of the monomials in such expansion become very large as $\epsilon\to 0$.
        One could expect better behavior for very large $N$.
        The performance of BH is smooth and the rate $1/N^{1+1/d}$ seems to hold despite the conditions for the CLT \eqref{eq:BH_CLT} are not satisfied.
        \begin{figure*}[ht]
            \subfigure[$d=1$]{
            \label{fig:heaviside_value_1D}
            \includegraphics[width=\twofig]
                {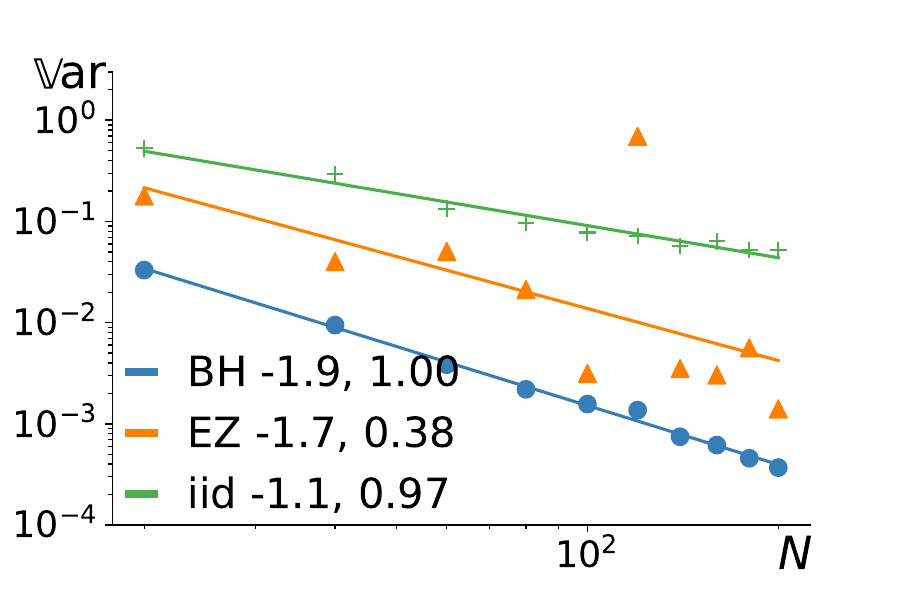}
            }
            \subfigure[$d=2$]{
            \label{fig:heaviside_value_2D}
            \includegraphics[width=\twofig]
                {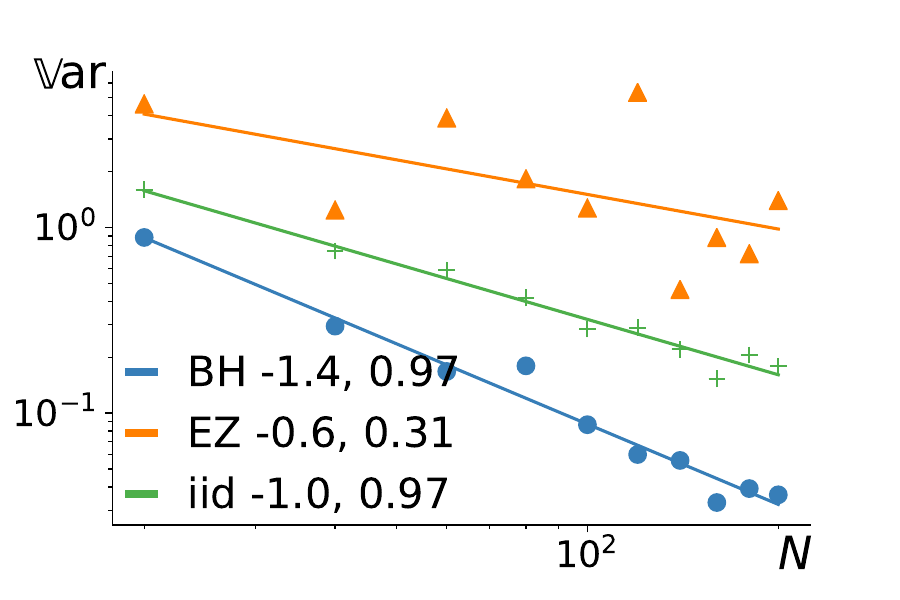}
            }\\
            \subfigure[$d=3$]{
            \label{fig:heaviside_value_3D}
            \includegraphics[width=\twofig]
                {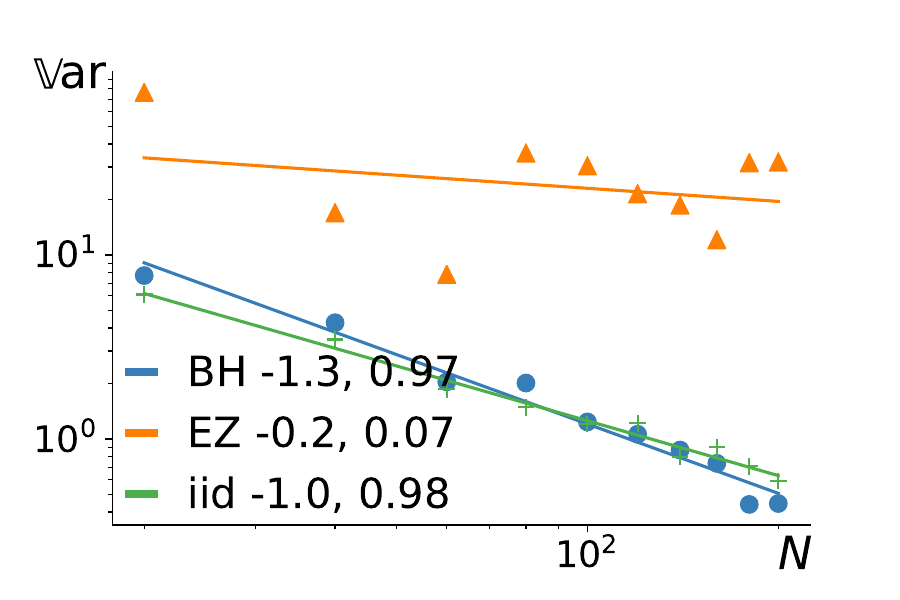}
            }
            \subfigure[$d=4$]{
            \label{fig:heaviside_value_4D}
            \includegraphics[width=\twofig]
                {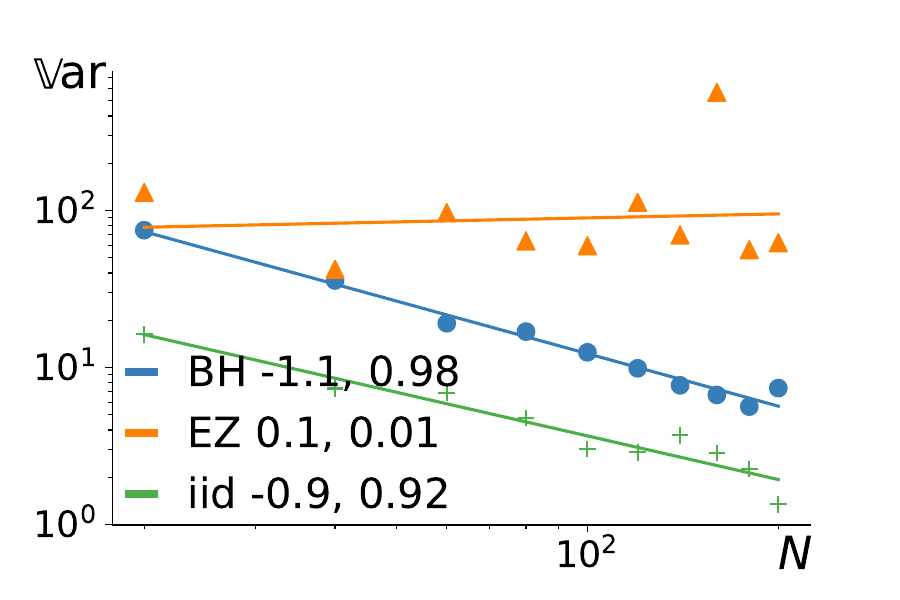}
            }
            \caption{Comparison of $\Iauth{BH}_N$ and $\Iauth{EZ}_N$ for Heaviside function, cf.\,\Secref{sub:further_experiments}.}
            \label{fig:app_heaviside}
        \end{figure*}


\clearpage

    \subsection{Integrating cosine}
    \label{sub:app_XP_cosine}

        We consider estimating the integral
        \begin{equation}
            \int_{[-1, 1]^d}
                \prod_{i=1}^{d}
                    \cos(\pi x^i)
                    (1-x^i)^{a^i} (1+x^i)^{b^i}
                \diff x^i
        \end{equation}
        where $a^1, b^1=-\frac12$ and $a^i, b^i$ i.i.d.\,uniformly in $[-\frac12, \frac12]$, using BH \eqref{eq:BH_estimator} and EZ \eqref{eq:EZ_estimator} estimators.

        Results are given in \Figref{fig:app_cosine}
        The EZ estimator behaves well for $d\leq 2$ but its performance deteriorates for $d\geq 3$.
        Indeed, the cross terms arising from the Taylor expansion of the different $\cos(\pi x^i)$ introduce monomials, associated to large coefficients, that do not belong to $\calH_N$.
        One could expect better behavior for very large $N$.
        For $d\leq 2$, the rate $1/N^{1+1/d}$ for BH seems to hold despite the conditions for the CLT \eqref{eq:BH_CLT} are not satisfied.
        For $d\geq 3$, BH and vanilla Monte Carlo behave similarly.
        \begin{figure*}[ht]
            \subfigure[$d=1$]{
            \label{fig:cosine_1D}
            \includegraphics[width=\twofig]
                {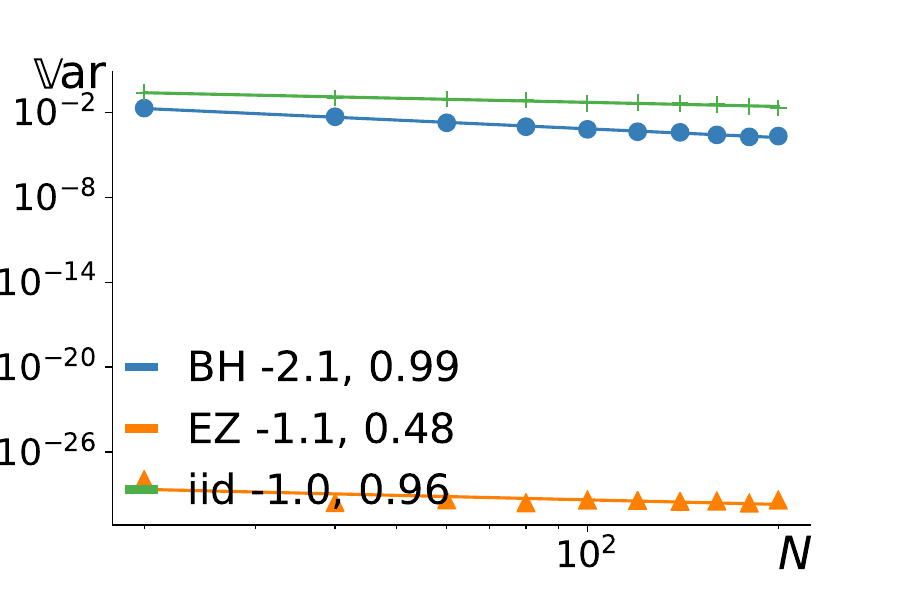}
            }
            \subfigure[$d=2$]{
            \label{fig:cosine_2D}
            \includegraphics[width=\twofig]
                {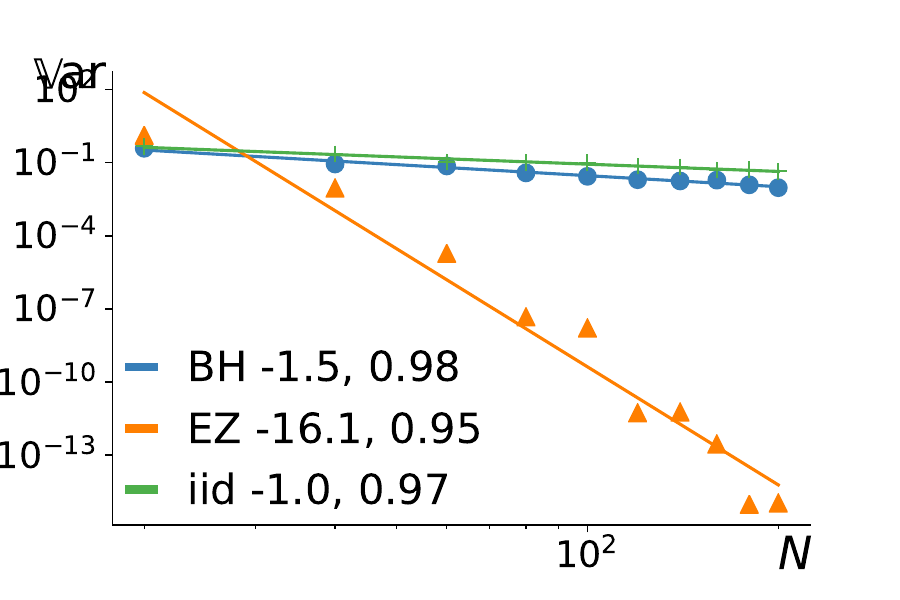}
            }\\
            \subfigure[$d=3$]{
            \label{fig:cosine_3D}
            \includegraphics[width=\twofig]
                {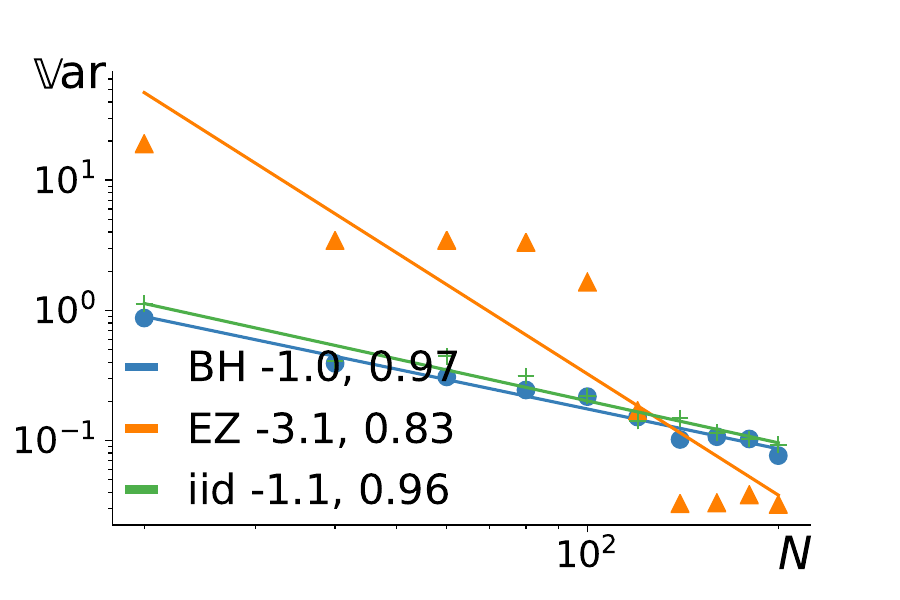}
            }
            \subfigure[$d=4$]{
            \label{fig:cosine_4D}
            \includegraphics[width=\twofig]
                {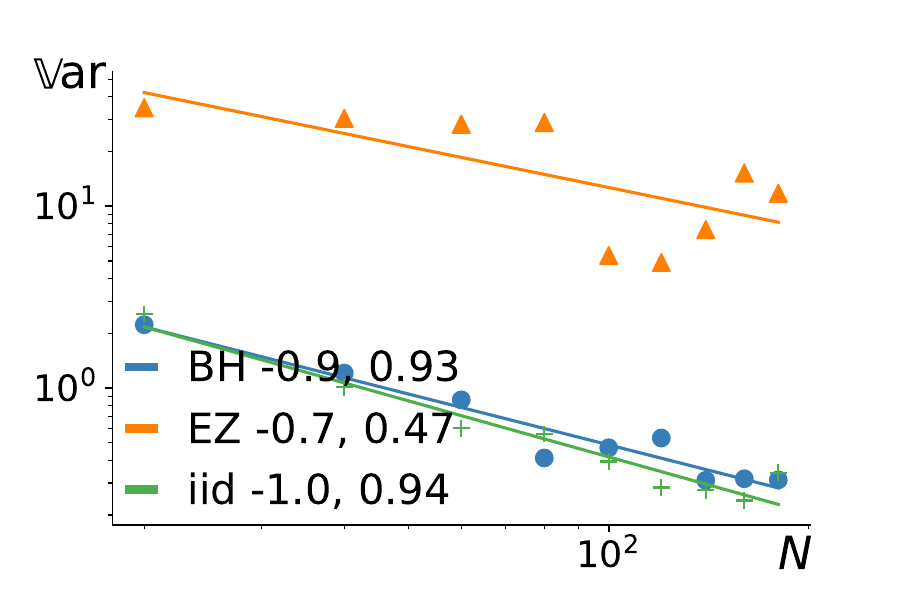}
            }
            \caption{Comparison of $\Iauth{BH}_N$ and $\Iauth{EZ}_N$ for cosine, cf.\,\Secref{sub:further_experiments}.}
            \label{fig:app_cosine}
        \end{figure*}


\clearpage

    \subsection{Integrating a mixture of smooth and non smooth functions} 
    \label{sub:integrating_mix}

        Let $f(x)=H(x)(\cos(\pi x) + \cos(2\pi x) + \sin(5\pi x))$.
        We consider estimating the integral
        \begin{equation}
            \int_{[-1, 1]^d}
                \prod_{i=1}^{d}
                    f(x^i)
                    (1-x^i)^{a^i} (1+x^i)^{b^i}
                \diff x^i
        \end{equation}
        where $a^1, b^1=-\frac12$ and $a^i, b^i$ i.i.d.\,uniformly in $[-\frac12, \frac12]$, using BH \eqref{eq:BH_estimator} and EZ \eqref{eq:EZ_estimator} estimators.

        \begin{figure*}[ht]
            \subfigure[$d=1$]{
            \label{fig:mix_1D}
            \includegraphics[width=\twofig]
                {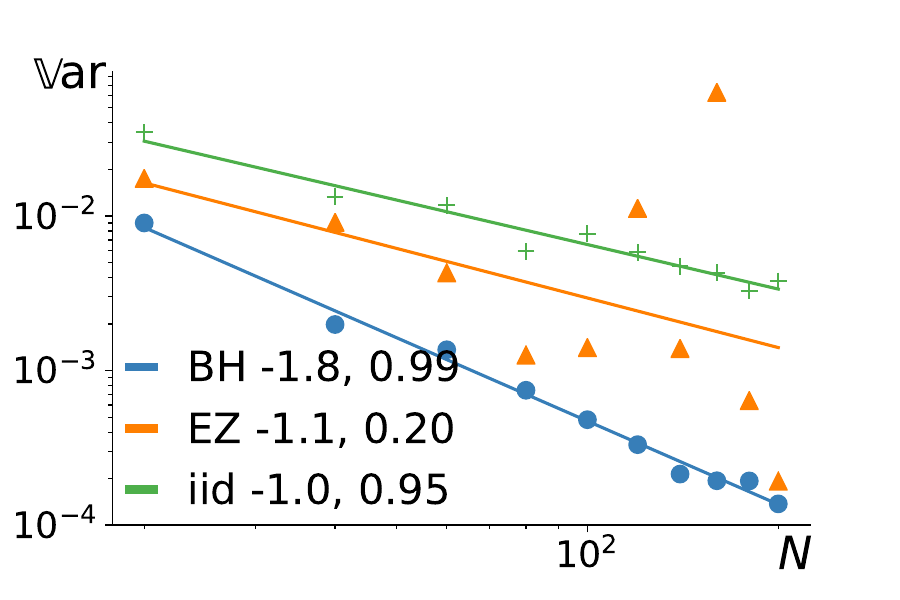}
            }
            \subfigure[$d=2$]{
            \label{fig:mix_2D}
            \includegraphics[width=\twofig]
                {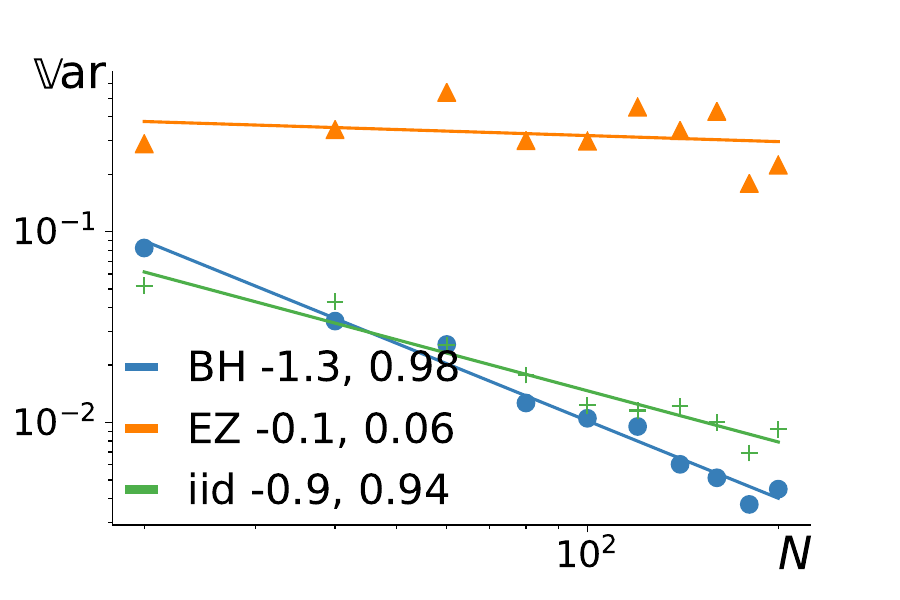}
            }\\
            \subfigure[$d=3$]{
            \label{fig:mix_3D}
            \includegraphics[width=\twofig]
                {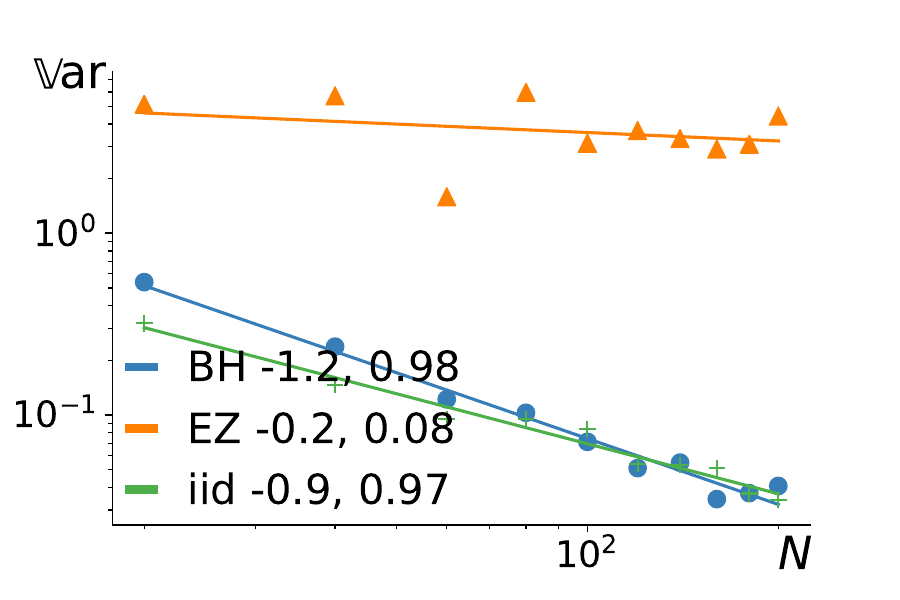}
            }
            \subfigure[$d=4$]{
            \label{fig:mix_4D}
            \includegraphics[width=\twofig]
                {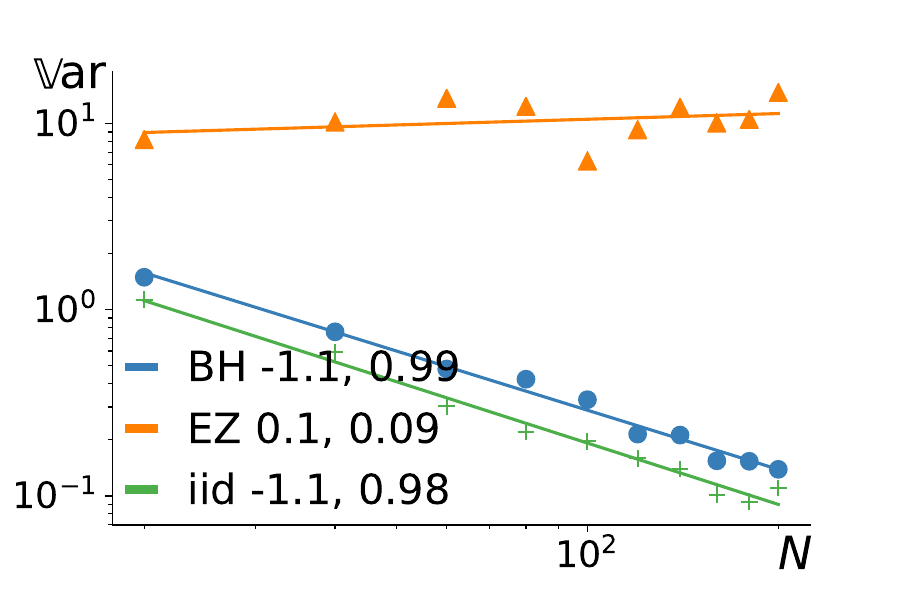}
            }
            \caption{Comparison of $\Iauth{BH}_N$ and $\Iauth{EZ}_N$, cf.\,\Secref{sub:further_experiments}.}
            \label{fig:app_mix}
        \end{figure*}



\end{document}



\appendix

\renewcommand*{\thefigure}{\Alph{figure}}
\renewcommand*{\thetheorem}{\Alph{theorem}}
\renewcommand*{\thelemma}{\Alph{lemma}}
\numberwithin{equation}{section}
\numberwithin{figure}{section}

\section{Methodology} 
\label{sec:method}

    \subsection{The generalized Cauchy-Binet formula: the modern argument} 
    \label{sub:normalization_cauchy_binet}

        \citet[Section 2.2]{Joh06} developed a natural way to build DPPs associated to projection (potentially non hermitian) kernels.
        In this part, we focus on the generalization of the Cauchy-Binet formula \citep[Proposition 2.10]{Joh06}.
        Its usefulness is twofold for our purpose.
        First, it serves to justify the fact that the normalization constant of the joint distribution \eqref{eq:joint_distribution_projection_dpp}  is one, i.e., it is indeed a probability distribution.
        Second, we use it as a modern and simple argument to prove the result of \citet{ErZo60}, cf.\,\Thref{th:ermakov_zolotukhin_estimators}.
        An extended version of the proof is given in \Appref{sub:app_proof_ErZo_th}.

        \begin{lemma}\citep[Proposition 2.10]{Joh06}
        \label{lem:app_cauchy_binet}
            Let $(\bbX, \calB, \mu)$ be a measurable space and consider measurable functions $\phi_{0}, \dots, \phi_{N-1}$ and $\psi_{0}, \dots, \psi_{N-1}$, such that $\phi_k \psi_{\ell} \in L^1(\mu)$.
            Then,
            \begin{equation}
            \label{eq:app_cauchy_binet}
                \det
                    \lrp{
                        \lrsp{\phi_k, \psi_{\ell}}_{L^2(\mu)}
                        }_{k,\ell=1}^{N}
                    = \frac{1}{N!}
                        \int
                        \det \bfPhi(x_{1:N})
                        \det \bfPsi(x_{1:N}) \, \mu^{\otimes N}( \diff x),
            \end{equation}
            where
            \begin{equation*}
                \bfPhi(x_{1:N})
                =
                \begin{pmatrix}
                    \phi_0(x_1) & \dots  & \phi_{N-1}(x_1)\\
                        \vdots     &        & \vdots\\
                    \phi_0(x_N) & \dots  & \phi_{N-1}(x_N)
                \end{pmatrix}
                \quad\text{and}\quad
                \bfPsi(x_{1:N})
                =
                \begin{pmatrix}
                    \psi_0(x_1) & \dots  & \psi_{N-1}(x_1)\\
                        \vdots     &        & \vdots\\
                    \psi_0(x_N) & \dots  & \psi_{N-1}(x_N)
                \end{pmatrix}
            \end{equation*}
        \end{lemma}


    \subsection{Proof of \Thref{th:ermakov_zolotukhin_estimators}}
    \label{sub:app_proof_ErZo_th}

        First, we recall the result of \citet{ErZo60}, cf.\,\Thref{th:ermakov_zolotukhin_estimators}.
        Then, we provide a modern proof based on the generalization of the Cauchy-Binet formula, cf.\,\Lemref{lem:app_cauchy_binet}, where we exploit the orthonormality of the eigenfunctions of the kernel.
        \begin{theorem}
        \label{th:app_ermakov_zolotukhin_estimators}
            Consider $f\in L^2(\mu)$ together with $N$ orthonormal functions $\phi_0, \dots, \phi_{N-1} \in L^2(\mu)$:
            \begin{equation}
                \label{eq:app_orthonormality}
                \lrsp{\phi_k, \phi_{\ell}}_{L^2(\mu)}
                    \triangleq \int \phi_k(x) \phi_{\ell}(x) \mu(\diff x)
                    = \delta_{k\ell},
                    \quad \forall 0\leq k, \ell \leq N-1.
            \end{equation}
            Let $\{\bfx_1,\ldots,\bfx_N\} \sim \DPP(\mu, K_N)$ with $K_N(x,y)=\sum_{k=0}^{N-1} \phi(x)\phi(y)$.
            That is to say $\lrp{\bfx_{1}, \dots, \bfx_{N}}$ has joint distribution
            \begin{equation}
            \label{eq:app_joint_distribution_projection_dpp}
                \frac1{N!}
                    \det\lrb{K_N(x_p, x_q)}_{p, q=1}^N
                    \, \mu^{\otimes N}(\diff x).
            \end{equation}
            Then, the solution of
            \begin{equation}
                \label{eq:app_EZ_linear_system}
                \begin{pmatrix}
                    \phi_0(\bfx_1) & \dots  & \phi_{N-1}(\bfx_1)\\
                        \vdots     &        & \vdots\\
                    \phi_0(\bfx_N) & \dots  & \phi_{N-1}(\bfx_N)
                \end{pmatrix}
                \begin{pmatrix}
                    y^1\\
                    \vdots\\
                    y^N
                \end{pmatrix}
                =
                \begin{pmatrix}
                    f(\bfx_1)\\
                    \vdots\\
                    f(\bfx_N)
                \end{pmatrix}
            \end{equation}
            is unique, $\mu$-almost surely and the coordinates of the solution vector, namely
            \begin{equation}
                \label{eq:app_coordinates}
                y^k
                =
                \frac{\det \bfPhi_{\phi_{k-1}, f}(\bfx_{1:N})}
                     {\det \bfPhi(\bfx_{1:N})},
            \end{equation}
            satisfy
            \begin{equation}
                \label{eq:app_expe_var_EZ}
                \Expe[y^k]
                    = \lrsp{f, \phi_{k-1}}_{L^2(\mu)},
                \quad\text{and}\quad
                \Var[y^k]
                    = \lrnorm{f}_{L^2(\mu)}^2
                       - \sum_{\ell=0}^{N-1}
                         \lrsp{f, \phi_{\ell}}_{L^2(\mu)}^2,
            \end{equation}
            where $\bfPhi(\bfx_{1:N})$ denotes the feature matrix in \eqref{eq:app_EZ_linear_system} and $\bfPhi_{\phi_{k-1}, f}(\bfx_{1:N})$ is defined as the matrix obtained by replacing the $k$-th column of $\bfPhi(\bfx_{1:N})$ by $\lrp{f(\bfx_1), \dots, f(\bfx_N)}^{\top}$.
        \end{theorem}

        \begin{proof}[Proof of \Thref{th:app_ermakov_zolotukhin_estimators}]
            First, the joint distribution \eqref{eq:app_joint_distribution_projection_dpp} of $\lrp{\bfx_{1}, \dots, \bfx_{N}}$ is proportional to $\lrp{\det \bfPhi(\bfx_{1:N})}^2 \mu^{\otimes N}(x)$.
            Thus, $\det \bfPhi(x_{1:N})\neq 0$, $\mu$-almost surely.
            Hence, the matrix $\bfPhi(\bfx_{1:N})$ defining the linear system \eqref{eq:app_EZ_linear_system} is invertible, $\mu$-almost surely.

            The expression of the coordinates \eqref{eq:app_coordinates} follows from Cramer's rule.

            Then, we treat the case $k=1$, the others follow the same lines.
            The proof relies on \Lemref{lem:app_cauchy_binet} where we exploit the orthonormality of the $\phi_k$s \eqref{eq:app_orthonormality}.
            The expectation in \eqref{eq:app_expe_var_EZ} reads
            \begin{align}
                \Expe[\frac{\det \bfPhi_{\phi_0, f}(\bfx_{1:N})}
                            {\det \bfPhi(\bfx_{1:N})}]
                &\lequal{}{\eqref{eq:app_joint_distribution_projection_dpp}}
                \frac{1}{N!}
                    \int
                    \det \bfPhi_{\phi_0, f} (x_{1:N})
                    \det \bfPhi (x_{1:N})
                        \, \mu^{\otimes N}( \diff x) \nonumber\\
                &\lequal{}{\eqref{eq:app_cauchy_binet}}
                    \det
                    \begin{pmatrix}
                        \lrsp{f, \phi_{0}}_{L^2(\mu)}^2
                        & \lrp{
                            \lrsp{f, \phi_{\ell}}_{L^2(\mu)}^2
                            }_{\ell=1}^{N-1}
                        \\[1em]
                        \lrp{
                            \lrsp{f, \phi_{0}}_{L^2(\mu)}^2
                            }_{k=1}^{N-1}
                        & \lrp{
                            \lrsp{\phi_k, \phi_{\ell}}_{L^2(\mu)}^2
                            }_{k,\ell=1}^{N-1}
                    \end{pmatrix}
                    \nonumber\\
                &\lequal{}{\eqref{eq:app_orthonormality}}
                    \det
                    \begin{pmatrix}
                        \lrsp{f, \phi_{0}}_{L^2(\mu)}^2
                        & \lrp{
                            \lrsp{f, \phi_{\ell}}_{L^2(\mu)}^2
                            }_{\ell=1}^{N-1}
                        \\
                        0_{N-1,1}
                        & I_{N-1}
                    \end{pmatrix}
                    \nonumber\\
                &= \lrsp{f, \phi_{0}}_{L^2(\mu)}^2.
                \label{eq:app_proof_1st_moment}
            \end{align}
            Similarly, the second moment reads
            \begin{align}
                \Expe[
                    \lrp{
                     \frac{\det \bfPhi_{\phi_0, f}(\bfx_{1:N})}
                      {\det \bfPhi(\bfx_{1:N})}}^2
                    ]
                &\lequal{}{\eqref{eq:app_joint_distribution_projection_dpp}}
                    \frac{1}{N!}
                    \int
                    \det \bfPhi_{\phi_0, f}(x_{1:N})
                    \det \bfPhi_{\phi_0, f}(x_{1:N})
                    \, \mu^{\otimes N}( \diff x)
                    \nonumber\\
                &\lequal{}{\eqref{eq:app_cauchy_binet}}
                    \det
                    \begin{pmatrix}
                        \lrsp{f, f}_{L^2(\mu)}^2
                        & \lrp{
                            \lrsp{f, \phi_{\ell}}_{L^2(\mu)}^2
                            }_{\ell=1}^{N-1}\\[1em]
                        \lrp{
                            \lrsp{f, \phi_{k}}_{L^2(\mu)}^2
                            }_{k=1}^{N-1}
                        & \lrp{
                            \lrsp{\phi_k, \phi_{\ell}}_{L^2(\mu)}^2
                            }_{k,\ell=1}^{N-1}
                    \end{pmatrix}
                    \nonumber\\
                &\lequal{}{\eqref{eq:app_orthonormality}}
                    \det
                    \begin{pmatrix}
                        \lrnorm{f}_{L^2(\mu)}^2
                        & \lrp{
                            \lrsp{f, \phi_{\ell}}_{L^2(\mu)}^2
                            }_{\ell=1}^{N-1}
                        \\
                        \lrp{
                            \lrsp{f, \phi_{k}}_{L^2(\mu)}^2
                            }_{k=1}^{N-1}
                        & I_{N-1}
                    \end{pmatrix}
                    \nonumber\\
                &= \lrnorm{f}_{L^2(\mu)}^2
                    - \sum_{k=1}^{N-1}
                        \lrsp{f, \phi_{k}}_{L^2(\mu)}^2.
                    \label{eq:app_proof_2nd_moment}
            \end{align}
            Finally, the variance in \eqref{eq:app_expe_var_EZ} $=$ \eqref{eq:app_proof_2nd_moment} - \eqref{eq:app_proof_1st_moment}$^2$.
        \end{proof}

    \subsection{EZ estimator as a quadrature rule with weights summing to one} 
    \label{sub:app_EZ_as_quadrature}

        In this part, we consider \Thref{th:app_ermakov_zolotukhin_estimators} in the setting where one of the eigenfunctions of the kernel, say $\phi_0$ is constant.
        In this case, we show that the EZ estimator defined to estimate $\int f(x) \mu(\diff x)$ can be seen as a quadrature rule in the sense of \eqref{eq:quadrature}, with weights $\omega_n$ that sum to $\mu\big(\lrb{-1, 1}^d\big)$.
        This is a non obvious fact, judging from the expression \eqref{eq:EZ_estimator} of the estimator.
        \begin{proposition}
            Consider $\phi_0$ constant in \Thref{th:app_ermakov_zolotukhin_estimators}.
            Then, solving the corresponding linear system \eqref{eq:app_EZ_linear_system} allows to construct
            \begin{equation}
            \label{eq:app_EZ_estimator}
                \Iauth{EZ}_N(f)
                    \lequal{}{\eqref{eq:EZ_estimator}}
                        \sqrt{\mu\Big(\lrb{-1, 1}^d\Big)}
                        ~
                        \frac{\det \bfPhi_{\phi_0, f}(\bfx_{1:N})}
                             {\det \bfPhi(\bfx_{1:N})}
            \end{equation}
            as an unbiased estimator of $\int f(x) \mu(\diff x)$, with variance equal to the variance in \eqref{eq:app_expe_var_EZ}$\times\mu\Big(\lrb{-1, 1}^d\Big)$.
            In particular it can be seen as a random quadrature rule \eqref{eq:quadrature},
            \begin{equation}
                \Iauth{EZ}_N(f)
                    = \sum_{n=1}^{N} \omega_n(\bfx_{1:N}) f(\bfx_n)
                    \approx \int f(x) \mu(\diff x)
            \end{equation}
            such that
            $\sum_{n=1}^N \omega_n(\bfx_{1:N}) = \mu\Big(\lrb{-1, 1}^d\Big)$.
        \end{proposition}
        \begin{proof}
            Take $\phi_0$ constant.
            Since $\phi_0$ has unit norm, cf.\,\eqref{eq:app_orthonormality}, it is straightforward to see that
            \begin{equation*}
                \phi_0 = \frac{1}{\sqrt{\mu\Big(\lrb{-1, 1}^d\Big)}}
            \end{equation*}
            so that \eqref{eq:app_EZ_estimator} can be written
            \begin{align*}
            \Iauth{EZ}_N(f)
                &=
                    \mu\Big(\lrb{-1, 1}^d\Big)
                    ~
                        \frac{\det \bfPhi_{\phi_0, f}(\bfx_{1:N})}
                             {\det \bfPhi_{\phi_0, 1}(\bfx_{1:N})}.\\
            \intertext{Expanding the numerator w.r.t.\,the first column yields}
            \Iauth{EZ}_N(f)
                &=
                \sum_{n=1}^N
                    f(\bfx_n)
                    \underbrace{
                        (-1)^{1+n}
                        \det
                        \lrp{\phi_k(x_p)}_{k=1, p=1\neq n}^{N-1, N}
                        \frac
                            {\mu\Big(\lrb{-1, 1}^d\Big)}
                            {\det \bfPhi_{\phi_0, 1}(\bfx_{1:N})}
                        }_{\triangleq\omega_n(\bfx_{1:N})}\cdot
            \end{align*}
            In particular, there is a priori no reason for the weights to be nonnegative.
            Finally,
            \begin{equation*}
                \sum_{n=1}^{N}
                    \omega_n(\bfx_{1:N})
                = \frac
                    {\mu\Big(\lrb{-1, 1}^d\Big)}
                    {\cancel{\det \bfPhi_{\phi_0, 1}(\bfx_{1:N})}}
                    \underbrace{
                        \sum_{n=1}^{N}
                            (-1)^{1+n}
                            \det
                            \lrp{\phi_k(x_p)}_{k=1, p=1\neq n}^{N-1, N}
                        }_{=\cancel{\det \bfPhi_{\phi_0, 1}(\bfx_{1:N})}}
                = \mu\Big(\lrb{-1, 1}^d\Big).
            \end{equation*}
            This concludes the proof.
        \end{proof}


    \subsection{Sampling multivariate Jacobi ensembles} 
    \label{sub:app_sampling}

        We consider sampling exactly from the multivariate Jacobi ensemble as defined in \Secref{sub:multivariate_jacobi_ensemble}.

        In dimension $d=1$, to sample the univariate Jacobi ensemble, with base measure $\mu(\diff x) = (1-x)^a (1+x)^b \diff x$ where $a, b > -1$, we use the random tridiagonal matrix model of \citet[Theorem 2]{KiNe04}.
        That is to say, computing the eigenvalues of a properly randomized tridiagonal matrix allows to get a sample of this continuous projection DPP at cost $\calO(N^2)$!

        For $d\geq 2$, we follow \citet[Section 3]{BaHa16} who proposed to use the chain rule \eqref{eq:chain_rule_geometric} to sample from the multivariate Jacobi ensemble with base measure $\mu(\diff x)=\omega(x)\diff x$, where
        \begin{equation}
            \label{eq:app_base_measure}
            \omega(x)
                = \prod_{i=1}^d (1-x^i)^{a^i} (1-x^i)^{b^i},
            \text{with}~
            \lrabs{a^i}, \lrabs{b^i}\leq \frac12\cdot
        \end{equation}
        To do so, we use the same proposal distribution and rejection bound to sample from each of the conditionals \eqref{eq:chain_rule_geometric}.
        The density (w.r.t.\,Lebesgue) of the proposal distribution writes
        \begin{equation}
            \label{eq:app_equilibrium_measure}
            \weq(x)
                = \prod_{i=1}^d \frac{1}{\pi\sqrt{1-(x^i)^2}}\cdot
        \end{equation}
        The rejection constant is derived after by successive applications of the following result on Jacobi polynomials derived by \citet{ChGaWo94}.
        \begin{proposition}
        \label{propo:Chow_bound}
            \citep[Equation 1.3]{Gau09}
            Let $(\phi_k)_{k\geq 0}$ be the orthonormal polynomials w.r.t.\,the measure $(1-x)^a (1+x)^b \diff x$ with $\lrabs{a}\leq \frac12, \lrabs{b}\leq \frac12$.
            Then, for any $x \in [-1, 1]$ and $k\geq 0$,
            \begin{equation}
            \label{eq:Chow_bound}
                \pi
                    (1-x)^{a+\frac{1}{2}}
                    (1+x)^{b+\frac{1}{2}}
                \phi_{k}(x)^{2}
                \leq
                    \frac
                        {2~\Gamma(k+a+b+1)~\Gamma(k+\max(a,b)+1)}
                        {k!~(k+\frac{a+b+1}{2})^{2 \max(a,b)}~\Gamma(k+\min(a,b)+1)}\cdot
            \end{equation}
        \end{proposition}

        The domination of the acceptance ratio, i.e., the ratio of the $n$-th conditional density in \eqref{eq:chain_rule_geometric} over the proposal density \eqref{eq:app_equilibrium_measure} is computed as follows
        \begin{align}
            &\frac{K_N(x,x)
                        -   {\bfK}_{n-1}(x)^{\top}
                            {\bfK}_{n-1}^{-1}
                            {\bfK}_{n-1}(x)
                    }{N-(n-1)}
                \omega(x)\times\frac{1}{\weq(x)}
                \nonumber\\
            &\leq
                \frac{1}{N-(n-1)}
                \frac{K(x,x) \omega(x)}{w_{\text{eq}}(x)}
            \lequal{\eqref{eq:app_equilibrium_measure}}
                   {\eqref{eq:kernel_multivariate_separable_OPE}}
                \frac{1}{N-(n-1)}
                \lsum{\mathfrak{b}(k)=0}{N-1}
                    \lprod{i=1}d
                        \pi(1-x^i)^{a^i+\frac12} (1+x^i)^{b^i+\frac12}
                        \phi_{k^i}^i(x^i)^2.
            \label{eq:domination}
        \end{align}
        Finally, each of the terms that appear in \eqref{eq:domination} can be bounded using the following recipe:
        \begin{enumerate}
            \item For $k^i>0$, we use the bound \eqref{eq:Chow_bound}
            \item For $k^i=0$, the domination of the left hand side (LHS) of \eqref{eq:Chow_bound} is not tight enough ($=2$), so we proceed as follows.
            In this case, $\phi_{0}$ is constant equal to $\lrp{\int (1-x)^a(1+x)^b \diff x}^{-1/2}$ and since $|a|, |b|\in [-1/2, 1/2]$ we upper bound $(1-x)^{a+1/2}(1+x)^{b+1/2}$ by the evaluation at its mode.
        \end{enumerate}

        Point 2. is crucial to tighten the rejection constant.
        Indeed, because of the choice of the ordering $\mathfrak{b}$ (cf.\,\Secref{sub:multivariate_jacobi_ensemble}), the number of times that $\phi_0^i$ appears in \eqref{eq:domination} increases with the dimension.
        Hence, the tighter the bound on the LHS of \eqref{eq:Chow_bound} for $k=0$ the best the rejection constant.

        In \Figref{fig:app_rejection_bounds} we illustrate how the rejection bound increased with the dimension.
        In particular observe how the gap increases between the rejection constants associated low $-\frac12$ and large parameters $\frac12$ of the base measure $\mu(\diff x) = \prod_{i=1}^{d} (1-x)^{a^i}(1+x)^{b^i} \diff x^i$.
        \begin{figure*}[ht]
            \centering
            \subfigure[Larger \Figref{fig:rejection_bound}, $d=1, 2, 3, 4$]{
            \label{fig:app_rejection_bounds_1D}
            \includegraphics[width=0.8\textwidth]
                {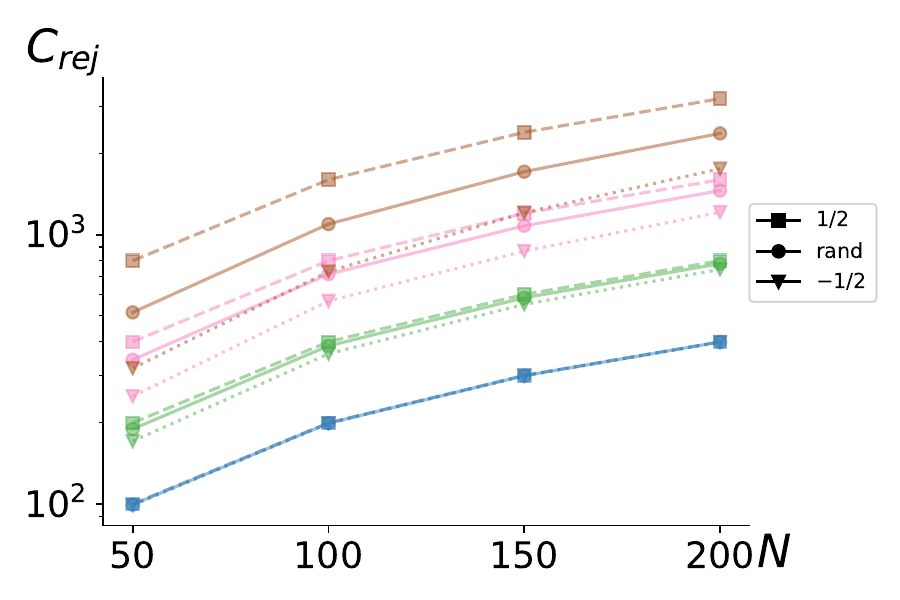}
            }\\
            \subfigure[$d=1, 5, 10, 15$]{
            \label{fig:app_rejection_bounds_2D}
            \includegraphics[width=0.8\textwidth]
                {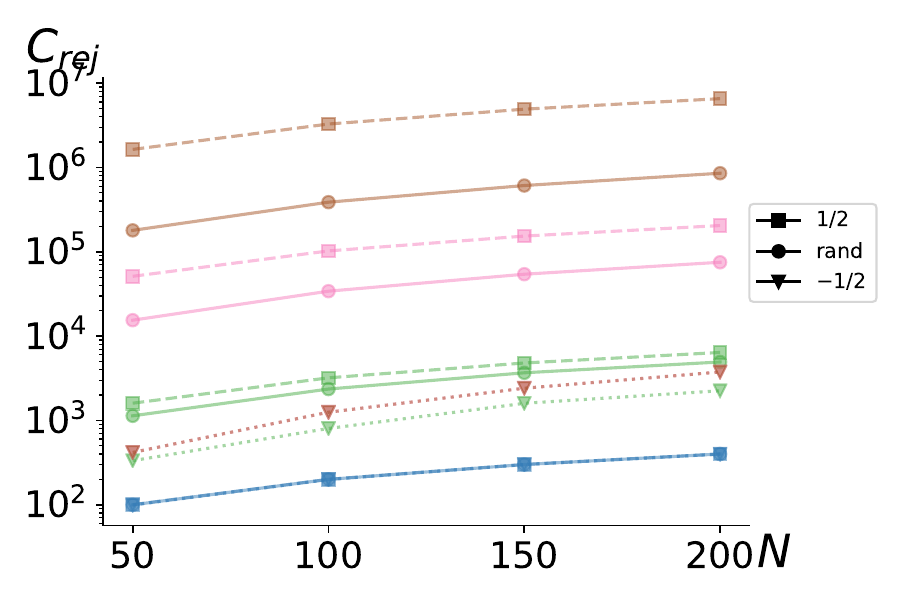}
            }
            \caption{Rejection bounds}
            \label{fig:app_rejection_bounds}
        \end{figure*}



\section{Experiments} 
\label{sec:experiments}

    \subsection{Reproducing the bump example}
    \label{sub:app_XP_bump}

        In \Secref{sub:XP_bump}, we reproduce the experiment of \citet[Section 3]{BaHa16} where they illustrate the behavior of $\Iauth{BH}_N$ on a unimodal, smooth bump function:
        \begin{equation}
            f(x)
            =
                \prod_{i=1}^d
                    \exp\lrp{-\frac{1}{1- \varepsilon - (x^i)^2}}
                    \indic_{[-1+\varepsilon, 1- \varepsilon]}(x^i).
        \end{equation}
        For each value of $N$, we sample $100$ times from the same multivariate Jacobi ensembles with i.i.d. uniform parameters on $[-1/2,1/2]$, compute the resulting $100$ values of each estimator, and plot the two resulting sample variances.
        In addition, in \Figref{fig:app_bump_KS} we test the potential hope for a CLT for $\Iauth{EZ}_N$ and compare with $\Iauth{BH}_N$ for which the CLT \eqref{eq:BH_CLT} holds, in the regime $N=300$.

        \begin{figure*}[ht]
            \subfigure[$d=1$]{
            \label{fig:app_bump_1D}
            \includegraphics[width=.49\textwidth]
                {images/bump_1D_100_repeats.pdf}
            }
            \subfigure[$d=2$]{
            \label{fig:app_bump_2D}
            \includegraphics[width=.49\textwidth]
                {images/bump_2D_100_repeats.pdf}
            }\\
            \subfigure[$d=3$]{
            \label{fig:app_bump_3D}
            \includegraphics[width=.49\textwidth]
                {images/bump_3D_100_repeats.pdf}
            }
            \subfigure[$d=4$]{
            \label{fig:app_bump_4D}
            \includegraphics[width=.49\textwidth]
                {images/bump_4D_100_repeats.pdf}
            }
            \caption{Reproducing the bump function ($\varepsilon=0.05$) experiment of \citet{BaHa16}, cf.\,\Secref{sub:XP_bump}.
            The expected variance decay of order $1/N^{1+1/d}$ is observed for BH.
            For $d=1$, EZ has almost no variance for $N\geq 100$: the bump function is extremely well approximated by a polynomials of degree $N\geq 100$.}
            \label{fig:app_bump}
        \end{figure*}
        \begin{figure*}[ht]
            \subfigure[$d=1$]{
            \label{fig:app_bump_1D_KS}
            \includegraphics[width=.49\textwidth]
                {images/bump_KS_test_1D_N=300.pdf}
            }
            \subfigure[$d=2$]{
            \label{fig:app_bump_2D_KS}
            \includegraphics[width=.49\textwidth]
                {images/bump_KS_test_2D_N=300.pdf}
            }\\
            \subfigure[$d=3$]{
            \label{fig:app_bump_3D_KS}
            \includegraphics[width=.49\textwidth]
                {images/bump_KS_test_3D_N=300.pdf}
            }
            \subfigure[$d=4$]{
            \label{fig:app_bump_4D_KS}
            \includegraphics[width=.49\textwidth]
                {images/bump_KS_test_4D_N=300.pdf}
            }
            \caption{Histogram of $100$ independent estimates $\Iauth{BH}_N$ and $\Iauth{EZ}_N$ of the integral of the bump function ($\varepsilon=0.05$) with $N=300$ and associated p-value of Kolmogorov-Smirnov test, cf.\,\Secref{sub:XP_bump}.
            The fluctuations of BH confirm to be Gaussian (cf.\,CLT \eqref{eq:BH_CLT}).
            (a) the bump function is extremely well approximated by a polynomial of degree $300$ hence $\Iauth{EZ}_N$ has almost no variance.
            (b)-(c)-(d) few outliers seem to break the potential Gaussianity of $\Iauth{EZ}_N(f)$.
            (d) $\Iauth{EZ}_N(f)$ does not preserves the sign of the integrand.}
            \label{fig:app_bump_KS}
        \end{figure*}

\newpage
\clearpage
\newpage
    \subsection{Integrating sums of eigenfunctions}
    \label{sub:app_XP_polynomials}

        \Figref{fig:app_decay_1/k_N=70_Ndpp} gives the results of the first setting set in \Secref{sub:XP_polynomials}, where we integrate a sum of $N_{\text{modes}}=70$ kernel eigenfunctions.
        \Figref{fig:app_decay_1/k_pursuit_Ndpp} illustrates the second setting, where the sum always has one more eigenfunction than there are points in the DPP samples.
        \begin{figure*}[ht]
            \subfigure[$d=1$]{
            \label{fig:app_decay_1/k_N=70_Ndpp_1D}
            \includegraphics[width=\twofig]
                {images/decay_1overk_N=70_1D.pdf}
            }
            \subfigure[$d=2$]{
            \label{fig:app_decay_1/k_N=70_Ndpp_2D}
            \includegraphics[width=\twofig]
                {images/decay_1overk_N=70_2D.pdf}
            }\\
            \subfigure[$d=3$]{
            \label{fig:app_decay_1/k_N=70_Ndpp_3D}
            \includegraphics[width=\twofig]
                {images/decay_1overk_N=70_3D.pdf}
            }
            \subfigure[$d=4$]{
            \label{fig:app_decay_1/k_N=70_Ndpp_4D}
            \includegraphics[width=\twofig]
                {images/decay_1overk_N=70_4D.pdf}
            }
            \caption{Comparison of $\Iauth{BH}_N$ and $\Iauth{EZ}_N$ integrating a finite sum of $70$ eigenfunctions of the DPP kernel as in \eqref{eq:f_xps_polys}, cf.\,\Secref{sub:XP_polynomials}.}
            \label{fig:app_decay_1/k_N=70_Ndpp}
        \end{figure*}

        \begin{figure*}[ht]
            \subfigure[$d=1$]{
            \label{fig:app_decay_1/k_pursuit_Ndpp_1D}
            \includegraphics[width=\twofig]
                {images/decay_1overk_pursuit_Ndpp_1D.pdf}
            }
            \subfigure[$d=2$]{
            \label{fig:app_decay_1/k_pursuit_Ndpp_2D}
            \includegraphics[width=\twofig]
                {images/decay_1overk_pursuit_Ndpp_2D.pdf}
            }\\
            \subfigure[$d=3$]{
            \label{fig:app_decay_1/k_pursuit_Ndpp_3D}
            \includegraphics[width=\twofig]
                {images/decay_1overk_pursuit_Ndpp_3D.pdf}
            }
            \subfigure[$d=4$]{
            \label{fig:app_decay_1/k_pursuit_Ndpp_4D}
            \includegraphics[width=\twofig]
                {images/decay_1overk_pursuit_Ndpp_4D.pdf}
            }
            \caption{Comparison of $\Iauth{BH}_N$ and $\Iauth{EZ}_N$ for a linear combination of $N+1$ eigenfunctions of the DPP kernel as in \eqref{eq:f_xps_polys}, cf.\,\Secref{sub:XP_polynomials}.}
            \label{fig:app_decay_1/k_pursuit_Ndpp}
        \end{figure*}

\newpage
\clearpage
\newpage
    \subsection{Integrating absolute value}
    \label{sub:app_XP_absolute}

        We consider estimating the integral
        \begin{equation}
            \int_{[-1, 1]^d}
                \prod_{i=1}^{d}
                    |x^i|
                    (1-x^i)^{a^i} (1-x^i)^{b^i}
                \diff x^i
        \end{equation}
        where $a^1, b^1=-\frac12$ and $a^i, b^i$ i.i.d.\,uniformly in $[-\frac12, \frac12]$, using BH \eqref{eq:BH_estimator} and EZ \eqref{eq:EZ_estimator} estimators.

        Results are given in \Figref{fig:app_absolute_value}.
        \begin{figure*}[ht]
            \subfigure[$d=1$]{
            \label{fig:absolute_value_1D}
            \includegraphics[width=\twofig]
                {images/absolute_1D_100_repeats.pdf}
            }
            \subfigure[$d=2$]{
            \label{fig:absolute_value_2D}
            \includegraphics[width=\twofig]
                {images/absolute_2D_100_repeats.pdf}
            }\\
            \subfigure[$d=3$]{
            \label{fig:absolute_value_3D}
            \includegraphics[width=\twofig]
                {images/absolute_3D_100_repeats.pdf}
            }
            \subfigure[$d=4$]{
            \label{fig:absolute_value_4D}
            \includegraphics[width=\twofig]
                {images/absolute_4D_100_repeats.pdf}
            }
            \caption{Comparison of $\Iauth{BH}_N$ and $\Iauth{EZ}_N$ for absolute value, cf.\,\Secref{sub:further_experiments}.}
            \label{fig:app_absolute_value}
        \end{figure*}

\newpage
\clearpage
\newpage
    \subsection{Integrating Heaviside}
    \label{sub:app_XP_heaviside}

        Let $H(x)=\begin{cases}
            1, &\text{ if }x>0\\
            -1, &\text{otherwise}
        \end{cases}$.
        We consider estimating the integral
        \begin{equation}
            \int_{[-1, 1]^d}
                \prod_{i=1}^{d}
                    H(x^i)
                    (1-x^i)^{a^i} (1-x^i)^{b^i}
                \diff x^i
        \end{equation}
        where $a^1, b^1=-\frac12$ and $a^i, b^i$ i.i.d.\,uniformly in $[-\frac12, \frac12]$, using BH \eqref{eq:BH_estimator} and EZ \eqref{eq:EZ_estimator} estimators.

        Results are given in \Figref{fig:app_heaviside}.
        \begin{figure*}[ht]
            \subfigure[$d=1$]{
            \label{fig:heaviside_value_1D}
            \includegraphics[width=\twofig]
                {images/heaviside_1D_100_repeats.pdf}
            }
            \subfigure[$d=2$]{
            \label{fig:heaviside_value_2D}
            \includegraphics[width=\twofig]
                {images/heaviside_2D_100_repeats.pdf}
            }\\
            \subfigure[$d=3$]{
            \label{fig:heaviside_value_3D}
            \includegraphics[width=\twofig]
                {images/heaviside_3D_100_repeats.pdf}
            }
            \subfigure[$d=4$]{
            \label{fig:heaviside_value_4D}
            \includegraphics[width=\twofig]
                {images/heaviside_4D_100_repeats.pdf}
            }
            \caption{Comparison of $\Iauth{BH}_N$ and $\Iauth{EZ}_N$ for Heaviside function, cf.\,\Secref{sub:further_experiments}.}
            \label{fig:app_heaviside}
        \end{figure*}

\newpage
\clearpage
\newpage
    \subsection{Integrating cosine}
    \label{sub:app_XP_cosine}

        We consider estimating the integral
        \begin{equation}
            \int_{[-1, 1]^d}
                \prod_{i=1}^{d}
                    \cos(\pi x^i)
                    (1-x^i)^{a^i} (1-x^i)^{b^i}
                \diff x^i
        \end{equation}
        where $a^1, b^1=-\frac12$ and $a^i, b^i$ i.i.d.\,uniformly in $[-\frac12, \frac12]$, using BH \eqref{eq:BH_estimator} and EZ \eqref{eq:EZ_estimator} estimators.

        Results are given in \Figref{fig:app_cosine}
        \begin{figure*}[ht]
            \subfigure[$d=1$]{
            \label{fig:cosine_1D}
            \includegraphics[width=\twofig]
                {images/cosine_1D_100_repeats.pdf}
            }
            \subfigure[$d=2$]{
            \label{fig:cosine_2D}
            \includegraphics[width=\twofig]
                {images/cosine_2D_100_repeats.pdf}
            }\\
            \subfigure[$d=3$]{
            \label{fig:cosine_3D}
            \includegraphics[width=\twofig]
                {images/cosine_3D_100_repeats.pdf}
            }
            \subfigure[$d=4$]{
            \label{fig:cosine_4D}
            \includegraphics[width=\twofig]
                {images/cosine_4D_100_repeats.pdf}
            }
            \caption{Comparison of $\Iauth{BH}_N$ and $\Iauth{EZ}_N$ for cosine, cf.\,\Secref{sub:further_experiments}.}
            \label{fig:app_cosine}
        \end{figure*}

\newpage
\clearpage
\newpage
    \subsection{Integrating a mixture of smooth and non smooth functions} 
    \label{sub:integrating_mix}

        Let $f(x)=H(x)(\cos(\pi x) + \cos(2\pi x) + \sin(5\pi x))$.
        We consider estimating the integral
        \begin{equation}
            \int_{[-1, 1]^d}
                \prod_{i=1}^{d}
                    f(x^i)
                    (1-x^i)^{a^i} (1-x^i)^{b^i}
                \diff x^i
        \end{equation}
        where $a^1, b^1=-\frac12$ and $a^i, b^i$ i.i.d.\,uniformly in $[-\frac12, \frac12]$, using BH \eqref{eq:BH_estimator} and EZ \eqref{eq:EZ_estimator} estimators.

        \begin{figure*}[ht]
            \subfigure[$d=1$]{
            \label{fig:mix_1D}
            \includegraphics[width=\twofig]
                {images/mix_1D_100_repeats.pdf}
            }
            \subfigure[$d=2$]{
            \label{fig:mix_2D}
            \includegraphics[width=\twofig]
                {images/mix_2D_100_repeats.pdf}
            }\\
            \subfigure[$d=3$]{
            \label{fig:mix_3D}
            \includegraphics[width=\twofig]
                {images/mix_3D_100_repeats.pdf}
            }
            \subfigure[$d=4$]{
            \label{fig:mix_4D}
            \includegraphics[width=\twofig]
                {images/mix_4D_100_repeats.pdf}
            }
            \caption{Comparison of $\Iauth{BH}_N$ and $\Iauth{EZ}_N$, cf.\,\Secref{sub:further_experiments}.}
            \label{fig:app_mix}
        \end{figure*}

